\newcommand{\abs}[1]{\left| #1 \right|}
\newcommand{\E}{\mathcal{E}}
\definecolor{DarkRed}{rgb}{0.5,0.1,0.1}
\definecolor{DarkBlue}{rgb}{0.1,0.1,0.5}
\definecolor{RURed}{rgb}{0.8,0.1,0.1}
\definecolor{ForestGreen}{rgb}{0.1333,0.5451,0.1333}
\definecolor{Red}{rgb}{0.9,0,0}
\crefname{property}{property}{Property}
\crefname{equation}{eq}{Eq}
\newtheorem{theorem}{Theorem}
\newtheorem{lemma}{Lemma}[section]
\newtheorem{claim}[lemma]{Claim}
\newtheorem{proposition}{Proposition}
\newtheorem{fact}[lemma]{Fact}
\theoremstyle{definition}
\newtheorem{observation}[lemma]{Observation}
\newtheorem{remark}[lemma]{Remark}
\newtheorem{definition}{Definition}
\newtheorem{mdresult}{Result}
\newenvironment{result}{\begin{mdframed}[backgroundcolor=lightgray!40,topline=false,rightline=false,leftline=false,bottomline=false,innertopmargin=5pt]\begin{mdresult}}{\end{mdresult}\end{mdframed}}
\definecolor{RURed}{rgb}{0.8,0.1,0.1}
\newcommand{\tvd}[2]{\ensuremath{\norm{#1 - #2}_{\mathrm{tvd}}}}
\newcommand{\eps}{\ensuremath{\varepsilon}}
\newcommand{\bracket}[1]{\left[#1\right]}
\newcommand{\paren}[1]{\ensuremath{\left(#1\right)}\xspace}
\newcommand{\card}[1]{\left\vert{#1}\right\vert}
\newcommand{\norm}[1]{\ensuremath{\|#1\|}}
\newcommand{\expect}[1]{\Exp\bracket{#1}}
\newcommand{\expectR}[2]{\Exp_{#1}\bracket{#2}}
\newcommand{\set}[1]{\ensuremath{\left\{ #1 \right\}}}
\newcommand{\poly}{\mbox{\rm poly}}
\newcommand{\polylog}{\textnormal{polylog}\xspace}
\newcommand{\ALG}{\ensuremath{\mbox{\sc alg}}\xspace}
\DeclareMathOperator*{\Exp}{\ensuremath{{\mathbb{E}}}}
\DeclareMathOperator*{\Prob}{\ensuremath{\textnormal{Pr}}}
\renewcommand{\Pr}{\Prob}
\newenvironment{tbox}{\begin{tcolorbox}[
		enlarge top by=5pt,
		enlarge bottom by=5pt,
		 breakable,
		 boxsep=2pt,
                  left=5pt,
                  right=7pt,
                  top=10pt,
                  arc=0pt,
                  boxrule=1pt,toprule=1pt,
                  colback=white
                  ]
	}
{\end{tcolorbox}}
\newcommand{\kl}[2]{\ensuremath{\mathbb{D}(#1~||~#2)}}
\newcommand{\II}{\ensuremath{\mathbb{I}}}
\newcommand{\HH}{\ensuremath{\mathbb{H}}}
\newcommand{\mireal}[1][]{
  \ifx\relax#1\relax%
    \II(\mione \,; \mitwo)%
  \else%
    \II(\mione \,; \mitwo\mid #1)%
  \fi
}
\newcommand{\cD}{\mathcal{D}}
\newcommand{\Xyes}{\ensuremath{X_{\text{yes}}}\xspace}
\newcommand{\Xno}{\ensuremath{X_{\text{no}}}\xspace}
\newcommand{\Xhigh}{\ensuremath{X_{\frac{1}{2}+\alpha+\beta}}\xspace}
\newcommand{\Xflat}{\ensuremath{X_{\frac{1}{2}}}\xspace}
\newcommand{\cP}{\ensuremath{\mathcal{P}}\xspace}
\newcommand{\arm}{\ensuremath{\textnormal{arm}}\xspace}
\newcommand{\armstar}{\ensuremath{\arm^{\star}}\xspace}
\newcommand{\pitilde}{\ensuremath{\widetilde{\pi}}\xspace}
\newcommand{\itilde}{\ensuremath{\widetilde{i}}\xspace}
\newcommand{\bern}[1]{\ensuremath{\textnormal{Bern}(#1)}\xspace}
\newcommand{\Deltai}{\ensuremath{\Delta_{[i]}}\xspace}
\newcommand{\myqed}[1]{\let\qed\relax \hspace*{\fill} #1 \ensuremath{\square}}
\newcommand{\istar}{\ensuremath{i^{*}}\xspace}
\newcommand{\jstar}{\ensuremath{j^{*}}\xspace}
\newcommand{\etaib}[2]{\ensuremath{\eta^{(#1)}_{#2}}\xspace}
\newcommand{\smp}{\ensuremath{\mathtt{Smp^{\ALG}}}\xspace}
\newcommand{\F}{\mathcal{F}}
\newcommand{\bE}{\operatorname{\mathbb{E}}}
\newcommand{\cE}{\ensuremath{\mathcal{E}}\xspace}
\title{Nearly Tight Bounds for Exploration in Streaming Multi-armed Bandits with Known Optimality Gap}
\author{Nikolai Karpov \thanks{Indiana University. \texttt{email:~kimaska@gmail.com}}
\and 
Chen Wang\thanks{Rice University and Texas A\&M University. \texttt{email:~cwangjhw@tamu.edu}}
}
\date{} 
\begin{document}
\maketitle

\begin{abstract}
	We investigate the sample-memory-pass trade-offs for pure exploration in multi-pass streaming multi-armed bandits (MABs) with the \emph{a priori} knowledge of the optimality gap $\Delta_{[2]}$. 
	Here, and throughout, the optimality gap $\Delta_{[i]}$ is defined as the mean reward gap between the best and the $i$-th best arms. A recent line of results by Jin, Huang, Tang, and Xiao [ICML'21] and Assadi and Wang [COLT'24] have shown that if there is no known $\Delta_{[2]}$, a pass complexity of $\Theta(\log(1/\Delta_{[2]}))$ (up to $\log\log(1/\Delta_{[2]})$ terms) is necessary and sufficient to obtain the \emph{worst-case optimal} sample complexity of $O(n/\Delta^{2}_{[2]})$ with a single-arm memory. However, our understanding of multi-pass algorithms with known $\Delta_{[2]}$ is still limited. Here, the key open problem is how many passes are required to achieve the complexity, i.e., $O( \sum_{i=2}^{n}1/\Delta^2_{[i]})$ arm pulls, with a sublinear memory size.
	
	In this work, we show that the ``right answer'' for the question is $\Theta(\log{n})$ passes (up to $\log\log{n}$ terms).  We first present a lower bound, showing that any algorithm that finds the best arm with slightly sublinear memory -- a memory of $o({n}/{\polylog({n})})$ arms -- and $O(\sum_{i=2}^{n}{1}/{\Deltai^{2}}\cdot \log{(n)})$ arm pulls has to make $\Omega(\frac{\log{n}}{\log\log{n}})$ passes over the stream. We then show a nearly-matching algorithm that assuming the knowledge of $\Delta_{[2]}$, finds the best arm with $O( \sum_{i=2}^{n}1/\Delta^2_{[i]} \cdot \log{n})$ arm pulls and a \emph{single arm} memory.
\end{abstract}

\clearpage

\renewenvironment{quote}
{\list{}{\rightmargin=0.5cm \leftmargin=0.5cm}%
	\item\relax}
{\endlist}

\section{Introduction}
\label{sec:intro}
The pure exploration in multi-armed bandits (MABs) is one of the most well-studied problems in theoretical computer science (TCS) and machine learning (ML). The problem is formulated as follows: given $n$ arms with unknown sub-Gaussian reward distributions, find the best arm, defined as the arm with the highest mean reward, with a sufficiently high probability and a small number of arm pulls (sample complexity). Here, and throughout, the parameter $\Delta_{[i]}$ is defined as the mean reward gap between the best and the $i$-th best arms. The problem has been extensively studied in the literature (e.g., \cite{Even-Dar+02,MannorTs04,KalyanakrishnanSt10,KarninKS13,JamiesonMNB14,KCG16,CL16,AgarwalAAK17,ChenLQ17}, see~\cite{Slivkins19} for an excellent monograph), and its application has been found in numerous areas, e.g., experiment design \cite{Robbins52,villar2015multi,aziz2021multi}, recommendation systems \cite{silva2022multi}, search ranking \cite{AgarwalCEMPRRZ08,radlinski2008learning}, robot control \cite{KovalKPS15}, to name a few. 

The optimal sample complexity bound under the classical RAM setting has been established 
through a series of elegant works \cite{Even-Dar+02,MannorTs04,KarninKS13,JamiesonMNB14}. The pioneering work of \cite{Even-Dar+02} shows that if the value of $\Delta_{[2]}$ is known, there exists an algorithm that finds the best arm with high (constant) probability and a \emph{worst-case optimal} $O(n/\Delta^2_{[2]})$ arm pulls. Subsequently, the work of \cite{KarninKS13,JamiesonMNB14} improved the sample complexity to the \emph{nearly instance optimal} bound of $O(\sum_{i=2}^{n}1/\Delta^2_{[i]}\cdot \log\log(1/\Delta_{[i]}))$\footnote{We slightly overload the terminology to call both $O(\sum_{i=2}^{n}1/\Delta^2_{[i]}\cdot \log\log(1/\Delta_{[i]}))$ and $O(\sum_{i=2}^{n}1/\Delta^2_{[i]}\cdot \polylog{(n)})$ nearly instance optimal sample complexity, and denote (any of) them with $\tilde{O}(\sum_{i=2}^{n}1/\Delta^2_{[i]})$ when the context is clear.}, and their algorithms do \emph{not} require a known value of $\Delta_{[2]}$. On the lower bound side, \cite{MannorTs04} showed that a sample complexity of $\Omega(\sum_{i=2}^{n}1/\Delta^2_{[i]})$ is necessary to find the best arm with high constant probability, which completes the picture for classical pure exploration up to the doubly-logarithmic factor.\footnote{The seemingly artificial $\log\log(1/\Delta_{[i]})$ factor embodies some fundamental properties of the problem. See \cite{ChenL16,CLQ17} for more discussions.} 

Virtually all algorithms for pure exploration in the classical setting require all arms available in the memory for repeated visits. Aimed at modern large-scale applications, in which storing everything becomes impossible, an important direction to explore is MABs in the memory-constrained setting. To this end, \cite{AssadiW20} introduced the streaming MABs model, where the arms arrive one by one in a streaming manner, and the algorithm uses a limited memory to store, discard, and replace arms. The target for streaming MABs algorithms is to simultaneously minimize the sample complexity and the \emph{space complexity} -- the maximum number of arms stored at any point. \cite{AssadiW20} showed that if the value of $\Delta_{[2]}$ is provided, there exists a \emph{single-pass} algorithm that finds the best arm with high constant probability, the worst-case optimal $O(n/\Delta^2_{[2]})$ sample complexity, and a \emph{single-arm} memory. The key conceptual message of \cite{AssadiW20} is that in the regime where $\Delta_{[2]}$ is known and the target is the worst-case optimal sample complexity, there is no sample-space trade-off in this setting.

The results of \cite{AssadiW20} have led to considerable interest in understanding the power and limitations of the streaming MABs model \cite{MaitiPK21,JinH0X21,AWneurips22,AgarwalKP22,Wang23Regret,LZWL23,HYZ25SODA}. In particular, since \cite{AssadiW20} only deals with the setting when $\Delta_{[2]}$ is given and the worst-case optimal bound, a natural question is to ask what happens if $\Delta_{[2]}$ is unknown, or if the target becomes the (nearly) instance optimal bound instead. Unfortunately, in these settings, the optimistic message in \cite{AssadiW20} no longer holds: \cite{AWneurips22} showed that in the single-pass setting, if the value of $\Delta_{[2]}$ is not given a priori, then the sample complexity is unbounded unless the algorithm has $\Omega(n)$-arm memory. Furthermore, even if the value of $\Delta_{[2]}$ is known, there is a sample complexity lower bound of $\Omega(n/\Delta^2_{[2]})$ for any algorithm with $o(n)$-arm memory in the single-pass streaming setting. These results assert that multiple passes over the stream are necessary if we want any streaming algorithms with sublinear memory under the new settings.

It turns out that allowing multiple passes does lead to improved bounds. Concretely, \cite{JinH0X21} shows that in $O(\log(1/\Delta_{[2]}))$ passes, it is possible to find the best arm with a single-arm memory and the near-instance optimal $O(\sum_{i=2}^{n}1/\Delta^2_{[i]}\cdot \log\log(1/\Delta_{[i]}))$ arm pulls. Furthermore, the algorithm does \emph{not} require a known quantity of $\Delta_{[2]}$. Recently, \cite{AW23BestArm} proved that for a streaming algorithm with $o(n)$-arm memory to achieve even the worst-case optimal bound $O(n/\Delta^2_{[2]})$ bound with the stream alone, a number of $\Omega\left(\frac{\log(1/\Delta_{[2]})}{\log\log(1/\Delta_{[2]})}\right)$ passes is necessary. As such, we already established a good understanding of the pass-sample-space trade-off for multi-pass algorithms without $\Delta_{[2]}$ value.

The final missing piece to complete the theoretical picture of multi-pass streaming MABS is to understand the case when $\Delta_{[2]}$ is provided \emph{and} the target is the instance-optimal sample complexity.
We remark this question is not trivial: in the adversarial instance distribution of \cite{AW23BestArm}, if $\Delta_{[2]}$ is provided, we can uniquely determine the realization of instances in their distribution. As such, it is possible that if $\Delta_{[2]}$ is known, there are algorithms with better efficiency. 
This open question can be formally presented as follows.

\begin{quote}
	\centering
	\it If the value of $\Delta_{[2]}$ is known \emph{a priori}, what is the optimal number of passes for a streaming algorithm with $o(n)$ arm memory to find the best arm with the (nearly) instance optimal sample complexity?
\end{quote}

We now provide some additional discussions to better motivate the investigation. The importance of the open question could be summarized as follows.
\begin{itemize}
\item The question is important for the theoretical foundations of \emph{online learning}. The streaming MABs model has been widely regarded as an important models for modern large-scale online learning \cite{MaitiPK21,JinH0X21,AWneurips22,AgarwalKP22,Wang23Regret,LZWL23,AW23BestArm}. Since the knowledge of $\Delta_{[2]}$ is frequently assumed in the literature, as evidenced by works such as \cite{Even-Dar+02,AssadiW20}, the motivating question is an important missing piece to be answered for multi-pass pure exploration. As such, the primary motivation for the investigation is to complete the theoretical picture for the streaming MABs model.
\item Algorithms with better sample and pass efficiency could lead to \emph{direct application} in various tasks. For instance, if we want to find the best seller in large-scale online retailers, we could query the data from data centers in a streaming fashion, and run the algorithm using the local RAM. Here, we could \emph{estimate} the value of $\Delta_{[2]}$ from historical data (we only require a lower bound of such estimations for the algorithms to work, see \Cref{obs:delta-lower-bound} for more details). And since the products often have very different scales of transcactions, which implies that $\frac{n}{\Delta^2_{[2]}}\gg \sum_{i=2}^{n}\frac{1}{\Delta^2_{[i]}}$, our algorithm is much more efficient than the algorithm of \cite{AssadiW20,JinH0X21}.
\end{itemize}

\subsection{Our Contributions}
\label{sec:contribution}
Our main contribution is the answer to open question: we provide nearly-matching (up to exponentially smaller terms) upper and lower bounds for streaming algorithms with $o(n)$-arm memory to find the best arm with a (nearly) instance optimal sample complexity. We first present our lower bound result as follows.

\begin{result}[Lower bound, informal of \Cref{thm:lb-main}]
	\label{rst:main-lb}
	\vspace{-3pt}
	Any streaming algorithm that given $n$ arms in  a stream and a known value of $\Delta_{[2]}$, finds the best arm with high constant probability, $\tilde{O}(\sum_{i=2}^{n}\frac{1}{\Delta^2_{[i]}})$\footnote{In \Cref{rst:main-lb} and \Cref{rst:main-ub}, we use $\tilde{O}(\cdot)$ to hide $\polylog(n)$ terms.} arm pulls, and $o(\frac{n}{\polylog(n)})$ arm memory has to use $\Omega(\frac{\log(n)}{\log\log(n)})$ passes over the stream.
\end{result}

\Cref{rst:main-lb} shares a similar form of the \cite{AW23BestArm}, albeit we are able to substitute $\log(1/\Delta_{[i]})$ terms with $\log{n}$ terms. Before our results, the only known result for streaming MABs lower bounds with instance-optimal sample complexity is th result of \cite{AWneurips22}, which only works for a \emph{single} pass. Therefore, \Cref{rst:main-lb} marks a significant improvement in the pass complexity of the problem.

A natural question to follow up \Cref{rst:main-lb} is to answer whether the lower is optimal, i.e., is it possible to design an algorithm with a sample and pass complexity that matches the lower bound in \Cref{rst:main-lb}. We answer this question in the affirmative by showing an algorithm as in \Cref{rst:main-ub}.

\begin{result}[Upper bound, informal of \Cref{thm:basic}]
	\label{rst:main-ub}
	\vspace{-3pt}
	There exists a streaming algorithm that given $n$ arms in a stream and the value of $\Delta_{[2]}$, finds the best arm with high constant probability with $\tilde{O}(\sum_{i=2}^{n}\frac{1}{\Delta^2_{[i]}})$ arm pulls, $O(\log(n))$ passes over the stream, and a memory of a single arm.
\end{result}
In fact, the guarantee of our algorithm extends beyond just $O(\log(n))$ passes. We can always keep using only a single arm memory, and set $P$ passes to achieve $O(\sum_{i = 2}^n \frac{1}{\Delta^2_{[i]}} \cdot n^{2/P} \cdot \log \left({n P}\right))$ sample complexity. The sample-pass trade-off is optimized by taking $P=O(\log(n))$. The sample complexity bound of $O(\sum_{i=2}^{n}\frac{1}{\Delta^2_{[i]}}\cdot \log(n))$ we obtain here is slightly different from the classical near-instance optimal bound of $O(\sum_{i=2}^{n}\frac{1}{\Delta^2_{[i]}}\cdot \log\log{(1/\Delta_{[i]})})$. We remark that the $\log(n)$ multiplicative factor does \emph{not} render the bound trivial -- see \Cref{rmk:logn-overhead} for more details.

\Cref{rst:main-ub} suggests that algorithms with known $\Delta_{[2]}$ values could have much better pass efficiency. Note that the lower bound in \cite{AW23BestArm} holds with $\Delta_{[2]}$ values as small as $2^{n^{O(1)}}$, which means we may be forced to take $\poly(n)$ passes if $\Delta_{[2]}$ is unknown. In contrast, in the case when $\Delta_{[2]}$ is known, it becomes possible to find the best arm in $\log(n)$ passes, which is much smaller and reasonable in practice.

To make it easier for the readers to understand the context and contributions of our results, we provide \Cref{tab:results-comparison} that illustrates the comparison between the existing results and ours.

\begin{table}[!h]
	\centering
	\begin{tabular}{@{}c|c|c|c|c@{}}
		\toprule
		Pass & $\Delta_{[2]}$ is given & Target Sample Complexity & Memory & Remark and Reference\\ \midrule
		1 & Yes & $O(\frac{n}{\Delta^2_{[2]}})$  & Single arm & Upper bound, \cite{AssadiW20}  \\
		1 & No & $O(\frac{n}{\Delta^2_{[2]}})$  & $\Omega(n)$ arms & Lower bound, \cite{AWneurips22} \\
		1 & Yes & $\tilde{O}(\sum_{i=2}^{n}\frac{1}{\Delta^2_{[i]}})$  &  $\Omega(n)$ arms & Lower bound, \cite{AWneurips22} \\
		$O(\log(\frac{1}{\Delta_{[2]}}))$ & No & $\tilde{O}(\sum_{i=2}^{n}\frac{1}{\Delta^2_{[i]}})$ & Single arm & Upper bound, \cite{JinH0X21} \\
		$O(\frac{\log(\frac{1}{\Delta_{[2]}})}{\log\log(\frac{1}{\Delta_{[2]}})})$ & No & $O(\frac{n}{\Delta^2_{[2]}})$ & $\Omega(n/\polylog(\frac{1}{\Delta_{[2]}}))$ arms & Lower bound, \cite{AW23BestArm} \\
		$O(\frac{\log(n)}{\log\log(n)})$ & Yes & $\tilde{O}(\sum_{i=2}^{n}\frac{1}{\Delta^2_{[i]}})$ & $\Omega(n/\polylog(n))$ arms & Lower bound, \Cref{rst:main-lb} \\
		$O(\log{n})$ & Yes & $\tilde{O}(\sum_{i=2}^{n}\frac{1}{\Delta^2_{[i]}})$ & Single arm & Upper bound, \Cref{rst:main-ub} \\
		 \bottomrule
	\end{tabular}
	\caption{\label{tab:results-comparison}Summary of the previous results and our new results. To present the sample-memory-pass trade-offs, we set upper bounds on the number of passes and sample complexity, and show the memory with both upper and lower bounds. 
	}
\end{table}




\Cref{rst:main-lb} and \Cref{rst:main-ub} demonstrate a sharp memory-pass trade-off for streaming algorithms to find the best arm with a near instance optimal sample complexity: with $O(\log(n))$ passes, we can obtain an algorithm with a memory of a single arm. However, if we decrease the number of passes slightly to $o(\log(n)/\log\log(n))$, no streaming algorithm will be able to achieve the sample complexity and success probability guarantee unless it uses almost $n$-arm memory. We note that this kind of dichotomy frequently arises in the streaming MABs literature (\cite{AssadiW20,AWneurips22,AgarwalKP22,AW23BestArm}), and we obtain a similar phenomenon in the multi-pass setting with a known $\Delta_{[2]}$ as well (see~\Cref{tab:results-comparison} for some examples).

Finally, we observe that by simply running our streaming algorithm in the offline setting, we obtain an offline algorithm that finds the best arm with high constant probability and $O(\sum_{i=2}^{n}\frac{1}{\Delta^2_{[i]}}\cdot \log(n))$ arm pulls. This observation, while straightforward, has the following implication on the role of $\Delta_{[2]}$ in the pure exploration MABs problem. The classical \emph{nearly} instance optimal sample complexity bound by \cite{KarninKS13,JamiesonMNB14} is $O(\sum_{i=2}^{n}\frac{1}{\Delta^2_{[i]}}\cdot \log\log(\frac{1}{\Delta_{[i]}}))$. In the two-arm scenario, \cite{JamiesonMNB14} also proved that $\Omega(\frac{1}{\Delta^2_{[2]}}\cdot \log\log(\frac{1}{\Delta_{[2]}}))$ arm pulls are \emph{necessary} (when the value of $\Delta_{[2]}$ is unknown). \cite{ChenLi15} further improved the upper bound to $O(\sum_{i=2}^{n}\frac{1}{\Delta^2_{[i]}}\cdot \log\log(\min\{n, \frac{1}{\Delta_{[i]}}\}) + \frac{1}{\Delta^2_{[2]}}\cdot \log\log(\frac{1}{\Delta_{[2]}}))$, and the discussion went deeper with later results by \cite{ChenL16,CLQ17}, in which they proposed the `gap-entropy conjecture' for the optimal sample complexity bound (also for the case of unknown $\Delta_{[2]}$, see \cite{ChenL16} for details). In contrast, our observation shows that if $\Delta_{[2]}$ is provided, we can instead get a multiplicative term that is independent of all $\Delta_{[i]}$ values in the logarithmic term. 

\paragraph{Experiments.} To validate the performance of our algorithm, we conduct experiments on multiple types of streaming MABs instances. We compare our algorithm with two benchmarks: $i).$ the single-pass algorithm by \cite{AssadiW20}, which enjoys the ultimate pass efficiency of a single pass, but only guarantees the worst-case optimal $\Theta(\frac{n}{\Delta^2_{[2]}})$ sample complexity; and $ii).$ the  $O(\log(1/\Delta_{[2]}))$-pass algorithm by \cite{JinH0X21}, which has the advantage of not requiring the knowledge of $\Delta_{[2]}$, but has to use more passes. Our result shows that in multiple setting, our algorithm consistently enjoys the best sample efficiency. Furthermore, comparing ot the algorithm of \cite{JinH0X21}, our algorithm uses significantly less passes over the stream. The results of our experiments are presented in \Cref{sec:experiment}.

\subsection{Our Techniques}
\label{sec:tech-overview}

\paragraph{Lower Bound.} Proving lower bounds for multi-pass streaming MABs typically involves intricate techniques to carefully manage memory, samples, and the information revealed over time. To navigate these technical challenges, we draw inspiration from recent work by \cite{AW23BestArm}, which established lower bounds for multi-pass MABs without prior knowledge of $\Delta_{[2]}$. The lower bound construction devised by \cite{AW23BestArm} employs a 'batched' approach to instance distributions. Roughly, it divides the arms into $B+1$ batches for a $P$-pass algorithm, where $P\leq B$. Within each batch $b$, all but one arm consistently yield a mean reward of $\frac{1}{2}$, while the remaining 'special arm' offers stochastic mean rewards --either $\frac{1}{2}$ or $\frac{1}{2}+\alpha_{b}$ for some $\alpha_{b}>0$-- placed uniformly at random among the arms within batch $b$. Importantly, the later-arriving batches \emph{might} have higher mean rewards. This construction allows us to argue that to maintain optimal sample complexity, the algorithm must always check whether a batch contains an arm with a reward exceeding $1/2$ from the \emph{latest} batch that has not been checked, which forms a lower bound.

	The above sketches the \emph{intuition} of \cite{AW23BestArm}, and the actual proof is considerably more involved. Despite the very technical analysis, we observe that we could actually extract a framework from \cite{AW23BestArm} to capture the memory-sample trade-off for algorithms on batched instances. In particular, to establish such trade-offs, we only need $i).$ a sample complexity lower bound for the streaming algorithm to `trap' the special arm from a batch; $ii).$ a sample lower bound for the streaming algorithm to `learn' the distribution for each batch; and $iii).$ a sufficiently high gap between the sample complexity for different batches. We remark that these aspects are not explicitly written in \cite{AW23BestArm}, and forming the framework from key observations is one of our technical contributions.
	
	With this novel technical framework, a natural idea to prove lower bounds for the instance-sensitive $O(\sum_{i=2}^{n}1/\Delta^2_{[i]} \cdot \log(n))$ sample complexity is to construct a batched instance distribution that $a).$ satisfies the properties as required by the framework, and $b).$ varies the quantity of $O(\sum_{i=2}^{n}1/\Delta^2_{[i]} \cdot \log(n))$ with different realizations.
	To this end, we note that we could \emph{not} directly use the construction of \cite{AW23BestArm} in our setting. 
	An obvious problem here is that since the values of $\alpha_{b}$ change across batches, the information of $\Delta_{[2]}$ alone can help uniquely identify the realization of the distribution, which renders the distribution not hard. The idea to resolve this issue is to have \emph{two} special arms, whose mean rewards are with $\frac{1}{2}+\chi_{b}$ and $\frac{1}{2}+\chi_{b}+\gamma$. Moreover, we make $\chi_{b}$ to be much larger than $\gamma$ for any $b$, but progressively smaller by $1/\poly(B)$ factor across the batches. As such, we always have $\Delta_{[2]}=\gamma$, which means the value of $\Delta_{[2]}$ no longer reveals any information about the instance realization.
	
	The final missing piece is to show how `hard' the new construction is for streaming algorithms. Unfortunately, due to the introduction of an extra special arm, the standard technical tools developed from \cite{AgarwalKP22,AWneurips22,AW23BestArm} no longer work. To overcome the issue, we use the information-theoretic tools to develop several new results for \emph{double-armed bandits}, and apply the `direct-sum' idea in \cite{AWneurips22,AW23BestArm} to obtain new sample complexity bounds for batches with two special arms. To the best of our knowledge, this is the first lower bound that studies the sample complexity in the \emph{double-armed} setting, which could be of independent interest. Finally, by taking $B=\Theta(\log(n)/\log\log(n))$, we can simultaneously ensure the value gap between $\chi_{b}$ and $\gamma$ and the sample complexity gap between batches, which gives the desired result.
	
	\paragraph{Upper Bound.} Our algorithms work with the elimination-based approach extensively studied in the MABs literature~\cite{HKK+13,KZZ20}. 
The state-of-the-art multi-pass streaming algorithm by \cite{JinH0X21} is a streaming adaptation of the classical elimination algorithm \cite{KarninKS13}. 
	The algorithm by \cite{JinH0X21} requires \(O\left(\log \left(\frac{1}{\Delta_{[2]}}\right)\right)\) passes, and the main idea is to ``binary search'' the ``correct'' gap parameters.
	Concretely, at the beginning of the $p$-th pass, the algorithm fixes an elimination gap \(\epsilon_p = O(2^{-p})\), the goal of the algorithm at the end of the \(p\)-pass is to eliminate all arms \(i\) such that \(\Delta_i > \epsilon_p\)\footnote{We use the notation $\Delta_{i}$ (without the brackets on $i$) to denote the gap between the best arm and $\arm_{i}$. See \Cref{subsec:notation} for more clarifications.} with roughly \(O({1}/{\epsilon^2_p})\) arm pulls on arm $i$. 
	After \(O\left(\log \left(\frac{1}{\Delta_{[2]}}\right)\right)\) passes, the value of the elimination gap becomes smaller than \(\Delta_{[2]}\), and all arms except the best arm can be safely eliminated. 
	
	To make the number of passes independent of $\Delta_{[2]}$, our key observation is that the ``binary search'' in the elimination procedure can be made more efficient with geometric series from $n\Delta_{[2]}$ to $\Delta_{[2]}$. Concretely, instead of initiating the elimination process of arms with constant reward gap, we choose the sequence of elimination gaps in the form of \(\epsilon_p = {\Delta_{[2]}}\cdot n^{1-p/P}\) for the $p$-th pass. Here, \(P\) is the total number of passes for the algorithm. On a very high level, it is easy to observe that after \(P\) passes, the elimination gaps become smaller than \(\Delta_{[2]}\), and all arms except the best arm can be safely eliminated (with high probability). 
	The observation generalizes to any $\arm_{i}$ with gap parameter \(\Delta_i\): when \(\epsilon_p\) become smaller than \(\Delta_i\), a sup-optimal arm $\arm_{i}$ is eliminated with high probability. The correctness of the algorithm thus follows from the high probability event that only the best arm will remain after \(P\) passes.
	

	For the analysis of sample complexity,
	we proceed by categorizing the set of arms into two parts with large gaps \(\Delta_i > n \Delta_{[2]}\) and small gaps \(\Delta_i \le n\Delta_{[2]}\). 
	The analysis for a small gaps group controls that the number of pulls assigned to each arm is at most \(O(\frac{n^{1/P}}{\Delta_i^2})\). Similar to the analysis of \cite{KarninKS13,JinH0X21}, the key observation here is that after the number of pulls used on $\arm$ $i$ becomes larger than \(\frac{1}{\Delta^2_i}\), we can eliminate such a suboptimal arm, 
	which implies that we spend at most \(\frac{n^{2/P}}{\Delta^2_i}\) pulls for arm \(i\) with high probability. 
	On the other hand, for the arms with large gaps, we observe that the sample complexity of \emph{all} arms with gaps more than $n\Delta_{[2]}$ is dominated by a single largest term \(\frac{1}{\Delta^2_{[2]}}\), which leads to the desired sample complexity bound.

\section{Preliminary}\label{sec:prelim}
We introduce the notation and formal description of the streaming MABs model in this section. We provide more technical preliminaries in \Cref{sec:standatd-tech-tools}.

\subsection{Notation}
\label{subsec:notation}
Throughout, we use $n$ to denote the number of arms. We use $i$ to denote the indices of the arms, and we have the set of indices as \(I\) (which is a permutation of $[n]$). We let \(\mu_i\) be the mean of the $i$-th arm; furthermore, we denote the index of the best arm as \(\star:= \arg\,\max_{i \in I} \mu_i \). As such, the best arm is denoted as $\armstar$, and the mean of the best arm is \(\mu_{\star}\). The reward gap between the best and the \(i\)-th arm is equal to \(\Delta_i := \mu_{\star} - \mu_i\). We also use the ordered sequence of gaps \(\Delta_{[2]} \le \Delta_{[3]} \le \dotsc \le \Delta_{[n]}\), i.e., $\Delta_{[i]}$ is the reward gap between the best and the $i$-th \emph{best} arm.

We frequently deal with Bernoulli random variables. For convenience, we use $\bern{\mu}$ to denote a Bernoulli distribution with mean $\mu$ (i.e., the probability to sample $1$ is $\mu$). When random variables and their realizations are presented side by side, as a convention, we use upper cases (e.g., $\Pi$) to denote the random variables and lower cases (e.g., $\pi$) to denote the realizations.

\subsection{The Streaming Multi-armed Bandits Model}
\label{subsec:model}
We now formally introduce the streaming MABs model as follows. There is a collection of $n$ arms, denoted as $\{\arm_{i}\}_{i=1}^{n}$, and their reward distributions are characterized by $\{\bern{\mu_{i}}\}_{i=1}^{n}$\footnote{Our upper and lower bounds apply to all sub-gaussian distributions, see \Cref{rmk:general-sub-gaussian} for discussions}. As the name suggested, the arms arrive one after another in an \emph{arbitrary and fixed} order (a permutation over $[n]$). Here, an \emph{arbitrary} order means the arrival order of the arms is selected by an adversary and can be in the worst case, and a \emph{fixed} order means that the order of arrival for the arms is the same across different passes.

A multi-pass streaming algorithm in the streaming MABs setting is defined as an algorithm that maintains a memory $M$, which is a set of arms, and a transcript $\pi$, which encodes the statistics of all past arm pulls. Each record in $\pi$ is a tuple that specifies the identity of the pulled arm, the result, and the pass index when the sample happened.

At any point, the streaming algorithm is allowed to make an arbitrary number of arm pulls on the \emph{arriving arm} and the arms \emph{stored in the memory}. The algorithm is allowed to make the following updates to the memory $M$:
\begin{enumerate}
	\item Adding the arriving arm to $M$.
	\item Discard the arriving arm, and continue to the next arriving arm.
	\item Discard arm(s) from the memory $M$.
\end{enumerate}
We define the \emph{sample complexity} as the number of arm pulls the streaming algorithm ever uses, and the \emph{space complexity (memory complexity)} as the maximum number of arms stored at any point (the maximum size of $M$). 
The common assumption in the literature \cite{AssadiW20,MaitiPK21,JinH0X21,AgarwalKP22,AWneurips22} allows the algorithm to write an arbitrary number of statistics for free, i.e., do not charge costs for the size of $\pi$ and any other stored information. 

\paragraph{The model with bounded statistics.} As we have mentioned, we provide an additional result to optimize the size of statistics in \Cref{sec:ub-stat-efficient}. To this end, we need to define the memory efficiency of the statistics. Let $\pitilde$ be a \emph{stored transcript} defined as follows: for each tuple that contains the arm pull and the result, the algorithm decides whether to write the tuple to $\pitilde$. In this setting, the algorithm is \emph{not} allowed to revisit all the past arm pulls, but only the ones that are stored. We define the memory complexity for the statistics as the maximum size of $\pitilde$ plus the maximum bits for the auxiliary information stored at any point.


\begin{remark}
\label{rmk:general-sub-gaussian}
We work with arms of Bernoulli distribution for both our upper and lower bounds for the convenience of presentation. We remark that our results apply to MABs with general (discrete) sub-Gaussian distributions. Concretely, we can assume w.log. that the supports are on $[0,1]$ by rescaling. For our upper bound result, the only place we used the property for Bernoulli distributions is when using the Chernoff-Hoeffding inequality (\Cref{lem:chernoff}), which holds for all sub-Gaussian distributions.
For the lower bound, we note that since the Bernoulli distribution does belong to the sub-Gaussian family, proving lower bounds on Bernoulli arms automatically implies lower bounds for sub-Gaussian arms.
\end{remark}

\section{Technical Lemmas for the Lower Bound}
\label{sec:tech-lemma}
In this section, we present several technical lemmas en route to our main lower bound result. In particular, we show the following results in this section.

\begin{enumerate}[label=\alph*).]
	\item A lower bound on the necessary number of arm pulls for an algorithm with sublinear memory to \emph{store} an arm with high mean reward (while possibly without knowing the identity of the arm).
	\item A lower bound on the necessary number of arm pulls for an algorithm to gain ``knowledge'' about the underlying \emph{distribution} of the MABs instance. 
	\item An observation of the framework from \cite{AW23BestArm} as a general sample-memory-pass trade-off for batched instances.
\end{enumerate}

We note that a variant of the first two lower bounds on instances with a single arm with high mean reward is first proved in \cite{AW23BestArm}. However, the subtle difference in the construction (as it will be evident in \Cref{sec:lb-main}) requires lower bounds to work with \emph{two} arms with high mean rewards. This, in particular, requires a careful handling of properties on Double-armed Bandits (DABs), which we prove in \Cref{lem:arm-identify} and \Cref{lem:arm-learn}.

\paragraph{Additional notation.} We introduce several additional notation used in a self-contained manner in the lower bound proof. Unless specified otherwise, we use $\ALG$
to denote a streaming algorithm, and $\smp$ is the random variable for the sample complexity of $\ALG$. As we introduced in \Cref{sec:standatd-tech-tools}, for two random variables $X$ and $Y$, we use $\tvd{X}{Y}$ to denote their total variation distance, $\kl{X}{Y}$ for the KL-divergence, and $\II(X;Y)$ for the mutual information. We also use $X\mid Y=y$ to denote the random variable for $X$ \emph{conditioning} on the realization of $Y=y$. Finally, for the conciseness of notation, we slightly abuse the notation to use $\kl{X|Z}{Y|Z}$ as a short-hand notation for $\Exp_{z \sim Z} \kl{X \mid Z=z}{Y \mid Z=z}$.

\subsection{Lower Bounds on the Sample Complexity of Double-armed Bandits}
\label{subsec:two-arm-lb}

We start with proving the necessary number of arm pulls to distinguish an instance of \emph{two} arms that are $(i).$ either both with reward $1/2$ or $(ii).$ one arm with mean reward $1/2+\alpha$ and the other with $1/2+\alpha+\beta$. The problem is in the same spirit as the single-arm distinguishment problem in \cite{AWneurips22,AW23BestArm}, but we are unaware of any previous result for the exact version we are using. 

Our first lemma shows that if an instance is sampled from a two-arm version of ``good arms'' and ``bad arms'', the algorithm will not be able to distinguish the cases if the number of arm pulls is small.


\begin{lemma}
	\label{lem:arm-identify}
	Consider two arms with a Bernoulli reward distribution whose mean is parameterized as follows.
	\begin{itemize}
		\item With probability $\rho$, the \emph{Yes case}, where
		\begin{enumerate}[label=\roman*).]
			\item $\arm_{1}$ is with reward $\frac{1}{2}+\alpha$;
			\item $\arm_{2}$ is with reward $\frac{1}{2}+\alpha+\beta$.
		\end{enumerate}
		\item With probability $1-\rho$, the \emph{No case}, where both $\arm_{1}$ and $\arm_{2}$ are with mean rewards of $\frac{1}{2}$;
	\end{itemize}
	where $\rho\in (0,\frac{1}{2}]$ is the probability for the reward to be more than $\frac{1}{2}$, and $\alpha, \beta >0$ satisfy $\alpha+\beta<\frac{1}{2}$. Any algorithm to determine the reward of the arms with a success probability of at least $(1-\rho+\eps)$ has to use $\frac{1}{4}\cdot \frac{\eps^2}{\rho^2 (\alpha+\beta)^{2}}$ arm pulls.
\end{lemma}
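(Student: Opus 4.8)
The plan is to reduce the two-arm distinguishing problem to a single-bit statistical estimation problem and bound the total variation distance between the Yes and No scenarios as a function of the number of arm pulls. First, I would let $\rv{Z}$ be the indicator of the Yes case, so $\rv{Z}\sim\bern{\rho}$, and let $\Pi$ denote the transcript of the algorithm (the sequence of pulled-arm identities and observed rewards). The key observation is that to succeed with probability at least $1-\rho+\eps$, the algorithm's output must, in effect, recover $\rv{Z}$ with advantage $\eps$ over the trivial strategy of always guessing ``No'' (which succeeds with probability $1-\rho$). By a standard argument, this forces $\tvd{\Pi\mid\rv{Z}=1}{\Pi\mid\rv{Z}=0}\geq 2\eps/\rho$ or a similar quantity — more precisely, if the output is a (possibly randomized) function of $\Pi$, then the gain in success probability over the baseline is at most $\rho\cdot\tvd{\Pi\mid \rv{Z}=1}{\Pi\mid\rv{Z}=0}$ (since the Yes case occurs only with probability $\rho$), so we need $\tvd{\Pi\mid\rv{Z}=1}{\Pi\mid\rv{Z}=0}\geq \eps/\rho$.

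Second, I would upper bound this total variation distance in terms of the number of arm pulls $T$. Using Pinsker's inequality, it suffices to bound the KL-divergence $\kl{\Pi\mid\rv{Z}=1}{\Pi\mid\rv{Z}=0}$. Here I would invoke the chain rule for KL-divergence over the $T$ rounds of interaction: conditioned on the history, each round contributes the KL-divergence between the reward distribution of the arm pulled under the Yes case and under the No case. In the No case every arm has mean $\frac12$; in the Yes case the pulled arm has mean either $\frac12+\alpha$ or $\frac12+\alpha+\beta$. The KL-divergence between $\bern{1/2}$ and $\bern{1/2+\delta}$ is at most $O(\delta^2)$ for bounded $\delta$; the worst case is $\delta=\alpha+\beta$, giving a per-round contribution of at most $c(\alpha+\beta)^2$. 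Summing over $T$ rounds yields $\kl{\cdot}{\cdot}\leq c\,T(\alpha+\beta)^2$. Combining with Pinsker, $\tvd{\Pi\mid\rv{Z}=1}{\Pi\mid\rv{Z}=0}\leq \sqrt{\tfrac12 c\,T(\alpha+\beta)^2}$, and setting this at least $\eps/\rho$ gives $T\geq \Omega\!\left(\eps^2/(\rho^2(\alpha+\beta)^2)\right)$, matching the claimed $\tfrac14\cdot\tfrac{\eps^2}{\rho^2(\alpha+\beta)^2}$ after tracking constants carefully.

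The main obstacle I anticipate is handling the fact that in the Yes case there is additional randomness over \emph{which} arm is the $\frac12+\alpha+\beta$ arm and which is the $\frac12+\alpha$ arm (and that the algorithm's pulls are adaptive). To handle this cleanly, I would compare the Yes-case transcript distribution not directly to the No case but route through a ``mixed'' reference: note that the per-round KL bound $\kl{\bern{1/2+\delta}}{\bern{1/2}}\leq 2\delta^2/(1-2\delta)$ or the cleaner symmetric-around-$\frac12$ estimate holds uniformly over $\delta\in\{\alpha,\alpha+\beta\}$, so regardless of the hidden assignment of rewards to arms, each round's contribution is bounded by the same constant times $(\alpha+\beta)^2$. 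This makes the adaptivity harmless: the chain-rule decomposition conditions on all past observations, under which the next pulled arm is deterministic, and the divergence of its reward distribution is bounded in the worst case. A secondary subtlety is making sure the reduction from ``success probability $1-\rho+\eps$'' to ``TVD lower bound'' is tight enough to get the exact constant $\tfrac14$; I would be careful to use the sharp form of Pinsker and the sharp relation between distinguishing advantage and TVD, rather than lossy versions.
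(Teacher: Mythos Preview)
Your proposal is correct and follows essentially the same route as the paper: lower-bound the required TVD between the Yes and No transcript distributions by $\eps/\rho$ via the distinguishing-advantage argument, upper-bound the KL-divergence by the chain rule with a per-round worst-case contribution of $O((\alpha+\beta)^2)$, and close with Pinsker. One minor note: in the lemma as stated the assignment of means to $\arm_1$ and $\arm_2$ is deterministic (not random), so the ``additional randomness'' you worry about is not actually present---but your worst-case-per-round resolution is exactly what the paper uses and is sound either way.
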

\begin{proof}
	We define $\Xyes=(\Xyes^{1}, \Xyes^{2}, \cdots, \Xyes^{m})$ as the random variable for taking $m$ samples from the Yes case. 
	Similarly, we define $\Xno=(\Xno^{1}, \Xno^{2}, \cdots, \Xno^{m})$ as the random variable for taking $m$ samples from the No case. 
	Furthermore, we also define random variables for ``dummy'' arm pulls: we define $\Xhigh$ as the random variable for taking a sample on an arm with reward $\frac{1}{2}+\alpha+\beta$, and $\Xflat$ as the random variable for taking samples on an arm with reward $\frac{1}{2}$.
	We first observe that for any $i\in [m]$, there is 
	\begin{align*}
		\kl{\Xyes^{i}}{\Xno^{i}}\leq \kl{\Xhigh^{i}}{\Xflat^{i}} =  \kl{\bern{\frac{1}{2}+\alpha+\beta}}{\bern{\frac{1}{2}}}.
	\end{align*}
	To see this, note that the algorithm is allowed to take a sample from either of the arms; however, the case to maximize the KL-divergence is for the algorithm to compare the empirical rewards from a $\bern{\frac{1}{2}+\alpha+\beta}$ arm and a $\bern{\frac{1}{2}}$ arm, which establishes the upper bound.
	
	We can in fact extend the above observation to \emph{conditional} KL-divergence. In particular, we have
	
	\begin{claim}
		\label{clm:cross-trial-ub}
		For any $i\in [m]$, there is
		\begin{align*}
			\kl{\Xyes^{i}\mid (\Xyes^{i+1},\cdots, \Xyes^{m})}{\Xno^{i}\mid (\Xno^{i+1},\cdots, \Xno^{m})} \leq \kl{\Xhigh}{\Xflat}.
		\end{align*}
	\end{claim}
	\begin{proof}
		Intuitively, the dependence between the results of arm pulls is only on the \emph{choice} of arms; once an arm is picked, the results are independent across different arm pulls. Our proof is a formalization of the above intuition. Define $\Xyes^{i, \arm_j}$ and $\Xno^{i, \arm_j}$ as the random variable for the algorithm to pull the $\arm_{j}$ ($j\in \{1,2\}$) on the $i$-th trial under the Yes and No cases, respectively. Furthermore, define $J$ as the random variable for the choice of arm by the algorithm. Note that we have $\Xyes^{i, \arm_j} = \Xyes^{i}\mid J=j$. 
		For any $i\in [m]$ and $j\in\{1,2\}$, there is
		\begin{align*}
			& \kl{\Xyes^{i}\mid (\Xyes^{i+1},\cdots, \Xyes^{m})}{\Xno^{i}\mid (\Xno^{i+1},\cdots, \Xno^{m})} \\
			& \leq \kl{\Xyes^{i}\mid (\Xyes^{i+1},\cdots, \Xyes^{m}, J)}{\Xno^{i}\mid (\Xno^{i+1},\cdots, \Xno^{m}, J)}  \tag{extra conditioning can only increase KL-divergence}\\
			& = \kl{\Xyes^{i}\mid J}{\Xno^{i}\mid J} \tag{indenpendence between sampling from bernoulli distributions}\\
			& \leq \kl{\Xhigh}{\Xflat},
		\end{align*}
		as desired. \myqed{\Cref{clm:cross-trial-ub}}
	\end{proof}
	
	We now use \Cref{clm:cross-trial-ub} to prove \Cref{lem:arm-identify}. By the standard calculation of the KL-divergence of Bernoulli random variables (\Cref{clm:bernoulli-KL}), we accordingly have
	\begin{align*}
		\kl{\Xhigh}{\Xflat} \leq 8 \cdot (\alpha+\beta)^2
	\end{align*}
	for any sample index of $i$. As such, we can bound the KL-divergence of the distributions with all samples as follows.
	\begin{align*}
		\kl{\Xyes}{\Xno} &= \sum_{i=1}^{m} \kl{\Xyes^{i}\mid (\Xyes^{i+1},\cdots, \Xyes^{m})}{\Xno^{i}\mid (\Xno^{i+1},\cdots, \Xno^{m})} \tag{by Chain rule}\\
		&\leq \sum_{i=1}^{m} \kl{\Xhigh}{\Xflat} \tag{by \Cref{clm:cross-trial-ub}}\\
		&= 8m\cdot (\alpha+\beta)^2.
	\end{align*}
	
	Therefore, by Pinsker's inequality \Cref{fact:pinsker}, we have 
	\begin{align*}
		\tvd{\Xyes}{\Xno} & \leq \sqrt{\frac{1}{2}\cdot \kl{\Xyes}{\Xno}}\\
		& \leq 2 (\alpha+\beta) \cdot \sqrt{m}.
	\end{align*}
	On the other hand, by \Cref{fact:distinguish-tvd}, we know that to distinguish the cases by a sample from the distribution with probability at least $1-\rho-\eps$, there has to be $\tvd{\Xyes}{\Xno}\geq \frac{\eps}{\rho}$. As such, we get a lower bound of
	\begin{align*}
		m \geq \frac{1}{4}\cdot \frac{\eps^2}{\rho^2 (\alpha+\beta)^2},	
	\end{align*}
	as desired.
\end{proof}

We now move to the second result for double-armed bandits, which shows that if the number of arm pulls is small, then the ``knowledge'' 
of the algorithm cannot change the original distribution by too much. More formally, we prove that with a limited number of arm pulls, from the algorithm's perspective, the probability for which case the instance is from remains close to the original distribution.
\begin{lemma}
	\label{lem:arm-learn}
	Let $\alpha, \beta \in (0,\frac16)$, $\beta\leq \alpha$, and $\rho \in (0,\frac12)$. Sample $\Theta$ from $\set{0,1}$ such that $\Theta=1$ with probability $\rho$.
	Consider two arms with Bernoulli reward distributions from the following family:
	\begin{itemize}
		\item If $\Theta=1$, the \emph{Yes} case, where 
		\begin{enumerate}
			\item $\arm_1$ is with mean reward $\frac{1}{2}+\alpha$;
			\item $\arm_2$ is with mean reward $\frac{1}{2}+\alpha+\beta$.
		\end{enumerate}
		\item If $\Theta=0$, the mean rewards of $\arm_1$ and $\arm_2$ are both $\frac{1}{2}$.
	\end{itemize}
	Let $\ALG$ be an algorithm that uses at most $m=\frac{1}{16}\cdot \frac{\eps^3}{\rho \cdot (\alpha+\beta)^{2}}$ arm pulls on an instance $I$ sampled from the family. Let $\pi$ be the transcript of $\ALG$ that records the arm pulls and the results, and let $\Pi$ be the random variable of $\pi$. Then, with probability at least $1-\eps$ over the randomness of transcript $\Pi$, there is
	\begin{align*}
		& \Pr\paren{\Theta=1 \mid \Pi=\pi} \in [\rho -  \eps,  \rho +  \eps]\\
		& \Pr\paren{\Theta=0 \mid \Pi=\pi} \in [1-\rho - \eps,  1- \rho + \eps]
	\end{align*}
\end{lemma}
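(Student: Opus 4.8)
The plan is to reduce the statement to a bound on the mutual information $\II(\Theta;\Pi)$ together with a simple distinguisher argument. First I would observe that, since $\Theta$ is a single bit with $\Pr(\Theta=1)=\rho$, the posterior $\Pr(\Theta=1\mid\Pi=\pi)$ deviates from $\rho$ by more than $\eps$ only on a set of transcripts $\pi$ of small probability; quantitatively, if this ``bad'' set had probability more than $\eps$, one could exhibit a test (threshold the posterior) that distinguishes the Yes case from the No case with advantage $\Omega(\eps^2/\rho)$ over the trivial strategy. I would then invoke \Cref{lem:arm-identify}: a success probability of $1-\rho+\eps'$ requires at least $\frac14\cdot\frac{\eps'^2}{\rho^2(\alpha+\beta)^2}$ arm pulls. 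Plugging in $\eps'=\Theta(\eps^2/\rho)$ (the advantage obtainable from a bad set of mass $>\eps$) gives a lower bound of order $\frac{\eps^4}{\rho^4(\alpha+\beta)^2}$ — but we only assumed $m=\frac{1}{16}\cdot\frac{\eps^3}{\rho(\alpha+\beta)^2}$, so I must be careful that the exponents and the $\rho$-powers actually line up. This bookkeeping of the $\eps$ and $\rho$ powers is where I expect the main friction, and it likely dictates the precise (somewhat unusual) choice $m=\Theta(\eps^3/(\rho(\alpha+\beta)^2))$ in the statement.

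The cleaner route, which I would actually pursue, is to go through total variation directly rather than through \Cref{lem:arm-identify} as a black box. Let $\Pi_{\text{yes}}$ and $\Pi_{\text{no}}$ denote the law of the transcript conditioned on $\Theta=1$ and $\Theta=0$ respectively. The argument inside the proof of \Cref{lem:arm-identify} shows $\kl{\Pi_{\text{yes}}}{\Pi_{\text{no}}}\leq 8m(\alpha+\beta)^2$ (the chain-rule-plus-conditioning computation there bounds the transcript-level divergence, not merely the per-sample one), hence by Pinsker $\tvd{\Pi_{\text{yes}}}{\Pi_{\text{no}}}\leq 2(\alpha+\beta)\sqrt m$. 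With $m=\frac{1}{16}\cdot\frac{\eps^3}{\rho(\alpha+\beta)^2}$ this gives $\tvd{\Pi_{\text{yes}}}{\Pi_{\text{no}}}\leq \frac12\sqrt{\eps^3/\rho}$. The unconditional law is $\Pi=\rho\,\Pi_{\text{yes}}+(1-\rho)\Pi_{\text{no}}$, so by Bayes' rule
\begin{align*}
\Pr(\Theta=1\mid\Pi=\pi)-\rho=\rho(1-\rho)\cdot\frac{\Pi_{\text{yes}}(\pi)-\Pi_{\text{no}}(\pi)}{\Pi(\pi)}.
\end{align*}
I would then bound the expected absolute deviation: $\Exp_{\pi\sim\Pi}\big|\Pr(\Theta=1\mid\Pi=\pi)-\rho\big|=\rho(1-\rho)\sum_\pi|\Pi_{\text{yes}}(\pi)-\Pi_{\text{no}}(\pi)|=2\rho(1-\rho)\tvd{\Pi_{\text{yes}}}{\Pi_{\text{no}}}\leq \rho\sqrt{\eps^3/\rho}=\sqrt{\rho\eps^3}\leq\eps^{3/2}$ (using $\rho\leq 1$). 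By Markov's inequality applied to this nonnegative random variable, $\Pr_{\pi\sim\Pi}\big(|\Pr(\Theta=1\mid\Pi=\pi)-\rho|>\eps\big)\leq \eps^{3/2}/\eps=\sqrt\eps\leq\eps$ — and here I would need to check whether the intended regime forces $\eps$ small enough that $\sqrt\eps\leq\eps$ fails, in which case a slightly sharper concentration (e.g. keeping the $\rho$ factor, or using that the deviation is bounded by $\rho(1-\rho)$ pointwise) recovers the clean $\eps$. This is the second place the constants need care.

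Finally, the two displayed conclusions are equivalent: $\Pr(\Theta=0\mid\Pi=\pi)=1-\Pr(\Theta=1\mid\Pi=\pi)$, so the event $\{\Pr(\Theta=1\mid\Pi=\pi)\in[\rho-\eps,\rho+\eps]\}$ is literally the same event as $\{\Pr(\Theta=0\mid\Pi=\pi)\in[1-\rho-\eps,1-\rho+\eps]\}$, and it has probability at least $1-\eps$ by the above. I would close by noting that the hypotheses $\alpha,\beta\in(0,\frac16)$ and $\beta\leq\alpha$ are used only to keep $\alpha+\beta<\frac13$ so that the Bernoulli-KL bound $\kl{\bern{1/2+\alpha+\beta}}{\bern{1/2}}\leq 8(\alpha+\beta)^2$ from \Cref{clm:bernoulli-KL} applies, mirroring its use in \Cref{lem:arm-identify}. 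The main obstacle, to reiterate, is purely the matching of $\eps$- and $\rho$-exponents between the assumed sample budget $m=\Theta(\eps^3/(\rho(\alpha+\beta)^2))$ and the conclusion $\eps$-accuracy with probability $1-\eps$; the structural argument (divergence bound $\Rightarrow$ TVD bound $\Rightarrow$ posterior-deviation-in-expectation $\Rightarrow$ Markov) is routine.
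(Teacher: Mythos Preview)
Your structural plan (divergence $\Rightarrow$ posterior deviation $\Rightarrow$ Markov) is the right shape, but the route you actually commit to --- bound $\kl{\Pi_{\text{yes}}}{\Pi_{\text{no}}}$, pass to $\tvd{\Pi_{\text{yes}}}{\Pi_{\text{no}}}$ via Pinsker, compute $\Exp_\pi\bigl|\Pr(\Theta=1\mid\pi)-\rho\bigr|=2\rho(1-\rho)\,\tvd{\Pi_{\text{yes}}}{\Pi_{\text{no}}}$, then Markov --- does not close. As you yourself flag, you end up with $\Pr(|\cdot|>\eps)\leq \tfrac12\sqrt{\rho\eps}$, and for $\eps<1$ (the only interesting regime) this is strictly worse than $\eps$ regardless of $\rho\in(0,\tfrac12)$. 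Neither ``keeping the $\rho$ factor'' nor the pointwise bound $|\cdot|\leq\rho(1-\rho)$ repairs this: the loss is intrinsic to applying Pinsker \emph{before} Markov and to working with the first moment of the deviation.

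The paper fixes exactly this exponent mismatch by going through mutual information rather than $\kl{\Pi_{\text{yes}}}{\Pi_{\text{no}}}$. The point is that
\[
\II(\Theta;\Pi)=\Exp_{\theta}\bigl[\kl{\Pi\mid\Theta=\theta}{\Pi}\bigr]
\]
picks up an extra factor of $\rho$ for free: per pull, the $\theta=1$ branch contributes $\rho\cdot O((\alpha+\beta)^2)$, and the $\theta=0$ branch compares $\bern{1/2}$ to the marginal $\bern{1/2+\rho(\alpha+\beta)}$, giving $O(\rho^2(\alpha+\beta)^2)$. Summing over $m$ pulls yields $\II(\Theta;\Pi)\leq 16m\rho(\alpha+\beta)^2\leq\eps^3$. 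Now use the \emph{other} KL form $\II(\Theta;\Pi)=\Exp_{\pi}\bigl[\kl{\Theta\mid\Pi=\pi}{\Theta}\bigr]$, apply Markov to this nonnegative random variable to get $\kl{\Theta\mid\Pi=\pi}{\Theta}\leq\eps^2$ with probability $\geq 1-\eps$, and \emph{only then} apply Pinsker per transcript to get $\tvd{\Theta\mid\Pi=\pi}{\Theta}\leq\eps$. The order ``Markov then Pinsker'' on the posterior-level KL is what makes the $\eps^3\to\eps^2\to\eps$ cascade line up with the stated sample budget; your order ``Pinsker then Markov'' on the transcript-level divergence cannot.
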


\begin{proof}
	We prove the lemma by an information-theoretic argument similar to the analysis in \cite{AW23BestArm}, albeit we need to handle the dependence between arm pulls in our case. For an $m$-trial process, let $\Pi=(\Pi^{1}, \Pi^{2}, \cdots, \Pi^{m})$, where $\Pi^{i}$ is the random variable for the transcript of the $i$-th arm pull. Therefore, we can bound the mutual information between $\Theta$ and $\Pi$ as follows. 
	\begin{align*}
		\II(\Theta; \Pi) &= \expectR{\theta \in \{0,1\}}{\kl{\Pi\mid \Theta=\theta}{\Pi}} \tag{KL-divergence view of mutual information}\\
		&= \expectR{\theta \in \{0,1\}}{\kl{(\Pi^{1}, \Pi^{2}, \cdots, \Pi^{m})\mid \Theta=\theta}{(\Pi^{1}, \Pi^{2}, \cdots, \Pi^{m})}}\\
		&= \expectR{\theta \in \{0,1\}}{\sum_{i=1}^{m} \kl{\Pi^{i} \mid (\Pi^{i+1}, \cdots,\Pi^{m}, \Theta=\theta)}{\Pi^{i} \mid (\Pi^{i+1}, \cdots,\Pi^{m})}} \tag{by chain rule of KL divergence}.
	\end{align*}
	We now argue that each of the KL-divergence terms in the expectation can be upper-bounded by substituting the transcript with the pull on $\arm_{2}$.
	\begin{claim}
		\label{clm:script-trial-ub}
		Let $\Pi^{i, \arm_2}$ be the random variable for the transcript induced by pulling $\arm_2$ on step $i$. For any $i\in [m]$ and $\theta \in \{0,1\}$, there is
		\begin{align*}
			& \kl{\Pi^{i} \mid \Pi^{i+1}, \cdots,\Pi^{m}, \Theta=\theta}{\Pi^{i} \mid \Pi^{i+1}, \cdots,\Pi^{m}} \\
			& \leq \rho \cdot \kl{\Pi^{i, \arm_1}\mid \Theta = \theta}{\Pi^{i, \arm_1}} + (1-\rho)\cdot \kl{\Pi^{i, \arm_2}\mid \Theta = \theta}{\Pi^{i, \arm_2}}.
		\end{align*}
	\end{claim}
	\begin{proof}
		The proof is similar to the one we showed in \Cref{clm:cross-trial-ub}. Concretely, let $J$ be the random variable for the choice of the arm to be pulled, and observe in the same manner as \Cref{clm:cross-trial-ub} that conditioning on the choice of $J$, the transcript between different $i$ indices are \emph{independent}. As such, For any $i\in [m]$ and $\theta\in\{0,1\}$, there is
		\begin{align*}
			& \kl{\Pi^{i} \mid \Pi^{i+1}, \cdots,\Pi^{m}, \Theta=\theta}{\Pi^{i} \mid \Pi^{i+1}, \cdots,\Pi^{m}} \\
			& \leq \kl{\Pi^{i} \mid \Pi^{i+1}, \cdots,\Pi^{m}, \Theta=\theta, J}{\Pi^{i} \mid \Pi^{i+1}, \cdots,\Pi^{m}, J}  \tag{extra conditioning can only increase KL-divergence}\\
			&= \kl{\Pi^{i} \mid  \Theta=\theta, J}{\Pi^{i} \mid J}. \tag{$\Pi^{i}$ is independent of $\Pi^{\neq i}$ conditioning on the choice of $J$}
		\end{align*}
		For the first random variable, we have
		\begin{align*}
			& \paren{\Pi^{i} \mid \theta=0, J=1} = \bern{1/2} \qquad \paren{\Pi^{i} \mid \theta=0, J=2} = \bern{1/2}; \\
			& \paren{\Pi^{i} \mid \theta=1, J=1} = \bern{1/2+\alpha} \qquad \paren{\Pi^{i} \mid \theta=1, J=2} = \bern{1/2+\alpha+\beta}.
		\end{align*}
		On the other hand, for the second random variable, there is
		\begin{align*}
			\paren{\Pi^{i} \mid J=1} = \bern{\frac{1}{2}+\rho \cdot \alpha}; \qquad \paren{\Pi^{i} \mid J=2} = \bern{\frac{1}{2}+\rho \cdot (\alpha + \beta)}.
		\end{align*}
		By the above calculation, the KL-divergences are maximized with $J=2$ for the $\Theta=0$ case and $J=1$ for $\Theta=1$ case. As such, we have
		\begin{align*}
			& \kl{\Pi^{i} \mid \Pi^{i+1}, \cdots,\Pi^{m}, \Theta=\theta}{(\Pi^{i} \mid \Pi^{i+1}, \cdots,\Pi^{m}} \\
			& \leq \kl{\Pi^{i} \mid J, \Theta=\theta}{\Pi^{i} \mid J}\\
			&= \expectR{j\in \{1,2\}}{\kl{\Pi^{i} \mid \Theta=\theta, J=j}{\Pi^{i} \mid J=j}}\\
			& \leq \rho \cdot \kl{\Pi^{i} \mid J=1}{\Pi^{i} \mid \Theta=\theta, J=1} + (1-\rho) \cdot \kl{\Pi^{i} \mid J=2}{\Pi^{i} \mid \Theta=\theta, J=2} \\
			&= \rho \cdot \kl{\Pi^{i, \arm_1}\mid \Theta = \theta}{\Pi^{i, \arm_1}} + (1-\rho)\cdot \kl{\Pi^{i, \arm_2}\mid \Theta = \theta}{\Pi^{i, \arm_2}},
		\end{align*}
		as desired. \myqed{\Cref{clm:script-trial-ub}}
	\end{proof}
	By \Cref{clm:script-trial-ub}, we can upper bound the mutual information between $\Theta$ and $\Pi$ as 
	\begin{align*}
		\qquad & \II(\Theta; \Pi) \\
		&\leq \expectR{\theta \in \{0,1\}}{\sum_{i=1}^{m} \kl{\Pi^{i, \arm_2}}{\Pi^{i, \arm_2} \mid \Theta = \theta}}\\
		&= \sum_{i=1}^{m} \rho \cdot \kl{\bern{\frac{1}{2}+\alpha}}{\bern{\frac{1}{2}+\rho\cdot \alpha}} + (1-\rho)\cdot \kl{\bern{\frac{1}{2}}}{\bern{\frac{1}{2}+\rho\cdot (\alpha+\beta)}}\\
		&\leq 8m\cdot \paren{\rho\cdot (\rho-1)^2\cdot \alpha^2 + (1-\rho) \cdot \rho^2 \cdot (\alpha+\beta)^2} \tag{by \Cref{clm:bernoulli-KL}}\\
		&\leq 16 m\cdot \rho \cdot (\alpha+\beta)^2 \tag{by $(\rho-1)^2\leq \rho^2$ since $\rho\leq \frac{1}{2}$}.
	\end{align*}
	By plugging in the condition that $m\leq \frac{1}{16} \cdot \frac{\eps^3}{\rho (\alpha+\beta)^2}$, we have $\II(\Theta; \Pi)\leq \eps^3$. We now use another KL-divergence form of the mutual information to get
	\begin{align*}
		\II(\Theta; \Pi) = \expectR{\pi\sim \Pi}{\kl{\Theta}{\Theta\mid \Pi=\pi}} \leq \eps^3.
	\end{align*}
	As such, with probability at least $1-\eps$ over the randomness of $\Pi$, we have
	\begin{align*}
		\kl{\Theta}{\Theta\mid \Pi=\pi}\leq \frac{1}{\eps} \cdot  \expectR{\pi\sim \Pi}{\kl{\Theta}{\Theta\mid \Pi=\pi}} \leq \eps^2.
	\end{align*}
	We condition on the high probability transcripts for the rest of the calculations. Now, we can apply Pinsker's inequality (\Cref{fact:pinsker}) to get 
	\begin{align*}
		\tvd{\Theta}{\Theta\mid \Pi=\pi} \leq \sqrt{\kl{\Theta}{\Theta\mid \Pi=\pi}} \leq \eps.
	\end{align*}
	By \Cref{fact:distinguish-tvd}, we get the desired upper bound of the ``advantage'', i.e. 
	\begin{align*}
		& \card{\Pr\paren{\Theta=0\mid \Pi=\pi}-\Pr\paren{\Theta=0}}\leq \eps\\
		& \card{\Pr\paren{\Theta=1\mid \Pi=\pi}-\Pr\paren{\Theta=1}}\leq \eps,
	\end{align*} 
	which implies the desired lemma statement.
\end{proof}

\subsection{Lower bounds on the Sample Complexity of MABs Trapping and Learning}
\label{subsec:batch-arm-hardness}
We now show how we `amplify' the result for the double-armed bandits to a collection of $k$ arms with two \emph{special} arms. These results are similar both in spirit and in technicality to existing multi-pass lower bounds \cite{AWneurips22,AW23BestArm}, and we include the proofs for completeness.

\begin{lemma}
	\label{lem:arm-trapping}
	Let $k\geq 3$ be an integer and $\alpha, \beta >0$ such that $\alpha+\beta<\frac{1}{6}$, suppose there is a family of $k$ arms in which
	\begin{itemize}
		\item two indices $\istar, \jstar \in [k]$ chosen uniformly at random (without replacement), and their mean rewards are $\mu_{\istar}=\frac{1}{2}+\alpha$ and $\mu_{\jstar}=\frac{1}{2}+\alpha+\beta$.
		\item for all $i \in [k]\setminus \{\istar, \jstar\}$, their mean rewards are $\mu_{i}=\frac{1}{2}$.
	\end{itemize}
	Then, for any given parameter $\tau\in (0, \frac{1}{2}]$, any algorithm that outputs $\frac{\tau\cdot k}{40}$ arms that contains any arm with reward \emph{strictly more than} $\frac{1}{2}$ with probability at least $\tau$ requires $\frac{1}{600}\cdot \frac{\tau^3}{(\alpha+\beta)^2}\cdot k$ arm pulls.
\end{lemma}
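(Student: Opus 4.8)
The plan is to prove the lemma by reducing it to the double-armed sample lower bounds \Cref{lem:arm-identify} and \Cref{lem:arm-learn}, using the smallness of the output set $s:=\tau k/40$ to control the ``null'' trapping probability, and then aggregating over the $k$ positions so that the factor $k$ emerges. First I would reduce trapping to a statement about a single special arm: since the only arms with reward strictly above $\tfrac12$ are the two special arms, trapping is exactly the event that $S$ contains one of them, so the hypothesis gives $\Pr[\istar\in S]+\Pr[\jstar\in S]\ge\tau$ and, without loss of generality, $\Pr[\jstar\in S]\ge\tau/2$. Averaging over the uniform location of the special arms, $\tfrac1k\sum_{j\in[k]}\Pr[j\in S\mid \jstar=j]\ge\tau/2$, while $\sum_j\Pr[j\in S]=\mathbb{E}|S|=s$ together with exchangeability of the $\tfrac12$-arms gives $\tfrac1k\sum_j\Pr[j\in S\mid j\text{ is a }\tfrac12\text{-arm}]\le 2s/k=\tau/20$. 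Hence the \emph{excess} inclusion probability of the high-reward special arm over the null baseline, averaged over positions, is at least $\tau/2-\tau/20\ge\tau/3$.

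Next I would turn ``excess inclusion probability at position $j$'' into a distinguishing advantage and feed it to the double-armed lemmas. The reduction plants a double-armed instance (in the sense of \Cref{lem:arm-identify}, with $\rho$ a constant and $\eps=\Theta(\tau)$) at a uniformly random pair of positions of the $k$-arm stream, filling the remaining $k-2$ positions with $\tfrac12$-arms simulated internally. When the planted instance is in the ``Yes'' case the stream is a genuine member of the promised family, so $\ALG$ traps --- meaning $S$ meets the planted pair --- with probability $\ge\tau$; when it is in the ``No'' case the stream is all-$\tfrac12$, so by symmetry (the run of $\ALG$ is independent of the uniformly chosen planted positions) $S$ meets the planted pair with probability at most $2s/k=\tau/20$. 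Thus ``is $S\cap(\text{planted pair})\ne\emptyset$?'' is a distinguisher with advantage $\Omega(\tau)$, and \Cref{lem:arm-identify} (or, in its posterior form, \Cref{lem:arm-learn}) forces $\ALG$ to spend $\Omega\!\big(\tau^2/(\alpha+\beta)^2\big)$ pulls on the planted pair whenever the excess at those positions is $\Omega(\tau)$; a short Markov/truncation argument converts the fixed-budget form of the double-armed lemmas into a bound on the expected number of such pulls.

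Finally I would aggregate. By the first paragraph, a constant fraction of the $s=\Theta(\tau k)$ positions in $S$ must carry $\Omega(\tau)$ excess in order to reach total excess $\Omega(\tau)$, so applying the distinguishing bound of the second paragraph at each of those $\Theta(\tau k)$ positions and summing --- while charging each arm pull to only boundedly many positions, i.e.\ the direct-sum bookkeeping of \cite{AWneurips22,AW23BestArm} --- gives $m\ge\Omega\!\big(\tau k\cdot\tau^2/(\alpha+\beta)^2\big)=\Omega\!\big(\tau^3 k/(\alpha+\beta)^2\big)$, and tracking the explicit constants (the $1/40$ in $s$, the $1/4$ of \Cref{lem:arm-identify}, and the Bernoulli-KL estimates of \Cref{clm:bernoulli-KL}) produces the stated $\tfrac1{600}\cdot\tfrac{\tau^3}{(\alpha+\beta)^2}\cdot k$.

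The main obstacle is precisely this aggregation: the two special arms sit at two distinct random positions and their rewards $\tfrac12+\alpha$ and $\tfrac12+\alpha+\beta$ interact, so the per-position distinguishing problems are correlated and a naive sum of the per-position bounds either double-counts pulls or attributes cross-arm information to the wrong positions. This is exactly what the \emph{double-armed} (rather than single-armed) formulations \Cref{lem:arm-identify} and \Cref{lem:arm-learn} are designed to handle: their chain-rule-over-pulls proofs --- in which each per-pull conditional KL is bounded by the worst single-arm comparison, as in \Cref{clm:cross-trial-ub} and \Cref{clm:script-trial-ub} --- compose cleanly with a direct-sum decomposition over positions. A secondary, more routine, nuisance is passing between the fixed-budget and expected-budget versions of the double-armed lemmas.
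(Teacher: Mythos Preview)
Your second paragraph is essentially the paper's proof, and if you follow it through carefully it already yields the full bound with the factor of $k$; the first and third paragraphs are unnecessary detours. Concretely, the paper plants the two-arm instance of \Cref{lem:arm-identify} (with $\rho=\tfrac12$) at a uniformly random pair of positions, fills the rest with $\bern{1/2}$, runs $\ALG$, and outputs ``Yes'' iff $S$ meets the planted pair (with a hard cap on pulls to the planted arms, outputting ``Yes'' if the cap is hit). The factor of $k$ comes not from summing over positions but from a single symmetry/Markov step: in the No case every arm is identically distributed, so the expected number of pulls on each planted arm is at most $m/k$; hence if $m\le\tfrac{1}{600}\cdot\tfrac{\tau^3 k}{(\alpha+\beta)^2}$ then with probability $\ge 1-\tau/10$ the cap of $\tfrac{1}{30}\cdot\tfrac{\tau^2}{(\alpha+\beta)^2}$ is never hit, and with probability $\ge 1-\tau/20$ the random planted pair misses $S$. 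This, combined with the Yes-case success $\ge\tau$, makes the reduction a distinguisher with advantage $\Omega(\tau)$ using fewer than $\tfrac{1}{25}\cdot\tfrac{\tau^2}{(\alpha+\beta)^2}$ pulls, contradicting \Cref{lem:arm-identify} with $\eps=\tau/5$. (One detail you omit: because the Yes and No cases have asymmetric success probabilities, the paper first flips a biased coin $\bern{\tfrac12+\tfrac{11}{38}\tau}$ and blindly answers ``Yes'' on one outcome, which rebalances the two cases to a common $\tfrac12+\tfrac{\tau}{5}$.)

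Your third paragraph, by contrast, has real gaps. The step ``a constant fraction of the $s=\Theta(\tau k)$ positions in $S$ must carry $\Omega(\tau)$ excess'' does not follow from the average excess being $\Omega(\tau)$: the total excess (summed over all $k$ positions, not over $S$) could be concentrated on a few positions with excess near $1$. More importantly, the distinguisher ``is $j\in S$?'' at a fixed position $j$ is computed from the \emph{entire} transcript of $\ALG$, not just the pulls at $j$, so \Cref{lem:arm-identify} lower-bounds the reduction's total pulls, not the pulls at $j$; to localize the cost to position $j$ you would need exactly the symmetry/Markov argument above, at which point the per-position decomposition is redundant. In short: drop paragraphs one and three, keep paragraph two, make the Markov step explicit (that is where $k$ enters), and add the biased-coin rebalancing.
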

\begin{proof}
	\FloatBarrier
	The proof uses the ``direct-sum'' argument in a similar manner of \cite{AWneurips22} and \cite{AW23BestArm}. Concretely, we provide a reduction from the problem in \Cref{lem:arm-identify}, and show that an algorithm that satisfies the prescribed property in \Cref{lem:arm-trapping} with $s$ samples would imply an algorithm that identifies an arm with $O(s/k)$ samples, which eventually leads to a contradiction with \Cref{lem:arm-identify} for $\rho=1/2$. The formal reduction is as \Cref{red:arm-trapping}.
	\begin{algorithm}[!h]
		\caption{A reduction algorithm to prove \Cref{lem:arm-trapping}}\label{red:arm-trapping} 
		\KwIn{Two arms $\arm_{1}$ and $\arm_{2}$ from the distribution of \Cref{lem:arm-identify} with $\rho=\frac{1}{2}$.}
		\KwIn{An algorithm $\ALG$ that $a).$ uses at most $\frac{1}{600}\cdot \frac{\tau^3}{(\alpha+\beta)^2}\cdot k$ arm pulls, $b).$ outputs a collection $S$ of $\frac{\tau\cdot k}{20}$, and $c).$ $S$ contains an arm with reward \emph{strictly more than} $\frac{1}{2}$ with probability at least $\tau$.}
		\KwOut{The decision (Yes or No cases) from which $\arm_{1}$ and $\arm_{2}$ are sampled.}
		Sample a coin $\Theta \sim \bern{\frac{1}{2}+\frac{11}{38}\cdot \tau}$\;
		\If{$\Theta=0$}{
			Directly output ``$\arm_1$ is from $\bern{1/2+\alpha}$ and $\arm_2$ is from $\bern{1/2+\alpha+\beta}$''.\;
		}
		\Else{
			Construct an instance $I$: sample two indices $\istar$, $\jstar$ uniformly at random (without replacement), and set the arms with indices $\istar$, $\jstar$ as $\arm_{1}$ and $\arm_{2}$\;
			For all indices $i \in [k]\setminus \{\istar, \jstar\}$, create $k-2$ dummy arms $\bern{1/2}$\;
			Run $\ALG$ on instance $I$, and output with the following rules: \;
			\If{$\arm_1$ or $\arm_2$ uses more than $\frac{1}{30}\cdot \frac{\tau^2}{(\alpha+\beta)^2}$ arm pulls}{
				\label{line:sample-ub-term} Terminate $\ALG$ and output ``$\arm_1$ is from $\bern{1/2+\alpha}$ and $\arm_2$ is from $\bern{1/2+\alpha+\beta}$''. \;
			}
			\ElseIf{$S$ cotains any of $\{\istar, \jstar\}$}{
				\label{line:false-trap} Output ``$\arm_1$ is from $\bern{1/2+\alpha}$ and $\arm_2$ is from $\bern{1/2+\alpha+\beta}$''\;
			}
			\Else{
				Output ``$\arm_1$ and $\arm_2$ are from $\bern{1/2}$''\;
			}
		}
	\end{algorithm}
	
	We first observe that \Cref{red:arm-trapping} never uses more than $\frac{1}{30}\cdot \frac{\tau^2}{(\alpha+\beta)^2}$ arm pulls, as there is a forced termination once this condition happens. We now need to analyze the correctness of the algorithm for \Cref{lem:arm-identify}. Note that we fix $\rho$ in \Cref{lem:arm-identify} to be $\rho=\frac{1}{2}$. We claim that the algorithm correctly identifies the cases with probability at least $\frac12+\frac{\tau}{5}$, and the analysis considers two cases, respectively.
	\begin{enumerate}[label=\roman*).]
		\item $\arm_1$ is $\bern{1/2+\alpha}$ and $\arm_2$ is $\bern{1/2+\alpha+\beta}$. In this case, with probability $\frac{1}{2}-\frac{11}{38}\cdot \tau$, \Cref{red:arm-trapping} directly return the correct answer. On the other hand, if \Cref{red:arm-trapping} runs $\ALG$ on the instance $I$, it will return the correct answer as long as $\ALG$ succeeds, which is with probability at least $\tau$. As such, the correct probability \emph{conditioning} on the ``Yes'' case of \Cref{lem:arm-identify} is at least
		\begin{align*}
			\frac{1}{2}-\frac{11}{38}\cdot \tau + \left(\frac{1}{2}+\frac{11}{38}\cdot \tau\right)\cdot \tau \geq \frac{1}{2} +\left(\frac{1}{2}-\frac{11}{38}\right)\cdot \tau \geq \frac{1}{2}+ \frac{\tau}{5}.
		\end{align*}  
		\item Both $\arm_1$ and $\arm_2$ are $\bern{1/2}$. We show that if \Cref{red:arm-trapping} runs $\ALG$ on the instance $I$, the correct probability is sufficiently high. To this end, we bound the failure probability for \Cref{red:arm-trapping} to not report this case (conditioning on $\Theta=0$). Let $s_{(\ell)}$ be the random variables for the number of arm pulls used by the arm on index $\ell$. Furthermore, let us use $s_1$ and $s_2$ to denote the random variables for the number of samples used by $\arm_1$ and $\arm_2$. Let $\mathcal{E}_{\text{No}}$ be the event that $\arm_1$ and $\arm_2$ are sampled from the ``No'' case of \Cref{lem:arm-identify} (both $\bern{1/2}$). We have
		\begin{align*}
			\expect{s_1\mid \mathcal{E}_{\text{No}}} &= \expect{s_2\mid \mathcal{E}_{\text{No}}} \tag{$s_1$ and $s_2$ are identical random variables}\\
			&= \sum_{\ell=1}^{k} \Pr(i^* = \ell)\cdot \expect{s_{(\ell)}\mid \mathcal{E}_{\text{No}}} \tag{all arms are identical random variables conditioning on $\mathcal{E}_{\text{No}}$}\\
			&= \frac{1}{k}\cdot \expect{\sum_{\ell} s_{(\ell)}\mid \mathcal{E}_{\text{No}}}\\
			&\leq \frac{1}{600}\cdot \frac{\tau^3}{(\alpha+\beta)^2}. \tag{bound on the number of arm pulls}
		\end{align*}
		Therefore, we have $\Pr\paren{s_1\geq \frac{1}{30} \cdot \frac{\tau^2}{(\alpha+\beta)^2}}\leq \frac{\tau}{20}$ by a simple Markov bound. Therefore, the probability for \Cref{line:sample-ub-term} to falsely output the ``Yes'' case is at most $\frac{\tau}{4}$. On the other hand, conditioning on $\mathcal{E}_{\text{No}}$, the arms become identical random variables. More formally, let $X_{\arm_1}$ and $X_{\arm_2}$ be the indicator random variables for $\arm_1$ and $\arm_2$ to be in $S$, and let $X_{(\ell)}$ 
		be the indicator random variables for the arm of index $\ell$ to be in $S$, we have
		\begin{align*}
			\Pr(X_{(\ell)}=1) = \Pr(X_{\arm_{1}}=1) = \Pr(X_{\arm_{2}} = 1) = \frac{\card{S}}{k}\leq \frac{\tau}{40}.
		\end{align*} 
		Therefore, by a union bound, the algorithm to contain \emph{any} of $\arm_{1}$ and $\arm_{2}$ in the ``No'' case is at most $\frac{1}{20}$.
		Now, we apply another union bound, and the failure probability conditioning on $\mathcal{E}_{\text{No}}$ and $\ALG$ is executed on $I$ is at most $\frac{\tau}{20}+\frac{\tau}{20}=\frac{\tau}{10}$. As such, the success probability given $\mathcal{E}_{\text{No}}$ is at least
		\begin{align*}
			(\frac{1}{2}+\frac{11}{38}\cdot \tau)\cdot (1-\frac{\tau}{10}) \geq \frac{1}{2}+\frac{\tau}{5}.
		\end{align*}
	\end{enumerate}
	By plugging in $\eps=\frac{\tau}{5}$ and $\rho=\frac{1}{2}$ to \Cref{lem:arm-identify}, we obtain the number of necessary arm pulls is at least $\frac{1}{25}\cdot \frac{\tau^2}{(\alpha+\beta)^2}$ arm pulls, which forms a contradiction with \Cref{red:arm-trapping}. Therefore, such an $\ALG$ cannot exist.
	
	\FloatBarrier
\end{proof}

\begin{lemma}
	\label{lem:batch-arm-learning}
	Let $k\geq 3$ be an integer, $\alpha, \beta >0$ such that $\alpha+\beta<\frac{1}{6}$, and $\rho\in (0, \frac{1}{2})$, suppose there is a family of $k$ arms in which
	\begin{itemize}
		\item with probability $\rho$, the \emph{Yes} 
		case, where all except \emph{two} arms chosen uniformly at random are with mean rewards $\frac{1}{2}$, and the two special arms are with mean rewards $\frac{1}{2}+\alpha$ and $\frac{1}{2}+\alpha+\beta$.
		\item with probability $1- \rho$, the \emph{No} case, where all the arms are with mean rewards $\frac{1}{2}$.
	\end{itemize}
	Then, for any given parameter $\tau\in (0, \frac{1}{5}]$, let $\ALG$ be any algorithm that given an instance $D$ from the distribution, uses at most $\frac{1}{200}\cdot \frac{\tau^2}{\rho \cdot (\alpha+\beta)^2}\cdot k$ arm pulls, and let $\Pi$ and $\pi$ be the random variable and the realization of the transcripts of $ALG$. With probability at least $1-2 \tau^{1/2}$ over the randomness of the transcript, there is
	\begin{align*}
		& \Pr\paren{\text{$D$ in \emph{Yes} case} \mid \Pi=\pi} \in [\rho-2 \tau^{1/2}, \rho + 2 \tau^{1/2}];\\
		& \Pr\paren{\text{$D$ in \emph{No} case} \mid \Pi=\pi} \in [1-\rho-2 \tau^{1/2}, 1-\rho + 2 \tau^{1/2}],
	\end{align*}
	where the randomness is over the choices of the instances.
\end{lemma}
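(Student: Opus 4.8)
The plan is to prove \Cref{lem:batch-arm-learning} by a direct-sum reduction to the double-armed learning bound \Cref{lem:arm-learn}, mirroring the reduction from \Cref{lem:arm-identify} used in the proof of \Cref{lem:arm-trapping} (\Cref{red:arm-trapping}). Write $S:=\tfrac{1}{200}\cdot\tfrac{\tau^{2}}{\rho(\alpha+\beta)^{2}}\cdot k$ for the sample budget of $\ALG$, let $\Theta\in\{0,1\}$ indicate the \emph{Yes} case, and fix a target accuracy $\eps=\Theta(\tau^{1/2})$ and a truncation threshold $B=\Theta\!\big(\tau^{3/2}/(\rho(\alpha+\beta)^{2})\big)$ chosen small enough that $2B\le\tfrac{1}{16}\cdot\tfrac{\eps^{3}}{\rho(\alpha+\beta)^{2}}$. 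Given two arms $\arm_1,\arm_2$ drawn from the family of \Cref{lem:arm-learn} with its parameter equal to our $\rho$, the reduction plants $\arm_1,\arm_2$ at a uniformly random pair of positions $\istar,\jstar\in[k]$, fills the other $k-2$ positions with simulated $\bern{1/2}$ dummy arms, and runs $\ALG$ on the resulting $k$-armed instance, terminating the instant $\ALG$ has pulled $\arm_1$ or $\arm_2$ more than $B$ times (so that at most $2B$ pulls ever touch the two planted arms). By the choice of $B$ this simulated procedure meets the hypothesis of \Cref{lem:arm-learn}, so with probability at least $1-\eps$ over its (truncated) transcript the posterior probability of the \emph{Yes} case lies in $[\rho-\eps,\rho+\eps]$.

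The next step is to transfer this guarantee from the reduction's truncated transcript to the true transcript $\Pi$ of $\ALG$ on the instance $D$ of \Cref{lem:batch-arm-learning}. Let $\mathcal{E}_{\mathrm{op}}$ be the ``over-pull'' event that one of the two special arms of $D$ is pulled more than $B$ times. On $\neg\mathcal{E}_{\mathrm{op}}$ the simulation never terminates early, so the run of $\ALG$ inside the reduction is distributed exactly as the run of $\ALG$ on $D$, and a short coupling argument then shows that the posterior guarantee transfers at the cost of an additive $\Pr[\mathcal{E}_{\mathrm{op}}]$ (plus lower-order terms absorbed into the constants):
\begin{align*}
	\Pr_{\pi\sim\Pi}\big[\,\Pr[\Theta=1\mid\Pi=\pi]\notin[\rho-\eps,\ \rho+\eps]\,\big]\ \le\ \eps+\Pr[\mathcal{E}_{\mathrm{op}}].
\end{align*}

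It thus remains to bound $\Pr[\mathcal{E}_{\mathrm{op}}]=(1-\rho)\Pr[\mathcal{E}_{\mathrm{op}}\mid\text{No}]+\rho\Pr[\mathcal{E}_{\mathrm{op}}\mid\text{Yes}]$. The \emph{No} case is routine: all $k$ arms are identical, so by symmetry the expected number of pulls landing on the two distinguished positions is $\tfrac2k\Exp[(\text{total pulls})\mid\text{No}]\le 2S/k$, and Markov's inequality yields $\Pr[\mathcal{E}_{\mathrm{op}}\mid\text{No}]\le\Pr[(\text{special-arm pulls})\ge B\mid\text{No}]\le\tfrac{2S/k}{B}=O(\tau^{1/2})$. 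For the \emph{Yes} case I invoke \Cref{lem:arm-trapping}: conditioned on $\Theta=1$ the instance is exactly its trapping family, and whenever $\mathcal{E}_{\mathrm{op}}$ occurs the set $T_{>B}$ of positions pulled more than $B$ times --- of size at most $S/B$ --- contains a special position. Hence, if $\Pr[\mathcal{E}_{\mathrm{op}}\mid\text{Yes}]\ge 40S/(kB)$, then $T_{>B}$, padded to size $\tfrac{\tau'k}{40}$ with $\tau':=\min\{\Pr[\mathcal{E}_{\mathrm{op}}\mid\text{Yes}],\tfrac12\}$, is an algorithm that traps a special arm with probability $\ge\tau'$ using only $S$ pulls; by \Cref{lem:arm-trapping} this requires $S\ge\tfrac{1}{600}\cdot\tfrac{(\tau')^{3}}{(\alpha+\beta)^{2}}\cdot k$, forcing $\tau'=O((\tau^{2}/\rho)^{1/3})$ and therefore $\rho\Pr[\mathcal{E}_{\mathrm{op}}\mid\text{Yes}]=O(\rho^{2/3}\tau^{2/3})=O(\tau^{1/2})$; in the complementary case $\Pr[\mathcal{E}_{\mathrm{op}}\mid\text{Yes}]<40S/(kB)$ one directly has $\rho\Pr[\mathcal{E}_{\mathrm{op}}\mid\text{Yes}]=O(\tau^{1/2})$. (When these constants fail to close, $\rho$ is already so small or $\tau$ so large that the target interval $[\rho\pm 2\tau^{1/2}]$ is trivial.) Combining the three estimates with a small enough constant in $B$ gives $\eps+\Pr[\mathcal{E}_{\mathrm{op}}]\le 2\tau^{1/2}$, which is the statement.

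The step I expect to be the main obstacle is the \emph{Yes} case above. A naive attempt to bound the advantage by a total-variation or mutual-information argument does not work here, because the budget $S$ is far larger than $1/(\alpha+\beta)^{2}$, so in the \emph{Yes} case $\ALG$ can genuinely localize and then over-pull the two special arms; in particular $\Exp[(\text{special-arm pulls})\mid\text{Yes}]$ can be as large as $\Theta(S)$, and the chain-rule information argument behind \Cref{lem:arm-learn} simply does not apply to the untruncated transcript. It is precisely for this reason that the truncation at $B$ pulls together with the trapping lower bound \Cref{lem:arm-trapping} is needed to control $\Pr[\mathcal{E}_{\mathrm{op}}\mid\text{Yes}]$; once that event is controlled, the remainder follows the familiar Pinsker/Markov bookkeeping already carried out in the proof of \Cref{lem:arm-learn} (applied, for rigor, directly to the truncated full transcript $\tilde\Pi$, which agrees with $\Pi$ off $\mathcal{E}_{\mathrm{op}}$).
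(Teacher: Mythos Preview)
Your proposal follows the paper's argument closely: both plant the two arms of \Cref{lem:arm-learn} at uniformly random positions among $k-2$ simulated $\bern{1/2}$ dummies, run $\ALG$, truncate once either planted arm has been pulled more than $B=\Theta\!\big(\tau^{3/2}/(\rho(\alpha+\beta)^2)\big)$ times, and handle the \emph{No}-case over-pull by symmetry plus Markov exactly as you do.

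The one substantive difference is your treatment of the \emph{Yes}-case over-pull. The paper does \emph{not} invoke \Cref{lem:arm-trapping} here at all: it runs the argument by contradiction, declares that whenever the reduction terminates early it ``outputs Yes'', and only computes $\Pr[\mathcal E_{\mathrm{op}}\mid\text{No}]\le\tau^{1/2}/10$ before appealing to \Cref{lem:arm-learn} for the contradiction. Your direct (non-contrapositive) framing instead needs the unconditional $\Pr[\mathcal E_{\mathrm{op}}]$ to be small, and you supply the missing $\rho\,\Pr[\mathcal E_{\mathrm{op}}\mid\text{Yes}]$ bound via \Cref{lem:arm-trapping}. This is a genuine addition relative to the paper, and arguably makes the coupling step more transparent, since the paper's contradiction framing leans on an informal identification of the reduction's \emph{output} with the actual posterior that is not fully justified there. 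What your route costs is the constant bookkeeping: the ``trivial interval'' fallback and the case $\tau'\to 1/2$ do not close for the entire stated range of $(\rho,\tau)$ as written. This is easily repaired (for instance, when $\rho\le\tau^{1/2}$ simply use $\rho\,\Pr[\mathcal E_{\mathrm{op}}\mid\text{Yes}]\le\rho\le\tau^{1/2}$ and reserve \Cref{lem:arm-trapping} for $\rho>\tau^{1/2}$, where $(3\tau^2/\rho)^{1/3}<1/2$ is automatic), but you should tidy those constants before calling the argument complete.
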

\begin{proof}
	\FloatBarrier
	Similar to the proof of \Cref{lem:arm-trapping} (and as in \cite{AW23BestArm}), we prove the lemma by applying the ``direct sum'' argument with \Cref{lem:arm-learn}. To this end, we again assume for the purpose of contradiction that an algorithm that breaks the bound on \Cref{lem:batch-arm-learning} exists, and build an algorithm that is ruled out by \Cref{lem:arm-learn}. In the proof, we only focus on the upper bound of $\Pr\paren{\text{$D$ in \emph{Yes} case} \mid \Pi=\pi}$ as the lower bound follows from the same logic.
	
	\begin{algorithm}[!h]
		\caption{A reduction algorithm to prove \Cref{lem:batch-arm-learning}}\label{red:batch-arm-learn}
		\KwIn{Two arms $\arm_{1}$ and $\arm_{2}$ from the distribution of \Cref{lem:arm-learn}.}
		\KwIn{An algorithm $\ALG$ that $a).$ uses at most $\frac{1}{200}\cdot \frac{\tau^2}{\rho\cdot (\alpha+\beta)^2}\cdot k$ arm pulls, $b).$ with probability more than $2\tau^{1/2}$ produce a transcript $\pi$, such that $\Pr\paren{\text{$D$ in \emph{Yes} case} \mid \Pi=\pi}> \rho + 2 \tau^{1/2}$.}
		\KwOut{A (conditional) probability distribution of instance $D$.}
		Construct an instance $\tilde{D}$: sample two indices $\istar$, $\jstar$ uniformly at random (without replacement), and set the arms with indices $\istar$, $\jstar$ as $\arm_{1}$ and $\arm_{2}$\;
		For all indices $i \in [k]\setminus \{\istar, \jstar\}$, create $k-2$ dummy arms $\bern{1/2}$\;
		Run $\ALG$ on instance $\tilde{D}$, and output with the following rules: \;
		\If{$\arm_1$ or $\arm_2$ uses more than $\frac{1}{5}\cdot \frac{\tau^{3/2}}{\rho \cdot(\alpha+\beta)^2}$ arm pulls}{
			\label{line:sample-ub-knowledge} Terminate $\ALG$ and output ``Yes case''. \;
		}
		\Else{$S$ contains any of $\{\istar, \jstar\}$}{
			\label{line:follow-knowledge} Output the distribution of the ``Yes'' and ``No'' cases of $\tilde{D}$ (which is a distribution generated by $\ALG$) as the distribution of the ``Yes'' and ``No'' cases of the problem in \Cref{lem:arm-learn}\;
		}
	\end{algorithm}
	
	The formal description of the reduction algorithm is as in \Cref{red:batch-arm-learn}. It is straightforward to observe that \Cref{red:batch-arm-learn} uses at most $\frac{1}{5}\cdot \frac{\tau^{3/2}}{\rho \cdot(\alpha+\beta)^2}$ arm pulls, as we terminate and output in \Cref{line:sample-ub-knowledge} otherwise. We now show that \Cref{red:batch-arm-learn} correctly ``learns'' the distribution of the arms with probability at least $2\tau^{1/2}$. To this end, we conduct the following case-based analysis:
	\begin{enumerate}[label=\roman*).]
		\item If the algorithm enters \Cref{line:sample-ub-knowledge}: we show that the algorithm reports ``Yes case'' correctly with probability at least $2\gamma^{1/2}$, which implies $\Pr\paren{\text{$D$ in \emph{Yes} case} \mid \Pi=\pi}=1\geq \rho + 2 \tau^{1/2}$. To see the desired statement, note that in the ``No case'', every arm becomes identical random variables. As such, similar to the proof of \Cref{lem:arm-trapping}, we can define $s_1$ and $s_2$ as the number of arm pulls used on $\arm_1$ and $\arm_2$, and show that
		\begin{align*}
			\expect{s_1 \mid \text{No case}} =\expect{s_2 \mid \text{No case}} \leq \frac{1}{200}\cdot \frac{\tau^2}{\rho\cdot (\alpha+\beta)^2}.
		\end{align*}
		As such, we have
		\begin{align*}
			\Pr\paren{s_1\geq \frac{1}{5}\cdot \frac{\tau^{3/2}}{\rho\cdot (\alpha+\beta)^2}} \leq \frac{\tau^{1/2}}{20}  \qquad \Pr\paren{s_2\geq \frac{1}{5}\cdot \frac{\tau^{3/2}}{\rho\cdot (\alpha+\beta)^2}} \leq \frac{\tau^{1/2}}{20}.
		\end{align*}
		Therefore, the probability for a transcript $\pi$ such that $\Pr\paren{\text{$D$ in \emph{Yes} case} \mid \Pi=\pi}=1$ is at least $1-\frac{\tau^{1/2}}{10}\geq 2\tau^{1/2}$ by the choice of $\tau\leq \frac{1}{5}$. 
		\item If the algorithm enters \Cref{line:follow-knowledge}, then by the guarantee of $\ALG$, we have that
		\begin{align*}
			\Pr_{\Pi}\paren{\Pr\paren{\text{$D$ in \emph{Yes} case} \mid \Pi=\pi}>\rho+2\tau^{1/2}} > 2\tau^{1/2}.
		\end{align*}
	\end{enumerate}
	Note that by using \Cref{lem:arm-learn} with $\eps=2\tau^{1/2}$, for the after mentioned bound to hold, at least $\frac{1}{2}\cdot \frac{\tau^{3/2}}{\rho\cdot (\alpha+\beta)^2}$ arm pulls are necessary. As such, it forms a contradiction with \Cref{red:batch-arm-learn}, which means such $\ALG$ cannot exist.
	
	\FloatBarrier
\end{proof}

\subsection{A Lower Bound Framework on Batched Distributions}

In our lower bound proof, we will crucially use a recent multi-pass lower bound tool developed by \cite{AW23BestArm}. The original lower bound construction of \cite{AW23BestArm} is on \emph{batched} instances distributions. On a high level, these distributions divide the arm into multiple batches with a fixed order. Inside each batch, most of the arms are ``flat'', i.e., with mean reward $\frac{1}{2}$, and one (or a few) \emph{special} arm(s) are planted uniformly at random among the indices. The reward distribution of the special arms is chosen randomly and independently between the reward of $\frac{1}{2}$ and $>\frac{1}{2}$. To make the instance hard, the distributions usually put batches whose special arm \emph{might} possess higher rewards to the late part of the stream. The intuition here is that to make sure the sample complexity upper bound is always followed, the streaming algorithm has to ``eliminate'' batches one by one in the reversed order of the stream.

The original analysis of \cite{AW23BestArm} was presented with only one specific distribution. In this section, we observe that their construction works for general batched distributions as long as they satisfy some properties. To this end, we formally define the \emph{batched} instances distributions.

\begin{definition}[Batched instance distributions]
	\label{def:batch-instance}
	Suppose the following information is given:  
	\begin{enumerate}[label=\roman*).]
		\item Positive integers $B\geq 2$, $C\geq 1$, $S \geq 1$; 
		\item A set of functions $F=\{f_{b}: \mathbb{N}^{+}\rightarrow (0,1)\}_{b=1}^{B+1}$ that computes $f_{b}(B)$ as a probability; 
		\item A set of tuples of positive real numbers $\mathrm{H} = \{(\etaib{1}{b}, \etaib{2}{b}, \cdots, \etaib{S}{b})\}_{b=1}^{B+1}$ from $(0, \frac12)$, and the values are (potentially) functions of $C$.
	\end{enumerate}
	We say instance distribution $\cD(B,C,F, \mathrm{H})$ is a $(B+1)$-\emph{batched instance distribution} if it satisfies the following properties:
	\begin{enumerate}[label=\alph*).]
		\item The arms are divided into $(B+1)$ batches;
		\item Inside each batch $b$, sample $S$ arms uniformly at random, and call them the \emph{special arms};
		\item All arms that are \emph{not} among the special arms follow the reward distribution $\bern{\frac{1}{2}}$.
		\item Sample a coin $\Theta_{b}\in \{0, 1\}$ from the distribution $\bern{f_{b}(B)}$ for each batch $b\in [B+1]$ \emph{independently}:
		\begin{itemize}
			\item If $\Theta_{b}=0$, set the special arms of batch $b$ with distribution $\bern{\frac{1}{2}}$.
			\item If $\Theta_{b}=1$, set the special arms of batch $b$ with distributions $\bern{\frac{1}{2}+\etaib{1}{b}}, \bern{\frac{1}{2}+\etaib{2}{b}}, \cdots, \bern{\frac{1}{2}+\etaib{S}{b}}$ (following an arbitrarily fixed order).
		\end{itemize}
	\end{enumerate}
\end{definition}

An illustration of batched instance distributions can be found in \Cref{fig:batched-instance}. For any $(B+1)$-batched instance distribution, it follows from the analysis of \cite{AW23BestArm} that the following proposition holds.

\begin{proposition}[\cite{AW23BestArm}, rephrased]
	\label{prop:multi-pass-lb}
	For a batched instance distribution $\cD(B,C,F, \mathrm{H})$ under the streaming setting, let the batches be arranged in the \emph{reversed} order of the stream arrival, i.e., $\mathcal{B}_{B+1}$ arrives first, and $\mathcal{B}_{1}$ arrives the last. Let the set of functions in $F$ be satisfying: 
	\begin{align*}
		f_{b}(B)=
		\begin{cases}
			\frac{1}{2B},\quad b\leq B;\\
			1, \quad b=B+1.
		\end{cases}
	\end{align*}
	For every batch $b\in [B+1]$, suppose w.log. that $\etaib{1}{b}\geq \etaib{s}{b}$ for any $s\in [S]$. Additionally, suppose $\cD(B,C,F, \mathrm{H})$ satisfies the following properties:
	\begin{itemize}
		\item \textbf{C1:} For each batch $\mathcal{B}_{b}$, \emph{conditioning on} $\Theta_{b}=1$, if an \emph{offline} algorithm uses at most $\frac{1}{700}\cdot \frac{\tau^3}{(\etaib{1}{b})^2}\cdot \frac{n}{B+1}$ arm pulls and outputs a collection of $\frac{1}{20}\cdot \frac{n}{B+1}\cdot \tau$ arms from $\mathcal{B}_{b}$, the probability for the output to contain any arm with reward \emph{strictly more than} $\frac12$ is at most $\tau$. 
		\item \textbf{C2:} For each batch $\mathcal{B}_{b}$, let $\nu$ be the probability for $\Theta_{b}=1$ (possibly conditioning on the information the algorithm obtains). Suppose we additionally obtain a new transcript $\pi$ with at most $\frac{1}{200}\cdot \frac{\tau^2}{\rho \cdot \left(\etaib{1}{b}\right)^2}\cdot \frac{n}{B+1}$ arm pulls. Then, with probability at least $1-2\tau^{1/2}$ over the randomness of $\pi$, there is
		\begin{align*}
			& \Pr\paren{\Theta_{b}=1 \mid \Pi=\pi} \in [\nu-2 \tau^{1/2}, \nu + 2 \tau^{1/2}];\\
			& \Pr\paren{\Theta_{b}=0 \mid \Pi=\pi} \in [1-\nu-2 \tau^{1/2}, 1-\nu + 2 \tau^{1/2}],
		\end{align*}
		\item \textbf{C3:} For any $b, p$ such that $p>b$, there is $\etaib{1}{p}\leq (\frac{1}{6C B})^{15}\cdot \etaib{1}{b}$.
	\end{itemize}
	Let $\ALG$ be a deterministic streaming algorithm that uses $P\leq B$ passes and a memory of at most $\frac{1}{30000}\cdot \frac{n}{B^3}$ arms. Additionally, suppose $\ALG$ satisfies
	\begin{equation}
		\label{equ:batch-sample-ub}
		\expect{\smp\mid \Theta_{b}=1, \Theta_{<b}=0}\leq C \cdot B^2 \cdot \frac{n}{\left(\etaib{1}{b}\right)^2}.
	\end{equation}
	Then, the probability for $\ALG$ to return the best arm is strictly less than $\frac{999}{1000}$.
\end{proposition}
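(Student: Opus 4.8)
The plan is to bound $\ALG$'s error probability from below by conditioning on the event $\mathcal{G} := \{\Theta_1 = \dots = \Theta_B = 0\}$, under which every batch except the \emph{first-arriving} batch $\mathcal{B}_{B+1}$ is entirely ``flat'', so that the unique best arm is the top special arm of $\mathcal{B}_{B+1}$ (the one with reward $\frac12 + \etaib{1}{B+1}$). Since the $\Theta_b$ are independent and $f_b(B)=\frac{1}{2B}$ for $b\le B$, we have $\Pr[\mathcal{G}] = (1-\frac{1}{2B})^{B} \ge \frac14$; and in the streaming model $\ALG$ can output an arm only if it is stored in memory at the end. Hence it is enough to show that, conditioned on $\mathcal{G}$, $\ALG$ holds the top special arm of $\mathcal{B}_{B+1}$ at the end of its last pass with probability bounded below $1$ by an absolute constant.

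The core is a ``pass-by-pass peeling'' claim: with only $P\le B$ passes, $\ALG$ is forced to resolve the batches strictly in the reverse order of arrival --- at most one new batch per pass, starting from the last-arriving $\mathcal{B}_1$ --- and so never even begins to examine $\mathcal{B}_{B+1}$. I would prove by induction on $p=0,1,\dots,P$ that, conditioned on $\mathcal{G}$ and up to a small accumulated failure probability, at the end of pass $p$ the transcript of $\ALG$ still carries essentially no information about $\Theta_{p+1},\dots,\Theta_{B+1}$ (their posteriors are close to their priors, in the sense of \textbf{C2}), and no special arm of $\mathcal{B}_{B+1}$ has ever entered the memory. The inductive step is where all three conditions and the sample bound \Cref{equ:batch-sample-ub} come in: when $\ALG$ processes a batch $\mathcal{B}_j$ with $j\ge p+1$ during pass $p+1$, it has (by the inductive hypothesis) not yet distinguished $\mathcal{G}$ from the ``hard sibling'' scenario with $\Theta_{p+1}=1$ and $\Theta_{<p+1}=0$; in that sibling \Cref{equ:batch-sample-ub} caps the \emph{expected} pull count by $CB^2\cdot\frac{n}{(\etaib{1}{p+1})^2}$, while by \textbf{C3} \emph{any} constant-probability progress on a batch $\mathcal{B}_j$ with $j\ge p+2$ --- whether trapping its top special arm via \textbf{C1} (here one uses the memory bound $\frac{n}{30000B^3}$, small enough that the arms $\ALG$ can carry forward meet the output-size requirement of \textbf{C1} for a suitably small inverse-polynomial $\tau$) or shifting the posterior on $\Theta_j$ via \textbf{C2} --- costs $\gtrsim \frac{n}{(\etaib{1}{j})^2(B+1)} \ge (6CB)^{30}\cdot\frac{n}{(\etaib{1}{p+1})^2(B+1)} \gg CB^2\cdot\frac{n}{(\etaib{1}{p+1})^2}$. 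Because $\ALG$ is deterministic and the distribution of $\mathcal{B}_{B+1}$ is the same in the two worlds, $\ALG$'s behaviour on $\mathcal{B}_{B+1}$ given the same transcript is identical in both, so the (tiny, by \Cref{equ:batch-sample-ub} and Markov) probability that it overspends on $\mathcal{B}_{B+1}$ under the hard sibling transfers --- up to the transcript total-variation gap that \textbf{C2} controls --- to $\mathcal{G}$. A union bound over the $\le B$ passes then gives that, conditioned on $\mathcal{G}$, $\ALG$ fails to ever store a special arm of $\mathcal{B}_{B+1}$ with probability $\Omega(1)$; hence $\Pr[\text{error}] \ge \Pr[\mathcal{G}]\cdot\Omega(1) = \Omega(1) > \frac{1}{1000}$.

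The main obstacle is exactly this inductive step's multi-pass coupling: $\ALG$ is adaptive \emph{within} a pass and accumulates an unbounded transcript \emph{across} passes, so the one-shot bounds \Cref{lem:arm-trapping} (\textbf{C1}) and \Cref{lem:batch-arm-learning} (\textbf{C2}) cannot be invoked directly. One has to build the hybrid between $\mathcal{G}$ and the hard sibling so that (i)~up to the moment $\ALG$ is ``supposed to'' resolve $\mathcal{B}_{p+1}$ the two worlds are statistically close --- the role of \textbf{C2} applied to the pulls on $\mathcal{B}_{p+1}$ accrued in passes $1,\dots,p$, bootstrapped from the empty transcript at the start of pass $1$ --- and (ii)~conditioned on that shared past, the fresh pulls inside the current pass still form a genuine instance of the hard problems of \textbf{C1}/\textbf{C2} (this is why those lemmas were proved to tolerate arbitrary side information and an already-started transcript). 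Reconciling the trap/learn precisions $\tau$ with the memory bound and \textbf{C1}'s output-size requirement, and controlling the errors accumulated over all $O(B)$ passes, is the technical bookkeeping that \textbf{C1}--\textbf{C3} are tailored to make go through; this is where the bulk of the work in \cite{AW23BestArm} lies.
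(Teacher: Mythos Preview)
The paper does not give its own proof of \Cref{prop:multi-pass-lb}: it is stated as a rephrasing of the main technical result of \cite{AW23BestArm}, and the only ``proof'' in the paper is the paragraph immediately following the statement, which explains that \textbf{C1} and \textbf{C2} feed the small-sample case of the \cite{AW23BestArm} analysis while \textbf{C3} feeds the large-sample case, and notes the two cosmetic differences ($P\le B$ instead of $P=B$, and the extra $B^2$ factor in \Cref{equ:batch-sample-ub}). So there is nothing in the paper to compare your argument to beyond that description.

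That said, your sketch is a faithful outline of the \cite{AW23BestArm} strategy and is consistent with the paper's summary: condition on the all-flat event, set up the pass-by-pass induction, and at pass $p{+}1$ couple $\mathcal{G}$ to the ``hard sibling'' $\{\Theta_{p+1}=1,\Theta_{<p+1}=0\}$ so that the expected-sample cap \Cref{equ:batch-sample-ub} (plus Markov) and the gap \textbf{C3} force the pulls on any $\mathcal{B}_j$ with $j\ge p{+}2$ to be far below the \textbf{C1}/\textbf{C2} thresholds. One phrasing to tighten: $\mathcal{B}_{B+1}$ arrives \emph{first} in every pass, so $\ALG$ certainly ``examines'' it; the correct invariant is that $\ALG$ cannot \emph{afford} the pull budget to either trap its special arm (\textbf{C1}) or learn anything about it (\textbf{C2}), not that it never touches it. The subtlety you flag---that $\ALG$ may keep pulling stored $\mathcal{B}_{B+1}$ arms after the transcripts of $\mathcal{G}$ and the sibling have diverged at $\mathcal{B}_{p+1}$---is real, and is exactly the place where the \cite{AW23BestArm} argument has to do careful work (splitting into the small-sample and large-sample cases the paper alludes to); your acknowledgement that this is where the bulk of the effort lies is accurate.
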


\Cref{prop:multi-pass-lb} summarizes all the necessary conditions used by \cite{AW23BestArm}: in the analysis of \cite{AW23BestArm}, conditions \textbf{C1} and \textbf{C2} are used for the analysis the small-size sample case, and condition \textbf{C3} is used in the large-size sample case. Also, in the statement, there are two minor differences between \Cref{prop:multi-pass-lb} and the original theorem statement in \cite{AW23BestArm}:
\begin{enumerate}
	\item In \cite{AW23BestArm}, the result is only stated with $P=B$, i.e., using the number of passes directly as the parameter $B$. Here, we use $P\leq B$ since we work with an upper bound of $P$ that is only dependent on $n$.
	\item In \cite{AW23BestArm}, \Cref{equ:batch-sample-ub} does not have the $B^2$ factor. Nevertheless, it is evident from their proofs that we can add a $B$ factor on the sample bound.
\end{enumerate}

\section{Lower Bound: A Sharp Memory-pass Trade-off for Multi-pass Algorithms with Known $\Delta_{[2]}$}
\label{sec:lb-main}

We now introduce the construction and analysis of our main lower bound. 
Our adversarial instance follows the structure of batched instances as in \Cref{def:batch-instance}. On a high level, our instances keep \emph{two} special arms in each batch $b$ with stochastic mean rewards of either $\left(\frac{1}{2}, \frac{1}{2}\right)$ or $\left(\frac{1}{2}+\etaib{1}{b}, \frac{1}{2}+\etaib{2}{b}\right)$. In the latter case, which happens with probability roughly $O(1/B)$, we insist on \emph{invariate} $\etaib{1}{b}-\etaib{2}{b}$, which limits the utility for the knowledge of $\Delta_{[2]}$. 
Furthermore, we carefully pick the parameters such that the gap between $C\cdot \frac{n}{\left(\etaib{1}{b}\right)^2}$ becomes $\polylog{n}$. Since we only work with a number of passes of $\Theta(\log(n)/\log\log(n))$, the construction allows us to ``reduce'' the $O\left(\sum_{i=2}^{n}\frac{1}{\Delta^2_{[i]}}\right)$ 
sample complexity to the $C\cdot \frac{n}{\left(\etaib{1}{b}\right)^2}$ bound, which in turn allows us to use \Cref{prop:multi-pass-lb} to establish the lower bound.

We now give the formal construction of the instance family.

\begin{tbox}
	$\cP(B, C, \gamma)$: A hard instance distribution for multi-pass MABs algorithms with known $\Delta_{[2]}$. 
	
	\begin{enumerate}
		\item \textbf{Parameters}: Ensure that $\frac{1}{20}\cdot \frac{1}{n^{1/3}}\leq \gamma \leq \frac{1}{10}\cdot \frac{1}{n^{1/3}}$, and let $\chi_{1}=n^{1/3}\cdot \gamma$; furthermore, for any $b\in [B]$, let 
		\[\chi_{b+1} = \paren{\frac{1}{12 C \log(n)}}^{15} \cdot \chi_{b}.\]
		\item \textbf{Division of arms:} Divide the $n$ arms into $(B+1)$ batches of equal sizes, and put them in the \emph{reverse} order of the stream, i.e. $\mathcal{B}_{B+1}$ arrives first, and $\mathcal{B}_{1}$ arrives the last.
		\item \textbf{Sampling special arms: } For each batch $b\in [B+1]$, sample \emph{two} arms uniformly at random (without replacement), and call them \emph{special arms}. Set all the arms \emph{except} the special arms with reward distribution $\bern{1/2}$.
		\item \textbf{Batches $b\in[B]$: } For each $b\in [B]$, sample $\Theta_{b}$ from distribution $\bern{1/2B}$:
		\begin{enumerate}
			\item If $\Theta_{b}=0$, set both special arms with reward distributions $\bern{1/2}$.
			\item Otherwise, if $\Theta_{b}=1$ 
			\begin{itemize}
				\item Set the first special arm with reward distribution $\bern{1/2+\chi_{b}}$.
				\item Set the second special arm with reward distribution $\bern{1/2+\chi_{b}+\gamma}$.
			\end{itemize}
		\end{enumerate}
		\item \textbf{The batch $B+1$: } Always set the reward distributions of the special arms as follows ($\Theta_{B+1}=1$ deterministically) 
		\begin{itemize}
			\item Set the first special arm with reward distribution $\bern{1/2+\chi_{B+1}}$.
			\item Set the second special arm with reward distribution $\bern{1/2+\chi_{B+1}+\gamma}$.
		\end{itemize}
	\end{enumerate}
\end{tbox}

\begin{figure}
	\centering
	\begin{subfigure}{0.95\textwidth}
		\centering
		\includegraphics[scale=0.18]{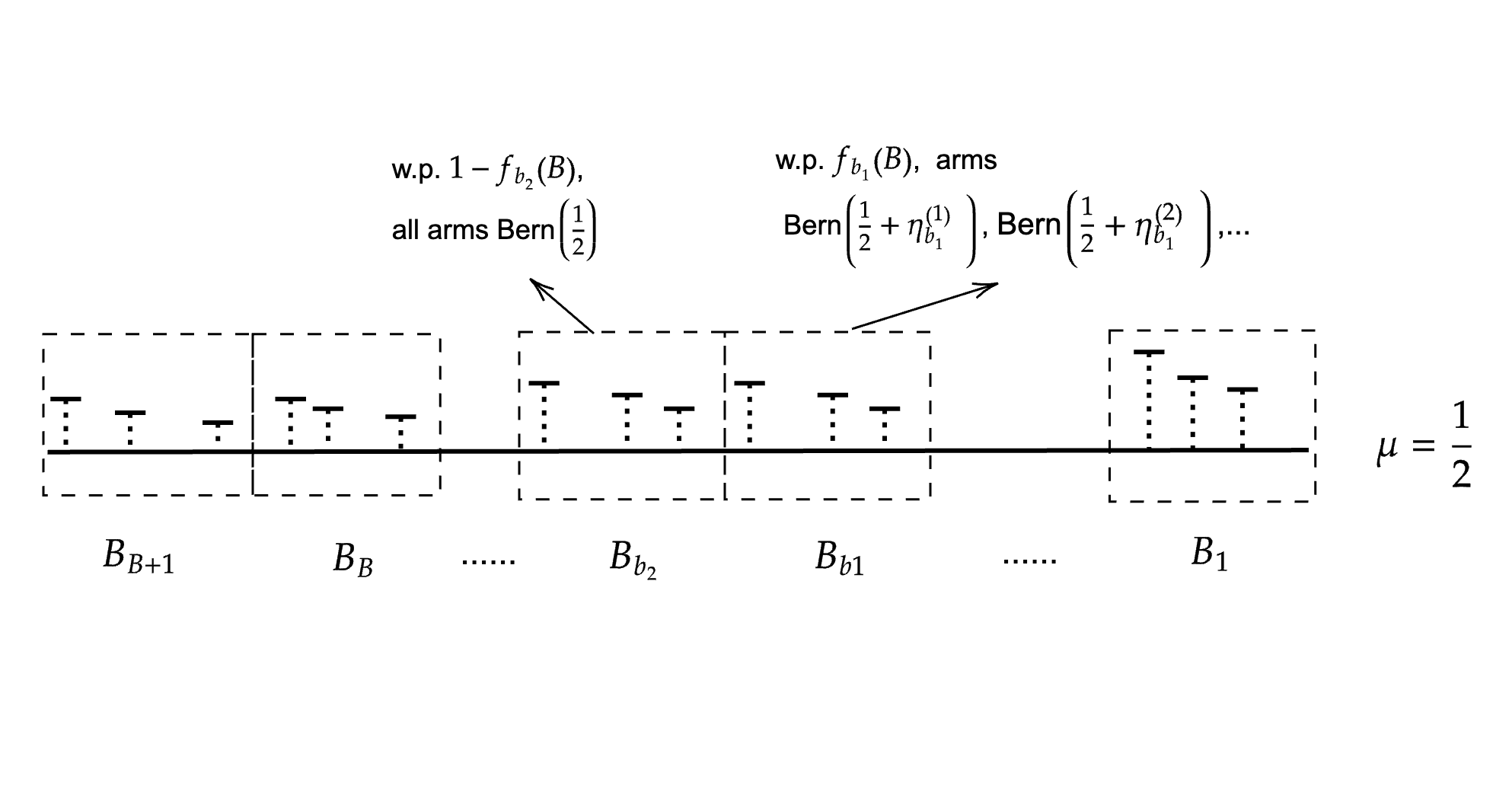}
		\caption{General $(B+1)$-Batched Instance Distribution}
		\label{fig:batched-instance}
	\end{subfigure}%
	
	\begin{subfigure}{0.95\textwidth}
		\centering
		\includegraphics[scale=0.18]{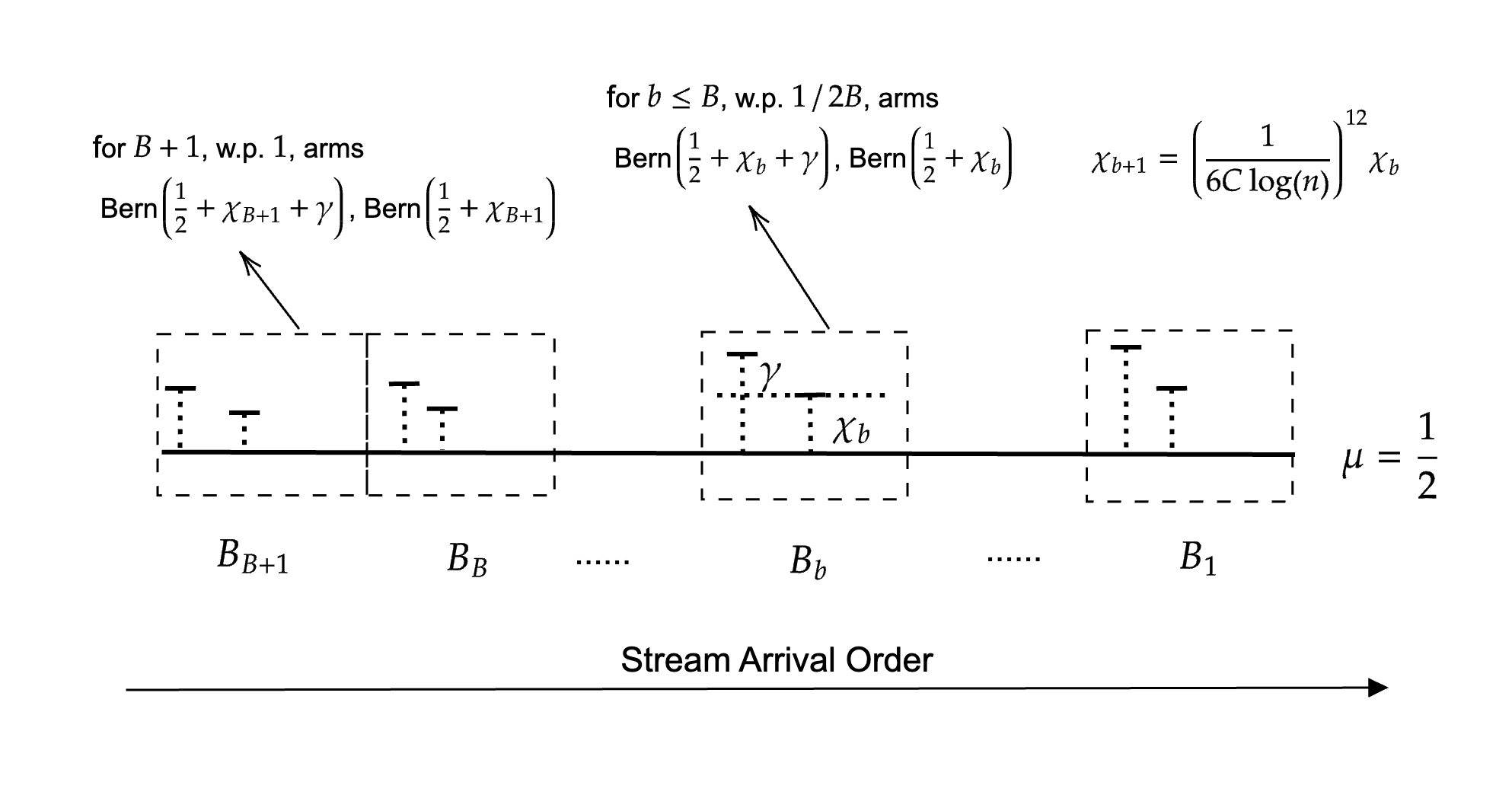}
		\caption{$\cP(B, C, \gamma)$ Instance distribution}
		\label{fig:streaming-adv-instance}
	\end{subfigure}
	\caption{An illustration of the general $(B+1)$-batched instance distribution (\Cref{def:batch-instance}) and the $\cP(B, C, \gamma)$ instance distribution. The mean rewards of arms are ranked in the decrement order from left to right for illustration purposes -- their positions inside the batches are uniformly at random.}
	\label{fig:lb-instance-illus}
\end{figure}

An illustration of the distribution $\cP(B, C, \gamma)$ can be shown as \Cref{fig:streaming-adv-instance}. It is straightforward to observe that the $\cP(B, C, \gamma)$ family follows the $(B+1)$-batched instance as in \Cref{def:batch-instance}. More concretely, in $\cP(B, C, \gamma)$, the arms are divided into $(B+1)$ batches, we have $S=2$ and the values of $\etaib{i}{b}$ as functions of $C$, and the probability functions are $f_{b}(B)=\frac{1}{2B}$ for all $b \in [B]$ and $f_{B+1}(B)=1$. Furthermore, we make the crucial observation that $\Delta_{[2]}$ is invariant across different settings.

\FloatBarrier

\begin{observation}
	\label{obs:Delta-invariate}
	For any instance in $\cP(B, C, \gamma)$, the value of $\Delta_{[2]}$ is equal to $\gamma$. In other words, in $\cP(B, C, \gamma)$, for all $b\in[B+1]$, there is 
	\begin{align*}
		\paren{\Delta_{[2]} \mid \Theta_{<b}=0, \Theta_{b}=1} = \gamma.
	\end{align*} 
\end{observation}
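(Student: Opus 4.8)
The plan is to read the ordered list of arm means directly off the construction, conditioned on the first active batch being some fixed $b\in[B+1]$, i.e. on $\Theta_{<b}=0$ and $\Theta_b=1$ (every instance in the support has such a $b$, since $\Theta_{B+1}=1$ deterministically). I would first record two elementary facts about the parameters. First, the sequence $\chi_1>\chi_2>\dots>\chi_{B+1}$ is strictly decreasing, because $\chi_{b+1}=(\tfrac{1}{12C\log n})^{15}\chi_b$ and $\tfrac{1}{12C\log n}<1$ (since $C\ge 1$ and $n$ is large); in fact $(\tfrac{1}{12C\log n})^{15}\le\tfrac12$. Second, for every $b$ the means $\tfrac12+\chi_b$ and $\tfrac12+\chi_b+\gamma$ lie strictly inside $(0,1)$, since $\chi_b\le\chi_1=n^{1/3}\gamma\le\tfrac1{10}$ and $\gamma\le\tfrac1{10n^{1/3}}$, so all Bernoulli parameters in $\cP(B,C,\gamma)$ are legitimate.

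Next I would describe the arm means under the conditioning. Every arm in batches $1,\dots,b-1$ has mean exactly $\tfrac12$ (as $\Theta_{<b}=0$); batch $b$ contributes exactly two arms, of means $\tfrac12+\chi_b$ and $\tfrac12+\chi_b+\gamma$, its remaining arms having mean $\tfrac12$; and each batch $b'>b$ contributes either all means $\tfrac12$ (if $\Theta_{b'}=0$) or two arms of means $\tfrac12+\chi_{b'}$ and $\tfrac12+\chi_{b'}+\gamma$ with $\chi_{b'}\le\chi_{b+1}<\chi_b$ (if $\Theta_{b'}=1$). Consequently the largest mean over all arms is $\tfrac12+\chi_b+\gamma$, attained uniquely by the second special arm of batch $b$, so $\mu_{\star}=\tfrac12+\chi_b+\gamma$.

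It then remains to pin down the second-largest mean. If $b=B+1$ this is immediate: all earlier batches are inactive, so the only means above $\tfrac12$ are $\tfrac12+\chi_{B+1}$ and $\tfrac12+\chi_{B+1}+\gamma$, the runner-up is $\tfrac12+\chi_{B+1}$, and $\Delta_{[2]}=\gamma$. If $b\le B$, the only two candidates for the second-largest mean are $\tfrac12+\chi_b$ (the first special arm of batch $b$) and $\tfrac12+\chi_{b'}+\gamma$, where $b'>b$ is the next active batch; I would show the former is at least as large, i.e. $\chi_b\ge\chi_{b'}+\gamma$. Using $\chi_{b'}\le\chi_{b+1}=(\tfrac{1}{12C\log n})^{15}\chi_b$, this reduces to $\chi_b\bigl(1-(\tfrac{1}{12C\log n})^{15}\bigr)\ge\gamma$, hence (by the first fact above) to $\chi_b\ge 2\gamma$. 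The latter holds in the parameter regime of the construction, since $\chi_b\ge\chi_{B+1}=(\tfrac{1}{12C\log n})^{15B}\chi_1\ge 2\gamma$ whenever $(12C\log n)^{15B}\le n^{1/3}/4$, which is the case for $B=O(\log n/\log\log n)$ and $\gamma\le\tfrac1{10n^{1/3}}$. Therefore the second-largest mean equals $\tfrac12+\chi_b$ and $\Delta_{[2]}=\mu_{\star}-(\tfrac12+\chi_b)=\gamma$ in this case too, which is the claim.

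The only step that is not pure bookkeeping is the inequality $\chi_b\ge\chi_{b'}+\gamma$ for $b\le B$: one must check that the per-batch multiplicative decay $(\tfrac{1}{12C\log n})^{15}$ — the same gap that will later be used to satisfy condition \textbf{C3} of \Cref{prop:multi-pass-lb} — is mild enough that $\chi_b$ never drops below $\gamma$ before the final batch. This is precisely where the relation among $B$, $C$, $n$, and $\gamma$ built into $\cP(B,C,\gamma)$ is invoked; everything else follows directly from reading the means off the construction (\Cref{fig:streaming-adv-instance}).
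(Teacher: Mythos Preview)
The paper states this as an observation without proof, treating it as self-evident from the construction (\Cref{fig:streaming-adv-instance}). Your argument is correct and supplies exactly the details the paper elides. In particular, you correctly isolate the one step that is not pure bookkeeping: when the earliest active batch is $b\le B$, one must check that the first special arm of batch $b$ (mean $\tfrac12+\chi_b$) still dominates the second special arm of every later active batch $b'$ (mean $\tfrac12+\chi_{b'}+\gamma$), which you reduce to $\chi_b\ge 2\gamma$. This is precisely the content of the paper's later calculation in \Cref{equ:a-B-lower-bound}, where it is shown that $\chi_b\ge\gamma\cdot n^{1/5}$ once $B=\Theta(\log n/\log\log n)$; so the paper implicitly relies on the same inequality, just deferred to the proof of \Cref{lem:hard-B-dist}. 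Your closing remark is also on point: the observation as literally stated does not hold for arbitrary $B$, only in the parameter regime $B=O(\log n/\log\log n)$ that the construction is actually instantiated with.
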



We now use $\cP(B, C, \gamma)$ to state our main multi-pass lower bound. 
\begin{theorem}[Formalization of \Cref{rst:main-lb}]
	\label{thm:lb-main}
	There exists a family of streaming MABs instances $\cP$, such that any streaming algorithm (deterministic or randomized) that given the quantity of $\Delta_{[2]}$, finds the best arm from an instance sampled from $\cP$ with an \emph{expected} sample complexity of $O\paren{\sum_{i=2}^{n} \frac{1}{\Delta^2_{[i]}} \cdot \log(n)}$, a success probability of at least $1999/2000$, and a memory of $o\paren{n/\log^3 {n}}$ arms has to make $\Omega\paren{\frac{\log(n)}{\log\log(n)}}$ passes over the stream. 
\end{theorem}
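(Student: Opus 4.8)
The plan is to prove \Cref{thm:lb-main} by invoking the batched-instance lower-bound framework of \Cref{prop:multi-pass-lb} on the family $\cP(B,C,\gamma)$ with $B=\Theta(\log n/\log\log n)$, with $C$ a positive integer of size $\mathrm{poly}(\log\log n)$, and with $\gamma\in[\tfrac{1}{20}n^{-1/3},\tfrac{1}{10}n^{-1/3}]$, relabeling the two special offsets of batch $b$ in decreasing order as $\etaib{1}{b}=\chi_b+\gamma\ge\etaib{2}{b}=\chi_b$. By \Cref{obs:Delta-invariate}, $\Delta_{[2]}=\gamma$ on the \emph{whole} support of $\cP(B,C,\gamma)$, so being told $\Delta_{[2]}$ is the same as being told the constant $\gamma$, which an algorithm may as well hard-code; hence it suffices to prove the pass lower bound for plain algorithms on $\cP(B,C,\gamma)$, to which \Cref{prop:multi-pass-lb} applies verbatim. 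As a preliminary I would fix the scales: unrolling $\chi_{b+1}=(12C\log n)^{-15}\chi_b$ gives $\chi_{B+1}=(12C\log n)^{-15B}n^{1/3}\gamma$, and taking the hidden constant in $B$ small enough makes $15B\log(12C\log n)\le\tfrac{1}{3}\log n$, so $\chi_{B+1}\ge\gamma$. Consequently $\gamma\le\chi_b\le\chi_1=n^{1/3}\gamma\le\tfrac{1}{10}$ for every $b$, which puts $\etaib{1}{b}$ in $(0,\tfrac{1}{6})$, gives $\etaib{2}{b}=\chi_b\le\etaib{1}{b}$, secures the hypothesis $\beta=\gamma\le\alpha=\chi_b$ of \Cref{lem:batch-arm-learning}, and certifies that $\cP(B,C,\gamma)$ is a legitimate $(B+1)$-batched distribution per \Cref{def:batch-instance} ($S=2$, $f_b\equiv\tfrac{1}{2B}$ for $b\le B$, $f_{B+1}\equiv1$).

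Next I would verify the three structural conditions of \Cref{prop:multi-pass-lb}. Conditioned on $\Theta_b=1$, batch $\mathcal{B}_b$ of $\cP(B,C,\gamma)$ is exactly the $k=n/(B+1)$-arm instance of \Cref{lem:arm-trapping}/\Cref{lem:batch-arm-learning} with $\alpha=\chi_b$, $\beta=\gamma$ and thus $\alpha+\beta=\etaib{1}{b}$. So \textbf{C1} is precisely \Cref{lem:arm-trapping}, and \textbf{C2} is precisely \Cref{lem:batch-arm-learning} applied with $\rho$ set to whatever conditional probability $\nu$ the algorithm currently places on $\Theta_b=1$ --- legitimate because $\Theta_1,\dots,\Theta_{B+1}$ are independent, so the algorithm's knowledge of the other batches leaves the structure of $\mathcal{B}_b$ untouched --- after a harmless rescaling of numerical constants. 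For \textbf{C3}: since $B\le\log n$ we have $6CB\le12C\log n$, hence for $p>b$, $\chi_p/\chi_b\le\chi_{b+1}/\chi_b=(12C\log n)^{-15}\le(6CB)^{-15}$; combining with $\chi_p\le\etaib{1}{p}\le2\chi_p$ and $\gamma\le\chi_p$ yields $\etaib{1}{p}\le2(6CB)^{-15}\etaib{1}{b}$, and the stray factor of $2$ is absorbed by shrinking the base of the geometric recursion by a constant (folded into $C$).

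The conceptual heart --- and the step I expect to be the main obstacle, since it is where the choice of $B$, $C$, and $\gamma$ must all be made consistent --- is showing that the \emph{instance-optimal} budget $O(\sum_{i=2}^n\Delta_{[i]}^{-2}\log n)$ collapses to the worst-case shape demanded by \eqref{equ:batch-sample-ub}. Condition on $\Theta_{<b}=0$, $\Theta_b=1$. Since $\chi$ is decreasing and $\chi_{B+1}\ge\gamma$, the best arm is the larger special arm of $\mathcal{B}_b$ (mean $\tfrac{1}{2}+\chi_b+\gamma$), its sibling gives $\Delta_{[2]}=\gamma$, each of the $n-2(B+1)$ flat arms has gap exactly $\chi_b+\gamma$, and each remaining gap --- contributed by one of the $\le2(B+1)$ special arms of batches $b'\ge b$ --- equals $\chi_b-\chi_{b'}$ or $\chi_b+\gamma-\chi_{b'}$, which is $\ge\chi_b/2\ge\gamma/2$. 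Hence, for \emph{every} realization consistent with the conditioning,
\[
\sum_{i=2}^n\frac{1}{\Delta_{[i]}^2}\;\le\;O\!\Paren{\frac{B}{\gamma^2}}+\frac{n}{\chi_b^2}\;\le\;O\!\Paren{\frac{n}{\chi_b^2}},
\]
using $\chi_b\le\chi_1=n^{1/3}\gamma$ (so $1/\gamma^2\le n^{1/3}/\chi_b^2$) and $B\ll n^{1/3}$. Averaging over realizations, the per-instance expected-sample-complexity bound $O(\sum_{i\ge2}\Delta_{[i]}^{-2}\log n)$ then yields $\expect{\smp\mid\Theta_{<b}=0,\Theta_b=1}=O(n\log n/\chi_b^2)\le C B^2 n/(\etaib{1}{b})^2$, where the last inequality uses $\etaib{1}{b}\le2\chi_b$ and $CB^2=\Omega(\log n)$, which holds because $B=\Theta(\log n/\log\log n)$ makes $\log n/B^2\to0$ (so $C=1$ already suffices in the deterministic case). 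This is exactly \eqref{equ:batch-sample-ub}.

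Finally I would assemble the contradiction. For $n$ large, the memory bound $o(n/\log^3 n)$ implies the bound $\tfrac{1}{30000}\,n/B^3$ of \Cref{prop:multi-pass-lb} (because $B\le\log n$). Assume some (possibly randomized) algorithm meeting the hypotheses of \Cref{thm:lb-main} uses $P\le B$ passes. If randomized, a routine averaging over its coins --- fix the coins, apply Markov's inequality to the $\cP(B,C,\gamma)$-failure probability and to the $B+1$ conditional sample bounds, then union-bound --- produces a \emph{deterministic} algorithm still using $\le B$ passes and $o(n/\log^3 n)$ memory, still satisfying \eqref{equ:batch-sample-ub} after inflating $C$ by an $O(B)$ factor (still $\mathrm{poly}(\log\log n)$, hence still compatible with \textbf{C3} and $\chi_{B+1}\ge\gamma$), and succeeding on $\cP(B,C,\gamma)$ with probability $\ge999/1000$ --- contradicting the conclusion of \Cref{prop:multi-pass-lb}. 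Therefore every valid algorithm uses $P>B=\Omega(\log n/\log\log n)$ passes, proving \Cref{thm:lb-main}. The delicate point throughout is that \textbf{C3} and \eqref{equ:batch-sample-ub} jointly constrain $C$ and the ratio $\chi_{b+1}/\chi_b$ against $B$, while $\chi_{B+1}\ge\gamma$ then forces $B=\Theta(\log n/\log\log n)$ --- the very bound that resurfaces in the conclusion.
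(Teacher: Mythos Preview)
Your plan is correct and matches the paper's proof: instantiate $\cP(B,C,\gamma)$ with $B=\Theta(\log n/\log\log n)$, verify \textbf{C1}--\textbf{C3} from \Cref{lem:arm-trapping,lem:batch-arm-learning} and the geometric recursion on $\chi_b$, collapse the instance-optimal budget to \eqref{equ:batch-sample-ub} via $1/\gamma^2=O\!\bigl(n/(\etaib{1}{b})^2\bigr)$, and finish with \Cref{prop:multi-pass-lb} plus \Cref{obs:Delta-invariate} and Yao. Two small slips in the write-up: (i) $\chi_b\le n^{1/3}\gamma$ gives $1/\gamma^2\le n^{2/3}/\chi_b^2$, not $n^{1/3}$ (the conclusion $B/\gamma^2=o(n/\chi_b^2)$ still holds since $B\ll n^{1/3}$); (ii) an $O(B)$ factor is not $\text{poly}(\log\log n)$ --- the correct bookkeeping is that the $O(B)$ Markov/union-bound loss in the conditional sample bounds is absorbed by choosing $C=\Theta(\log\log n)$ from the outset (so that $CB^2\ge O(B\log n)$), which your opening sentence already allows and which still preserves \textbf{C3} and $\chi_{B+1}\ge\gamma$.
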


To prove \Cref{thm:lb-main}, the rest of this section is dedicated to two parts. We first show that the family of $\cP(B, C, \gamma)$ \emph{with $B=\Theta(\frac{\log n}{\log\log n})$} satisfies the conditions characterized by \Cref{prop:multi-pass-lb}. As such, to ensure the success probability is high, any algorithm must break the sample upper bound of \Cref{equ:batch-sample-ub}. Subsequently, we show that to keep the expected sample complexity of $O(\sum_{i=2}^{n}\frac{1}{\Delta^2_{[i]}})$, 
the sampling upper bound of \Cref{equ:batch-sample-ub} has to be satisfied, which forms a contradiction for the proof of \Cref{thm:lb-main}.

More concretely, the first part of the argument can be summarized as \Cref{lem:hard-B-dist}. (Note that in the lemma, we do not assume the knowledge of $\Delta_{[2]}$, as we will deal with it in the proof of \Cref{thm:lb-main} later.)
\begin{lemma}
	\label{lem:hard-B-dist}
	Let $C\geq 1$ be a fixed integer and $\gamma \leq \frac{1}{10}\cdot \frac{1}{n^{1/3}}$ be a real number. Let $B=\frac{1}{100C}\cdot \frac{\log n}{\log\log(n)}$, and let $\ALG$ be any deterministic $P$-pass streaming algorithm such that $P\leq B$. Suppose that $\ALG$ uses a memory of at most $\frac{1}{30000}\cdot \frac{n}{B^3}$ arms. 
	Additionally, suppose on instances of distribution $\cP(B, C, \gamma)$ and every $b \in [B+1]$, $\ALG$ satisfies:
	\[
	\Exp\bracket{\smp \mid\Theta_{b}=1, \Theta_{<b}=0} \leq C \cdot B^2 \cdot \frac{n}{(\chi_{b}+\gamma)^2}, 
	\]
	where the randomness is taken over the choice of the instance $I \sim \cP(B, C, \gamma) \mid \Theta_{b}=1, \Theta_{<b}=0$. 
	Then, the probability that $\ALG$ can output the best arm for $I \sim \cP(B, C, \gamma)$ is strictly less than $999/1000$.  
\end{lemma}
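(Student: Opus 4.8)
The plan is to verify that the family $\cP(B,C,\gamma)$ with $B=\Theta(\log n/\log\log n)$ satisfies the three conditions \textbf{C1}, \textbf{C2}, \textbf{C3} of \Cref{prop:multi-pass-lb}, and then invoke that proposition directly. The key structural observations are that $\cP(B,C,\gamma)$ is a $(B+1)$-batched instance distribution in the sense of \Cref{def:batch-instance} with $S=2$, $\etaib{1}{b}=\chi_b$, $\etaib{2}{b}=\chi_b+\gamma$, and probability functions $f_b(B)=\tfrac{1}{2B}$ for $b\le B$ and $f_{B+1}(B)=1$ — exactly the form required by \Cref{prop:multi-pass-lb}. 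One subtlety: \Cref{prop:multi-pass-lb} assumes w.l.o.g.\ that $\etaib{1}{b}\ge\etaib{s}{b}$, so in applying it I would relabel so that the ``$\etaib{1}{b}$'' of the proposition corresponds to our \emph{larger} gap $\chi_b+\gamma$; since $\gamma\le\chi_b$ (which follows from $\chi_1=n^{1/3}\gamma\gg\gamma$ and $\chi_b$ decreasing only polylogarithmically), we have $\chi_b\le\chi_b+\gamma\le 2\chi_b$, so up to constant factors it does not matter which of $\chi_b$ or $\chi_b+\gamma$ appears in the bounds, and I will use this freely.

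First I would establish \textbf{C1} and \textbf{C2}. These are precisely the ``trapping'' and ``learning'' statements I proved in \Cref{lem:arm-trapping} and \Cref{lem:batch-arm-learning}, applied to a single batch of size $k=n/(B+1)$ with the two special arms having gaps $\alpha=\chi_b$ and $\alpha+\beta=\chi_b+\gamma$ (so $\beta=\gamma\le\alpha=\chi_b$, as required by those lemmas), and with $\alpha+\beta=\chi_b+\gamma<\tfrac16$ which holds since $\chi_1=n^{1/3}\gamma\le n^{1/3}\cdot\tfrac{1}{10n^{1/3}}=\tfrac{1}{10}$ and $\gamma$ is tiny. For \textbf{C1}: \Cref{lem:arm-trapping} says that outputting $\tfrac{\tau k}{40}$ arms containing a reward-$>\tfrac12$ arm with probability $\ge\tau$ requires $\tfrac{1}{600}\cdot\tfrac{\tau^3}{(\chi_b+\gamma)^2}\cdot k$ pulls; contrapositively, with fewer than $\tfrac{1}{700}\cdot\tfrac{\tau^3}{(\etaib{1}{b})^2}\cdot\tfrac{n}{B+1}$ pulls (noting $(\etaib{1}{b})^2 = (\chi_b+\gamma)^2$ under the relabeling, and $\tfrac{1}{700}<\tfrac{1}{600}$) the success probability is $<\tau$ — this is exactly \textbf{C1}. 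For \textbf{C2}: \Cref{lem:batch-arm-learning} directly gives the posterior-stability conclusion with the same parameters, matching \textbf{C2} verbatim (with $\nu$ in place of $\rho$; I should double-check that the lemma's proof only uses $\rho\in(0,\tfrac12)$ as a bound on the prior probability, which it does). The matching of the constants $\tfrac{1}{700}$, $\tfrac{1}{200}$ in \Cref{prop:multi-pass-lb} with the $\tfrac{1}{600}$, $\tfrac{1}{200}$ in my lemmas is the routine bookkeeping here.

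The heart of the argument is verifying \textbf{C3}, the ``sufficiently large gap between batches'' condition: for $p>b$, $\etaib{1}{p}\le(\tfrac{1}{6CB})^{15}\cdot\etaib{1}{b}$. By construction $\chi_{b+1}=(\tfrac{1}{12C\log n})^{15}\chi_b$, so $\chi_p\le(\tfrac{1}{12C\log n})^{15}\chi_b$ for any $p>b$ (the per-step shrinkage is at least this factor, and more steps only help). Under the relabeling $\etaib{1}{b}\asymp\chi_b$, I need $(\tfrac{1}{12C\log n})^{15}\lesssim(\tfrac{1}{6CB})^{15}$ up to the constant-factor slack between $\chi_p$ and $\chi_p+\gamma$ versus $\chi_b$ and $\chi_b+\gamma$. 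Since $B=\tfrac{1}{100C}\cdot\tfrac{\log n}{\log\log n}\le\log n$ for large $n$, we get $6CB\le 6C\log n\le 12C\log n$, hence $(\tfrac{1}{12C\log n})^{15}\le(\tfrac{1}{6CB})^{15}$, and the extra constant-factor slack ($\chi_p+\gamma\le 2\chi_p$, $\chi_b\le\chi_b+\gamma$, giving a factor of at most $2$) is absorbed by the generous $15$th power — one needs the base ratio $\tfrac{12C\log n}{6CB}=\tfrac{2\log n}{B}=200\log\log n\ge 2^{1/15}\cdot(\text{slack})$, which holds comfortably. This is the step I expect to be the main obstacle: it requires being careful about whether the $15$th-power gap in the construction, after accounting for the relabeling of $\etaib{1}{b}$ vs.\ $\etaib{2}{b}$ and the $O(1)$ distortions this introduces, still dominates $(\tfrac{1}{6CB})^{15}$ — but the design of $\chi_{b+1}$ with the $(12C\log n)^{-15}$ factor (versus the required $(6CB)^{-15}$) was chosen precisely to leave this margin.

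Finally, having checked \textbf{C1}--\textbf{C3}, and noting that the memory bound $\tfrac{1}{30000}\cdot\tfrac{n}{B^3}$ and the pass bound $P\le B$ and the sample hypothesis $\Exp[\smp\mid\Theta_b=1,\Theta_{<b}=0]\le C B^2\cdot\tfrac{n}{(\chi_b+\gamma)^2}$ in \Cref{lem:hard-B-dist} are exactly the hypotheses of \Cref{prop:multi-pass-lb} (again using $(\etaib{1}{b})^2\asymp(\chi_b+\gamma)^2$, so the sample bound $C\cdot B^2\cdot\tfrac{n}{(\etaib{1}{b})^2}$ of the proposition is matched up to the constant $C$ being allowed to absorb the factor-$4$ slack — here I should state the lemma's $C$ and the proposition's $C$ consistently, or note that replacing $C$ by $4C$ throughout is harmless since $B=\Theta(\log n/\log\log n)$ regardless of the constant), \Cref{prop:multi-pass-lb} yields that $\ALG$ returns the best arm with probability strictly less than $\tfrac{999}{1000}$. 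This is precisely the conclusion of \Cref{lem:hard-B-dist}, completing the proof.
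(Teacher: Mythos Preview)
Your approach is essentially identical to the paper's: verify conditions \textbf{C1}, \textbf{C2}, \textbf{C3} of \Cref{prop:multi-pass-lb} (using \Cref{lem:arm-trapping} for \textbf{C1} and \Cref{lem:batch-arm-learning} for \textbf{C2}) and then invoke the proposition. You correctly identify the relabeling $\etaib{1}{b}=\chi_b+\gamma$, and with that relabeling the sample hypothesis of the lemma matches the sample hypothesis of the proposition \emph{exactly}, so the ``factor-$4$ slack'' you worry about in the last paragraph is not actually there.

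There is one genuine slip in your \textbf{C3} verification. You write that $\gamma\le\chi_b$ ``follows from $\chi_1=n^{1/3}\gamma\gg\gamma$ and $\chi_b$ decreasing only polylogarithmically.'' But the per-step shrinkage is $(12C\log n)^{-15}$ and there are $B=\Theta(\log n/\log\log n)$ steps, so the cumulative shrinkage from $\chi_1$ to $\chi_{B+1}$ is $(12C\log n)^{-15B}=n^{-\Theta(1/C)}$ --- polynomial in $n$, not polylogarithmic. The conclusion you need (indeed the stronger $\chi_b\ge\gamma\cdot n^{1/5}$ for every $b\le B+1$) is nonetheless true, because the total shrinkage is at most $n^{15/(100C)+o(1)}\le n^{0.15+o(1)}$, which is dominated by the $n^{1/3}$ head start in $\chi_1$; the paper carries out precisely this computation explicitly (its \Cref{equ:a-B-lower-bound}). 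Your bound $\chi_p+\gamma\le 2\chi_p$ for $p$ near $B+1$ relies on this numerical fact, so the hand-wave needs to be replaced by that calculation. Once that is done, the rest of your argument for \textbf{C3} (using $6CB\le 12C\log n$ and absorbing the factor $2$ into the $15$th-power margin) goes through and matches the paper's.
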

\begin{proof}
	We prove the lemma by showing that the distribution $\cP(B, C, \gamma)$ satisfied the conditions prescribed by \Cref{prop:multi-pass-lb}, which will allow us to directly use the conclusion therein. To this end, we verify \textbf{C1}, \textbf{C2}, and \textbf{C3}, respectively:
	\begin{itemize}
		\item Condition \textbf{C1}. We use \Cref{lem:arm-trapping} to argue this property. For any batch $b\in [B+1]$, we use $\alpha=\chi_{b}$ and $\beta=\gamma$. Since we set $\gamma\leq \frac{1}{10}\cdot \frac{1}{n^{1/3}}$, there is clearly $\chi_{b}+\gamma<\frac{1}{6}$. Furthermore, we let $k=\frac{n}{B+1}$ as the number of arms in each batch. Suppose for the purpose of contradiction that \textbf{C1} does \emph{not} hold. By our construction, we have $\etaib{1}{b}=\chi_{b}+\gamma$, and the assumption implies an algorithm that
		\begin{enumerate}
			\item uses at most $\frac{1}{700}\cdot \frac{\tau^3}{(\chi_{b}+\gamma)^2}\cdot \frac{n}{B+1}<\frac{1}{600}\cdot\frac{\tau^3}{(\alpha+\beta)^2}\cdot k$ arm pulls;
			\item outputs a collection of $\frac{1}{20}\frac{\tau n}{B+1}=\frac{\tau \cdot k}{20}$ arms, in which contains an arm with reward strictly more than $\frac{1}{2}$ with probability at least $\tau$,
		\end{enumerate}
		which forms a contradiction with \Cref{lem:arm-trapping}. Therefore, the condition \textbf{C1} has to be satisfied.
		
		\item Condition \textbf{C2}. We use \Cref{lem:batch-arm-learning} to verify this property. Again, we use $\alpha=\alpha_{b}$ and $\beta=\gamma$. Furthermore, let $\mathcal{E}$ be any event we want to condition on, and we let $\nu=\Pr\paren{\Theta_{b}=1\mid \mathcal{E}}$ be the probability for the distribution in the \emph{yes} case from the algorithm's internal view. Now, to use \Cref{lem:batch-arm-learning}, we simply set $\rho=\nu$, and if condition \textbf{C2} is not satisfied, the output distribution of the transcripts will violate \Cref{lem:batch-arm-learning}. Thus, condition \textbf{C2} must be followed in $\cP(B, C, \gamma)$.
		
		\item Condition \textbf{C3}. Note that we have $B= \frac{1}{100C}\cdot \frac{\log n}{\log\log n}$. As a result, we also have 
		\begin{equation}
			\label{equ:a-B-lower-bound}
			\begin{aligned}
				\chi_{b}\geq \chi_{B} & = \gamma \cdot n^{1/3} \cdot \paren{\frac{1}{12 C\log n}}^{\frac{10\log n}{100 C\log\log(n)}}\\
				&= \gamma \cdot n^{1/3} \cdot \paren{\frac{1}{12C}}^{\frac{\log n}{10C \log\log(n)}} \cdot \paren{\frac{1}{\log{n}}}^{\frac{\log n}{10C \log\log(n)}}\\
				& \geq \gamma \cdot n^{1/3}\cdot \paren{\frac{1}{2}}^{^{\frac{\log n}{\log\log(n)}}}\cdot \paren{\frac{1}{\log n}}^{\frac{\log n}{10C \log\log(n)}}\\
				& = \gamma \cdot n^{1/3}\cdot \frac{1}{n^{1/10C + o(1)}}\\
				& \geq \gamma\cdot n^{1/5},
			\end{aligned}
		\end{equation}
		where the second inequality if because $(\frac{1}{12C})^{\frac{1}{10C}}\geq \frac{1}{2}$ for any $C\geq 1$.
		Therefore, we have $\gamma\leq n^{-1/5}\chi_{b}$ for any choice of $\gamma$. As such, for any $r>b$, there is 
		\begin{align*}
			\frac{\etaib{1}{r}}{\etaib{1}{b}} &\leq \frac{\chi_{b+1}+\gamma}{\chi_{b}+\gamma}\\
			&\leq \frac{\chi_{b+1}\cdot \log n}{\chi_{b}} \tag{by $\gamma\leq n^{-1/5}\chi_{b}$ for sufficiently large $n$}\\
			&\leq \paren{\frac{1}{12 C \log(n)}}^{15} \\
			&\leq \paren{\frac{1}{6 B C}}^{15}, \tag{by $B\leq \log{n}$}
		\end{align*}
		which verifies the validity of condition \textbf{C3}.
	\end{itemize} 
	Finally, we observe that the memory and sample bound in \Cref{lem:hard-B-dist} matches the bound for deterministic algorithms in \Cref{prop:multi-pass-lb}, concluding the proof. 
\end{proof}

We are now ready to wrap up the proof of our main lower bound of \Cref{thm:lb-main}.

\begin{proof}[Proof of \Cref{thm:lb-main}]
	We focus on deterministic algorithms in the proof with success probability $\frac{999}{1000}$. The lower bound for randomized algorithms can be obtained by an application of Yao's minimax principle.
	
	We first deal with algorithms that do \emph{not} have the \emph{a priori} knowledge of $\Delta_{[2]}$. Assume the purpose of contradiction that there exists a $P$-pass streaming algorithm that uses
	\begin{enumerate}
		\item A memory of at most $\frac{1}{20000}\cdot \frac{n}{\log^3 n}$ arms;
		\item A success probability of at least $\frac{999}{1000}$;
		\item A sample complexity of $C' \cdot \log(n)\cdot \sum_{i=2}^{n}\frac{1}{\Delta^2_{[i]}}$;
		\item The number of passes $P$ satisfies $P\leq \frac{1}{100C}\cdot \frac{\log(n)}{\log\log(n)}$.
	\end{enumerate}
	Then, we can pick $C=2C'$ to construct the hard family $\mathcal{P}(\frac{1}{100C}\cdot \frac{\log(n)}{\log\log(n)}, C, \gamma)$ (pick any suitable $\gamma$). Observe that following the same calculation of \Cref{equ:a-B-lower-bound} and by the property of the distribution, for any $b\in[B+1]$, there is
	\begin{align*}
		\chi_{b}\geq \chi_{B} & = \gamma \cdot n^{1/3} \cdot \paren{\frac{1}{12 C\log n}}^{\frac{10\log n}{100 C\log\log(n)}}\\
		&= \gamma \cdot n^{1/3} \cdot \paren{\frac{1}{12C}}^{\frac{\log n}{10C \log\log(n)}} \cdot \paren{\frac{1}{\log{n}}}^{\frac{\log n}{10C \log\log(n)}}\\
		& \geq \gamma \cdot n^{1/3}\cdot \paren{\frac{1}{2}}^{^{\frac{\log n}{\log\log(n)}}}\cdot \paren{\frac{1}{\log n}}^{\frac{\log n}{10C \log\log(n)}}\\
		& = \gamma \cdot n^{1/3}\cdot \frac{1}{n^{1/10C + o(1)}}\\
		& \geq \gamma\cdot n^{1/5}.
	\end{align*}
	Furthermore, by the upper and lower bound on $\gamma$, we have that for $b\in[B+1]$, there is
	\begin{align*}
		\frac{1}{\gamma^2} &\leq 400 \cdot n^{2/3} \tag{by $\gamma\geq \frac{1}{20}\cdot \frac{1}{n^{1/3}}$}\\
		&\leq 50 \cdot n \tag{for sufficiently large $n$}\\
		&\leq \frac{n}{(\chi_{1}+\gamma)^2} \tag{$\chi_{1}\leq \frac{1}{10}$ and $\gamma \leq \frac{1}{n^{1/5}}\chi_{1}$}\\
		&\leq \frac{n}{(\chi_{b}+\gamma)^2} = \frac{n}{(\etaib{1}{b})^2}. \tag{$\chi_{b}\leq \chi_{1}$} 
	\end{align*}
	As such, it is straightforward to see that
	\begin{align*}
		\paren{C'\cdot \log(n) \cdot \sum_{i=2}^{n}\frac{1}{\Delta^2_{[i]}} \middle| \Theta_{b}=1, \Theta_{<b}=0} &\leq C'\cdot \log(n) \cdot \left(\frac{1}{\gamma^2} + \frac{n}{(\etaib{1}{b})^2}\right)\\
		&\leq 2 C'\cdot \log(n)\cdot \frac{n}{(\etaib{1}{b})^2}.
	\end{align*}
	Therefore, if a deterministic algorithm $\ALG$ satisfies: 
	\[\Exp\bracket{\smp} \leq C' \cdot \log(n)\cdot \sum_{i=2}^{n}\frac{1}{\Delta^2_{[i]}},\] it implies that the algorithm has sample complexity of 
	\[\Exp\bracket{\smp \mid \Theta_{b}=1, \Theta_{<b}=0}\leq C \cdot \log(n) \cdot \frac{n}{(\etaib{1}{b})^2}.\]
	This leads to a contradiction with \Cref{lem:hard-B-dist}, which implies that such a streaming algorithm cannot exist.
	
	Finally, to complete the proof, we note that on the constructed family $\mathcal{P}(\frac{1}{100C}\cdot \frac{\log(n)}{\log\log(n)}, C, \gamma)$, there is always $\Delta_{[2]}=\gamma$ by \Cref{obs:Delta-invariate}. As such, if we have a streaming algorithm $\widetilde{\ALG}$ that only works with a \emph{known} $\Delta_{[2]}$, we can simulate the algorithm without this prior knowledge by running $\widetilde{\ALG}$ as an inner streaming algorithm with parameter $\Delta_{[2]}=\gamma$. The contradiction still holds, concluding the proof.
\end{proof}

\begin{remark}
	By Remark 5.8 of \cite{AW23BestArm}, the $B^2$ term in \Cref{equ:batch-sample-ub} could be made to $B^C$ for any fixed constant $C$ with larger gaps between $\etaib{1}{b}$ values for different $b\in[B]$. Therefore, we could strenghthen the sample complexity lower bound in \Cref{thm:lb-main} to $O\paren{\sum_{i=2}^{n} \frac{1}{\Delta^2_{[i]}} \cdot \polylog(n)}$.
\end{remark}


\section{Upper Bound: A Multi-pass Pure Exploration Streaming MABs Algorithm with Known $\Delta_{[2]}$}\label{sec:basic}
A natural question to follow from our lower bound in \Cref{sec:lb-main} is whether this bound is tight.
In this section, we show our main upper bound result that nearly matches our lower bound in \Cref{sec:lb-main}. In particular, we prove the following theorem.
\begin{theorem}[Formalization of \Cref{rst:main-ub}]
	\label{thm:basic}
	For any $P\geq 1$, there exists a $(P + 1)$-pass streaming algorithm that given a streaming MABs instance and a known value of $\Delta_{[2]}$, finds the best arm with probability at least $1-\delta$ with a single-arm memory and at most \[O\left(\log \left(\frac{n P}{\delta}\right) \sum_{i = 2}^n \frac{n^{2/P}}{\Delta^2_{[i]}}\right)\]
	arm pulls. 
\end{theorem}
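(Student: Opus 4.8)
The plan is to give a streaming implementation of the classical exponential‑gap elimination algorithm of \cite{KarninKS13}, crucially exploiting that the model charges only for the number of \emph{arms} kept in memory, while the transcript --- in particular the pull count and empirical mean of every arm seen so far, together with a ``still‑alive'' flag --- may be stored for free. I would use $P+1$ passes with a geometric schedule of elimination gaps $\epsilon_1 > \epsilon_2 > \dots > \epsilon_{P+1}$, where $\epsilon_p := \tfrac12\,\Delta_{[2]}\cdot n^{(P+1-p)/P}$, so that consecutive gaps differ by a factor $n^{1/P}$, $\epsilon_1 = \Theta(\Delta_{[2]}n)$ exceeds the largest possible mean gap, and $\epsilon_{P+1} = \tfrac12\Delta_{[2]}$; set $N_p := \Theta(\epsilon_p^{-2}\log(nP/\delta))$. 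During pass $p$ the algorithm keeps exactly one arm, the running champion $c$, in memory: at the start of the pass it pulls $c$ up to a total of $N_p$ samples (it is in memory, hence can be pulled at will); then, as each arm $a$ arrives, it skips $a$ if $a$ is flagged eliminated, and otherwise pulls $a$ up to $N_p$ samples and, if the resulting $N_p$‑sample mean $\hat\mu_a$ exceeds $\hat\mu_c$, discards $c$ from memory and stores $a$ as the new champion. After the last arm of pass $p$, the algorithm flags as eliminated every arm $a$ (read off from the stored statistics) whose $N_p$‑sample mean is below $\hat\mu_c - \epsilon_p$, where $c$ is now the final champion of the pass. It returns the champion after pass $P+1$; the value $\Delta_{[2]}$ is used only to fix the schedule (any lower bound on it would work with $\Delta_{[2]}$ replaced by that bound).

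For correctness, condition on the clean event that in every pass $p$ and for every arm the $N_p$‑sample empirical mean is within $\epsilon_p/4$ of the true mean; by Hoeffding (\Cref{lem:chernoff}) and a union bound over the at most $n(P+1)$ (arm, pass) pairs this holds with probability $\ge 1-\delta$ once the constant in $N_p$ is chosen appropriately. On the clean event $\armstar$ is never flagged eliminated, since its $N_p$‑sample mean can never fall a full $\epsilon_p$ below another $N_p$‑sample mean, so $\armstar$ is still processed in every pass. Because all arms processed in pass $p$ are compared at the common sample size $N_p$, the champion at the end of pass $p$ is exactly the alive arm of maximum $N_p$‑sample mean, hence has true mean $\ge \mu_\star - \epsilon_p/2$. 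This invariant (a) guarantees the end‑of‑pass test never eliminates an arm of mean exceeding $\mu_\star - \epsilon_p/2$ --- in particular never $\armstar$ --- and (b) for $p=P+1$ gives a champion of mean $> \mu_\star - \Delta_{[2]}$, forcing the output to be $\armstar$.

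For the sample complexity, I would charge the pulls of pass $p$ to the arms processed in it and bound, for each arm $\arm_i$, the last pass in which it is pulled. By the invariant, an arm $\arm_i$ alive at the end of pass $p$ satisfies $\Delta_i \le 2\epsilon_p$ (its $N_p$‑sample mean lies within $\epsilon_p/4$ of $\mu_i$ and within $\epsilon_p$ of the champion's, whose mean is $\ge \mu_\star - \epsilon_p/2$); hence $\arm_i$ is flagged eliminated no later than the first pass $p_i$ with $\epsilon_{p_i} < \Delta_i/2$, and it is pulled only in passes $1,\dots,p_i$, by $N_q - N_{q-1}$ samples in pass $q$, a telescoping total of at most $N_{p_i}$. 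Since consecutive gaps differ only by $n^{1/P}$, $\epsilon_{p_i} \ge \Delta_i/(2n^{1/P})$, so the pulls on $\arm_i$ total $O\!\big(n^{2/P}\Delta_i^{-2}\log(nP/\delta)\big)$. Summing over the arms with $\Delta_i \le n\Delta_{[2]}$ yields $O\!\big(n^{2/P}\log(nP/\delta)\sum_{i}\Delta_i^{-2}\big)$ directly; for each arm with $\Delta_i > n\Delta_{[2]}$ the same bound is $O(n^{2/P}\log(nP/\delta)\cdot n^{-2}\Delta_{[2]}^{-2})$, so all $n$ of them together contribute $O(n^{2/P}\log(nP/\delta)\cdot n^{-1}\Delta_{[2]}^{-2})$, dominated by the $i=2$ term; and $\armstar$ accumulates at most $N_{P+1} = O(\Delta_{[2]}^{-2}\log(nP/\delta))$ pulls, again dominated. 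Adding these proves the bound.

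The step I expect to be the main obstacle --- and the reason for deferring elimination to the end of a pass rather than doing it on the fly --- is making the per‑arm slack exactly $n^{2/P}$: the elimination test must compare $\arm_i$ against a champion that is \emph{already} $\tfrac12\epsilon_p$‑optimal, which is guaranteed only after all alive arms of the pass have been examined, since the adversary may place $\armstar$ last in the stream. Were one to eliminate $\arm_i$ on the fly against a champion that is only $\epsilon_{p-1}$‑optimal, $\arm_i$ could survive $\Theta(P)$ passes beyond its critical one and the sample bound would pick up a spurious $\Theta(P)$ factor. The remaining work --- verifying the union bound, checking that each geometric series over passes is dominated up to a constant by its largest term, and handling degenerate passes where $\epsilon_p \ge 1$ and $N_p = O(1)$ --- is routine.
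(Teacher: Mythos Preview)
Your proposal is correct and takes essentially the same approach as the paper: a geometric schedule of elimination thresholds $\epsilon_p$ decreasing from $\Theta(n\Delta_{[2]})$ to $\Theta(\Delta_{[2]})$ over $P+1$ passes, with $N_p = \Theta(\epsilon_p^{-2}\log(nP/\delta))$ pulls per alive arm, a Hoeffding-plus-union-bound clean event, and a sample-complexity argument that splits into big-gap and small-gap arms. The paper's \Cref{alg:main} simply records all empirical means in the transcript and computes $\hat\mu^p_{\max}$ at the end of the pass rather than tracking a running champion in memory, but this is cosmetically identical to your version; your telescoping accounting and the paper's ``the number of pulls for $\arm_i$ is bounded by $T_{p(i)}$'' (\Cref{lem:bound-pull}) give the same bound for the same reason.
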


Note that by plugging in $P=\Theta(\log(n))$, \Cref{thm:basic} gives an $O(\log(n))$-pass algorithm with $O(\sum_{i = 2}^n \frac{1}{\Delta^2_{[i]}}\cdot \log{n})$ sample complexity, as we have stated in \Cref{rst:main-ub}.

We describe the general algorithm that could `adjust' the sample complexity bound with the number of passes as a parameter, see \Cref{alg:main}. 
Our approach is similar to the elimination algorithm in~\cite{KarninKS13}, and by leveraging the information about \(\Delta_{[2]}\), we can significantly reduce the number of passes. 
The algorithm proceeds in \(P  +1\) passes over the stream, and maintains a set $I_p$ of `active arms' on pass $p$. In pass \(p\), it samples each arm in the current set \(I_p\) for a carefully chosen \(T_p\) times. After sampling, it computes the estimated mean reward \(\hat{\mu}^{p}_i\) for each arm \(i\) in \(I_p\), and find the maximum estimated mean \(\hat{\mu}^{p}_{\max}\) among the arms in $I_p$. The algorithm then constructs a new set \(I_{p + 1}\) by eliminating arms whose estimated means are more than \(\epsilon_p\) below \(\hat{\mu}^{p}_{\max}\). After \(P + 1\) passes, if the set \(I_{P + 1}\) contains a single arm, the algorithm returns that arm as the best arm. 


\begin{algorithm}
    \caption{The Main Multi-pass Streaming Algorithm}\label{alg:main}
    \KwIn{Stream \(I\), parameter \(P\), gap parameter \(\Delta_{[2]}\), and confidence parameter \(\delta\)}
    \KwOut{Best arm}
    Set \(n \gets \abs{I}\) and \(I_0 \gets \{1, \dotsc, n\}\)\;
    Let \(\epsilon_p = n^{1-p/P}\Delta_{[2]} / 4\) for \(p = 0, \dotsc, P\) \;
    \For{\(p = 0, \dotsc, P\)}{
        \ForEach{\(i \in I\) in the arrival order}{
            \If{\(i \not\in I_p\)}{
                Skip arm\;
            }
            Pull arm \(i\) until the number of pulls reach \(T_p \triangleq \frac{8 \log(2n (P + 1) / \delta)}{\epsilon^2_r \log e}\) times\;
            Compute estimated mean \(\hat{\mu}^p_i\) after \(T_p\) pulls\;
        }
        Pick \(\hat{\mu}^p_{\max} = \max\limits_{i \in I_r}\{\hat{\mu}^p_i\}\)\;
        Create a new set \(I_{p + 1} \gets \{i \in I_p \mid \hat{\mu}^p_i \ge \hat{\mu}^p_{\max} - {\epsilon_p}\}\)\;
    }
    \If{\(I_{P + 1}\) contains one element}{
        \Return{single index of arm from \(I_{P + 1}\)}\;
    }
\end{algorithm}

It is easy to observe that the streaming algorithm (\Cref{alg:main}) maintains only a single arm memory across the passes. Formally:
\begin{lemma}\label{lem:main-alg-memory}
The memory of \Cref{alg:main} contains at most one arm at any point of the stream.
\end{lemma}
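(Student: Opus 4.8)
The plan is to trace through \Cref{alg:main} and observe that it is already written as a streaming-friendly procedure: the only state that persists as the stream is consumed consists of the index sets $I_0, I_1, \dotsc$ and the empirical means $\hat{\mu}^p_i$, and by the model of \Cref{subsec:model} these are statistics recorded in the transcript (for which no space is charged), not arms held in the memory $M$. To make the phrase ``at any point of the stream'' precise, I would fix an arbitrary moment during pass $p$ at which the algorithm is examining the currently arriving arm $i$, and argue that $M$ contains at most $i$ itself at that moment and is emptied before the next arm arrives.

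The core of the argument is a one-line case analysis on what \Cref{alg:main} does with the arriving arm $i$ in pass $p$. If $i \notin I_p$, the algorithm executes the ``Skip arm'' branch, i.e.\ it discards $i$ immediately without ever adding it to $M$. If $i \in I_p$, the algorithm pulls $i$ exactly $T_p$ times, records the single statistic $\hat{\mu}^p_i$, and then moves on to the next arriving arm, at which point $i$ is discarded from $M$. In both branches, only the arm $i$ can be in $M$ while it is being processed, and $M$ is emptied before processing continues. It remains to note that the end-of-pass step --- computing $\hat{\mu}^p_{\max}$ and forming $I_{p+1} = \{i \in I_p : \hat{\mu}^p_i \ge \hat{\mu}^p_{\max} - \epsilon_p\}$ --- together with the final test on whether $I_{P+1}$ is a singleton, reads only the stored indices and empirical means and therefore requires no arm to reside in $M$ between passes either. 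Combining these observations gives $\abs{M} \le 1$ at every point of the stream.

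There is essentially no obstacle in this lemma; it is a structural observation about the algorithm rather than a computation. The only point that warrants an explicit sentence is the appeal to the standard convention (recalled in \Cref{subsec:model}, following \cite{AssadiW20,MaitiPK21,JinH0X21,AgarwalKP22,AWneurips22}) that arbitrary statistics may be written for free, so that maintaining the sets $I_p$ and the means $\hat{\mu}^p_i$ across passes does not count against the space complexity.
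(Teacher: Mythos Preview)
Your argument is correct and follows the same line as the paper's own proof: both observe that the sets $I_p$ and the empirical means $\hat{\mu}^p_i$ are statistics (extractable from the transcript) rather than stored arms, and that the algorithm processes arms one at a time, discarding each before reading the next. Your case analysis and the remark about the end-of-pass step are slightly more explicit than the paper's version, but the content is identical.
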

\begin{proof}
The algorithm initializes the set \(I_0\) to contain all the arm indices from the stream \(I\). However, this set doesn't actually store the arms themselves, only their indices. The arms are fetched one by one during the execution. 

For each pass number \(p\), from \(0\) to \(P\), the algorithm iterates through each arm \(i\) in the stream \(I\) and loads it to the memory. During this iteration, the algorithm pulls from it a fixed number of times, updates statistics \(\hat\mu^{p}_i\), and then moves on to the next arm. Therefore, at any given time, the memory contains only one arm, which is the arm currently being proceeded. 
We note that the whole information \(\hat\mu^p_i\) and \(I_p\) can be extracted from the transcript \(\Pi\), and we maintain this information explicitly only for the convenience of the presentation.  
\end{proof}

What remains is to prove the correctness and the sample complexity of \Cref{alg:main}. For simplicity and to remove any issues with dependency, in the analysis, we apply the following standard trick. We assume that we first extract \(T_P\) samples from each arm before the algorithm even starts the work. During the work, \Cref{alg:main} only utilize these samples for the computation. In other words, uses a prefix of length \(T_p\) for computation \(\hat\mu^{p}_i\). 
This strategy provides a structured and consistent set of samples and values \(\hat\mu^{p}_i\) for the algorithm's decisions, avoiding any dependency from continuously updated statistics.

We first define a `good event' that captures the high-probability bound for the empirical rewards to deviate from the actual rewards with $T_{p}$ number of arm pulls. More formally, we define event $\cE$ as follows.

\begin{align}\label{eq:event-E}
    \E \triangleq \{\forall{i \in I}, p \in \{0, \dotsc, P\} : \abs{\hat\mu^p_i - \mu_i} \le \epsilon_r / 4\}
\end{align}

We show that $\E$ holds probability at least $(1-\delta)$, which is by an application of the Chernoff-Hoeffding bound (\Cref{lem:chernoff}). The formal lemma and proof are as follows.
\begin{lemma}\label{lem:bound-E}
    Let \(\E\) be the event defined in \Cref{eq:event-E}. Then, \[\Pr\left[~\neg \E~\right] \le \delta \,.\]
\end{lemma}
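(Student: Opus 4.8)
The plan is to bound the failure probability of a single arm-pass pair via the Chernoff--Hoeffding inequality, and then take a union bound over all $n$ arms and $P+1$ passes. Fix an arm $i \in I$ and a pass index $p \in \{0,\dots,P\}$. Recall that $\hat\mu^p_i$ is the empirical mean of the first $T_p$ of the pre-extracted samples of arm $i$, each an independent $\bern{\mu_i}$ draw (by the ``pre-sampling'' reduction, the samples are genuinely i.i.d.\ and independent of the algorithm's decisions). By \Cref{lem:chernoff}, for any $t > 0$ we have $\Pr\!\left[\,\abs{\hat\mu^p_i - \mu_i} > t\,\right] \le 2\exp\!\left(-\,t^2 \, T_p \,\log e\right)$ (in the form where the bound is stated with a $\log e$ constant, matching the definition of $T_p$).

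\medskip

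Next I would substitute $t = \epsilon_p/4$ and the value $T_p = \dfrac{8\log(2n(P+1)/\delta)}{\epsilon_p^2 \log e}$. The exponent becomes $-\,(\epsilon_p/4)^2 \cdot \dfrac{8\log(2n(P+1)/\delta)}{\epsilon_p^2 \log e} \cdot \log e = -\,\dfrac{8}{16}\log(2n(P+1)/\delta) = -\tfrac12 \log(2n(P+1)/\delta)$, which gives $\Pr\!\left[\,\abs{\hat\mu^p_i - \mu_i} > \epsilon_p/4\,\right] \le 2 \cdot \dfrac{1}{\sqrt{2n(P+1)/\delta}}$. (If the intended constant in $T_p$ is meant to yield exactly $\delta/(n(P+1))$ per event, the same computation goes through with the exponent $-\log(2n(P+1)/\delta)$; in either reading the per-event probability is at most $\delta/(n(P+1))$, which is all that is needed — I would present whichever matches the exact Chernoff statement in \Cref{lem:chernoff}.) Then a union bound over the $n$ choices of $i$ and the $P+1$ choices of $p$ yields $\Pr[\neg\E] \le n(P+1) \cdot \dfrac{\delta}{n(P+1)} = \delta$, as claimed. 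Note the use of $\epsilon_r$ versus $\epsilon_p$ in \Cref{eq:event-E} and in the $T_p$ definition appears to be a typo for $\epsilon_p$, and I would treat it as such throughout.

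\medskip

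The only subtlety — and the one place worth a sentence of care rather than routine calculation — is the independence of the samples used to form $\hat\mu^p_i$. Because the algorithm reuses a prefix of a fixed pool of $T_P$ i.i.d.\ samples for arm $i$ across all passes, the events $\abs{\hat\mu^p_i - \mu_i} \le \epsilon_p/4$ for different $p$ are correlated; but this is harmless, since the union bound only needs a bound on the probability of each individual event, not independence among them. Within a single $(i,p)$ pair the $T_p$ samples are genuinely i.i.d., so Chernoff--Hoeffding applies directly. I do not anticipate any real obstacle here; the argument is a textbook concentration-plus-union-bound, and the main thing to get right is simply matching the constants in $T_p$ to the exact form of the Chernoff bound cited.
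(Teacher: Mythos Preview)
Your proposal is correct and follows exactly the paper's approach: apply Chernoff--Hoeffding to each pair $(i,p)$ and union bound over all $n(P+1)$ pairs. To resolve your constant uncertainty: with the paper's Hoeffding statement (\Cref{lem:chernoff}) applied to the empirical mean of $T_p$ samples one gets $\Pr[\,\abs{\hat\mu^p_i-\mu_i}>\epsilon_p/4\,]\le 2\exp(-\epsilon_p^2 T_p/8)$, and substituting $T_p=\tfrac{8\log(2n(P+1)/\delta)}{\epsilon_p^2\log e}$ makes the exponent exactly $-\ln(2n(P+1)/\delta)$, so the per-event bound is precisely $\delta/(n(P+1))$ and the union bound closes cleanly; your remark about the $\epsilon_r$/$\epsilon_p$ typo and the harmlessness of correlations across $p$ are both on point.
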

\begin{proof}
    By the union bound, we have:
    \begin{align}\label{eq:event-sum}
        \Pr\left[\neg {\E}\right] \le \sum_{i \in I} \sum_{p = 0}^{P} \Pr\left[\abs{\hat{\mu}^p_i - \mu_i} > \epsilon_p/4\right]\,.
    \end{align}
    
    By the Chernoff-Hoeffding inequality (\Cref{lem:chernoff}), we have:
    \begin{align}\label{eq:event-single}
        \Pr\left[\abs{\hat{\mu}^p_i - \mu_i} > \frac{\epsilon_p}{4} \right] \le 2 \exp\left(-\frac{\epsilon_p^2 T_p}{8}\right) \le 2\exp\left(-\ln\left(\frac{n(P + 1)}{\delta}\right)\right) \le \frac{\delta}{n (P + 1)}\,.
    \end{align}
    
    Combining~\Cref{eq:event-sum} and~\Cref{eq:event-single}, we get:
    \begin{equation*}
        \Pr\left[\neg \E\right] \le n (P + 1) \frac{\delta}{n(P + 1)} \le \delta\, ,
    \end{equation*}
    as desired.
\end{proof}

In the rest of the proof, we condition on the high probability event that $\E$ happens. We now establish the correctness of the algorithm by \Cref{lem:best} and \Cref{lem:eliminate-large-gap}. Our target is to demonstrate that the algorithm correctly identifies the best arm in stream \(I\). We prove that if the event \(\E\) holds, then the algorithm outputs the best arm. Combining this result with the bound from \Cref{lem:bound-E}, we get that the probability of the algorithm making an incorrect identification is at most \(\delta\), thus affirming the algorithm's correctness.

We first prove that the best arm cannot be eliminated during the work of the algorithm. 
\begin{lemma}\label{lem:best}
    Conditioning on the event \(\E\) defined in \Cref{eq:event-E} holds, for any \(p \in \{0, \dotsc, P + 1\}\), we have \(\star \in I_p\). 
\end{lemma}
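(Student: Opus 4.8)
The plan is to argue by induction on $p$, combining the concentration bound supplied by the event $\mathcal{E}$ with the fact that $\star$ is the maximizer of the true means. The base case $p=0$ is immediate: $I_0$ is initialized to $\{1,\dots,n\}$, so it contains every arm and in particular $\star$.

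For the inductive step I would assume $\star \in I_p$ for some $p \le P$ and show $\star \in I_{p+1}$. Because $\star \in I_p$, the best arm is not skipped during pass $p$, so it is pulled $T_p$ times, $\hat\mu^p_\star$ is well-defined, and $\star$ is among the candidates from which $I_{p+1}$ is carved out. Let $j \in I_p$ attain $\hat\mu^p_{\max} = \hat\mu^p_j$. On $\mathcal{E}$ we have $|\hat\mu^p_\star - \mu_\star| \le \epsilon_p/4$ and $|\hat\mu^p_j - \mu_j| \le \epsilon_p/4$, and $\mu_\star \ge \mu_j$ since $\star$ is the global maximizer. Chaining these,
\[
\hat\mu^p_\star \ge \mu_\star - \tfrac{\epsilon_p}{4} \ge \mu_j - \tfrac{\epsilon_p}{4} \ge \hat\mu^p_j - \tfrac{\epsilon_p}{2} = \hat\mu^p_{\max} - \tfrac{\epsilon_p}{2} \ge \hat\mu^p_{\max} - \epsilon_p,
\]
so by the rule defining $I_{p+1}$ in \Cref{alg:main}, $\star \in I_{p+1}$. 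This closes the induction for all $p \in \{0,\dots,P+1\}$.

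The hard part is essentially nonexistent here — this is the standard correctness step for successive elimination — but two bookkeeping points deserve attention. First, one must invoke the induction hypothesis to guarantee that $\star$ is actually sampled in pass $p$, since otherwise $\hat\mu^p_\star$ and membership in $I_{p+1}$ would be undefined. Second, it is worth noting that the $\epsilon_p/4$ deviation slack built into $\mathcal{E}$ sits comfortably inside the elimination margin $\epsilon_p$, so the best arm survives with a buffer of $\epsilon_p/2$ to spare; this is what makes the final inequality robust. (I read the ``$\epsilon_r$'' appearing in the definition of $\mathcal{E}$ in \Cref{eq:event-E} as a typo for ``$\epsilon_p$'', matching the indexing used everywhere else.)
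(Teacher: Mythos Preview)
Your proposal is correct and essentially matches the paper's own proof: both use the concentration event $\mathcal{E}$ to show $\hat\mu^p_\star \ge \hat\mu^p_{\max} - \epsilon_p/2$ via the same chain of inequalities, with the only cosmetic difference that you frame it as a direct induction while the paper phrases the inductive step as a contradiction. Your bookkeeping remarks (about $\star$ needing to be in $I_p$ to be sampled, and the $\epsilon_r$ typo) are accurate and do not alter the argument.
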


\begin{proof}
    To prove that the best arm \(\star\) cannot be eliminated in the algorithm, we start by assuming the opposite. Assume that the best arm \(\star\) belongs to \(I_p\), but does not belong to \(I_{p + 1}\) for some \(r\). This implies the existence of an arm \(i\) for which the following inequality holds 
    \begin{equation}\label{eq:upper-bound}
        \hat{\mu}^{p}_\star < \hat{\mu}^{p}_i - \epsilon_p \,.
    \end{equation}

    However, from the property of the event \(\E\), we have \(\hat{\mu}^{p}_\star \ge \mu_\star - \epsilon_p / 4 \) and \(\hat{\mu}^p_i \le \mu_i + \epsilon_p / 4\).
    Consequently, we get: 
    \begin{equation}\label{eq:lower-bound}
        \hat{\mu}^p_\star - \hat{\mu}^p_i \ge \mu_\star - \mu_i -\epsilon_p / 2 \ge -\epsilon_p / 2\,.
    \end{equation}

    The~\Cref{eq:upper-bound} and~\Cref{eq:lower-bound} contradict each other, leading to a contradiction. Therefore, it is not possible for \(\star\) to be in \(I_p\) and eliminate the subsequent round, and as a result, \(\star\) remains in \(I_p\) for any \(p\). Thus, the best arm \(\star\) is always present in the set \(I_p\) for any \(p\) and consequently in set \(I_{P + 1}\). 
\end{proof}

Next, we demonstrate that if event \(\cE\) holds, then an arm with a large gap should be eliminated before a specific iteration in Algorithm~\ref{alg:main}. In other words, if the condition \(\cE\) is satisfied, the algorithm will identify and eliminate arms with significant gaps with a specific number of iterations. 

\begin{lemma}\label{lem:eliminate-large-gap}
    Conditioning on the event \(\cE\) defined in \Cref{eq:event-E} holds, and suppose for arm \(i\) and an integer $p\in [P+1]$, there is \(\Delta_i > \frac{3}{2}\epsilon_p\), then \(i \notin I_{p + 1}\).
\end{lemma}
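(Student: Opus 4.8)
The plan is to show that arm $i$ fails the membership test for $I_{p+1}$ by comparing its empirical mean against that of the best arm $\armstar$. First I would note that the sets are nested: by construction $I_{p+1} = \{j \in I_p \mid \hat\mu^p_j \ge \hat\mu^p_{\max} - \epsilon_p\} \subseteq I_p$, so if $i \notin I_p$ we are immediately done. Hence it suffices to handle the case $i \in I_p$, and in particular the pass-$p$ estimate $\hat\mu^p_i$ is actually computed.

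Next I would invoke \Cref{lem:best}: conditioning on $\cE$, the best arm satisfies $\star \in I_p$, and therefore $\hat\mu^p_{\max} \ge \hat\mu^p_\star$. Under the good event $\cE$ we have the two one-sided deviation bounds $\hat\mu^p_\star \ge \mu_\star - \epsilon_p/4$ and $\hat\mu^p_i \le \mu_i + \epsilon_p/4 = \mu_\star - \Delta_i + \epsilon_p/4$. Plugging in the hypothesis $\Delta_i > \tfrac{3}{2}\epsilon_p$ gives $\hat\mu^p_i < \mu_\star - \tfrac{3}{2}\epsilon_p + \tfrac{1}{4}\epsilon_p = \mu_\star - \tfrac{5}{4}\epsilon_p$.

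Combining these estimates yields
\[
\hat\mu^p_{\max} - \hat\mu^p_i \;\ge\; \hat\mu^p_\star - \hat\mu^p_i \;>\; \Paren{\mu_\star - \tfrac{1}{4}\epsilon_p} - \Paren{\mu_\star - \tfrac{5}{4}\epsilon_p} \;=\; \epsilon_p,
\]
so $\hat\mu^p_i < \hat\mu^p_{\max} - \epsilon_p$, which is precisely the condition that excludes $i$ from $I_{p+1}$. This completes the argument.

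There is no real obstacle here — the proof is a short calculation — so the only thing to be careful about is bookkeeping: the nesting observation (so that the claim is meaningful even when $i$ was eliminated in an earlier pass), the direction of each deviation inequality in $\cE$, and that the constant $\tfrac{3}{2}$ in the hypothesis is exactly what is needed to absorb the two $\tfrac{1}{4}\epsilon_p$ slacks and still leave a full $\epsilon_p$ margin. (I would also silently read the $\epsilon_r$ appearing in the definitions of $T_p$ and $\cE$ as $\epsilon_p$, which is evidently the intended index.)
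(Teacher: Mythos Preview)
Your proof is correct and follows essentially the same approach as the paper: reduce to the case $i \in I_p$, invoke \Cref{lem:best} to bound $\hat\mu^p_{\max}$ from below via $\hat\mu^p_\star$, apply the two deviation bounds from $\cE$, and compute that $\hat\mu^p_{\max} - \hat\mu^p_i > \epsilon_p$. The only difference is cosmetic arrangement of the final inequality chain.
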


\begin{proof}
    Consider any arm \(i\) and a value \(r\) such that \(\Delta_i > \frac{3}{2} \epsilon_p \). We aim to prove that arm \(i\) will not be included in the set \(I_{p  +1}\). We can first assume that \(i\in I_p\) since otherwise, if \(i\) is not in \(I_p\), arm \(i\) cannot be included in \(I_{p + 1}\) simply by set inclusion.

    Assuming \(i\) is in \(I_p\), by using \Cref{lem:best} and the event \(\cE\), we have the following inequality:
    \begin{equation}\label{eq:a1}
        \hat{\mu}^{p}_{\max} \geq \hat{\mu}^{p}_{\star} \geq \mu_\star - \epsilon_p / 4\,.
    \end{equation}

    This inequality indicates that the maximum estimated mean \(\hat{\mu}^{p}_{\max}\) in \(p\)-th iteration is at least as large as the estimated mean \(\hat{\mu}^p_\star\), of the best arm \(\star\), which in turn, is at least \(\mu_\star - \epsilon_p / 4\) according to the event \(\cE\). 

    Furthermore, according to the event \(\cE\), we have:
    \begin{equation}\label{eq:a2}
        \hat{\mu}^p_i \leq \mu_i + \epsilon_p / 4\,.
    \end{equation}
    
    By combining~\Cref{eq:a1} and~\Cref{eq:a2}, we obtain: 
    \begin{align*}
        \hat{\mu}^{p}_i - \hat{\mu}^{p}_{\max} + \epsilon_p \le \mu_{i} - \mu_{\star} + \epsilon_p / 2 + \epsilon_p = \frac{3}{2} \epsilon_p - \Delta_i < 0.
    \end{align*}
    The above inequality shows that \(\hat{\mu}^p_i < \hat{\mu}^p_{\max} -\epsilon_p\) holds due the large gap \(\Delta_i = \mu_{\star} - \mu_i > \frac{3}{2}\epsilon_{p}\). 
    Consequently, if \(i\) fails to meet the condition for inclusion in \(I_{p + 1}\), \(i\) will not present in \(I_{p + 1}\). This concludes the proof of the lemma. 
\end{proof}

As \(\epsilon_P \le \Delta_{[2]}/4\) and for any \(i \in I\) we have \(\Delta_i > \Delta_{[2]} \ge 4\epsilon_P > \frac{3}{2}\epsilon_P\), it follows that \(i\) satisfies the conditions of~\Cref{lem:eliminate-large-gap}. By combining~\Cref{lem:best} and~\Cref{lem:eliminate-large-gap}, we deduce that if event \(\mathcal{E}\) holds true, then \(I_{P + 1} = \{\star\}\), and \Cref{alg:main} outputs the correct arm.

We next bound the number of samples that \Cref{alg:main} makes.

\begin{lemma}\label{lem:bound-pull}
    Conditioning on the event \(\E\) defined in \Cref{eq:event-E} holds, the sample complexity of \Cref{alg:main} is at most 
    \begin{equation*}
        O\left(\log \left(\frac{nP}{\delta} \right) \sum_{i =2}^n \frac{n^{2/P}}{\Delta^2_{[i]}}\right)\,.
    \end{equation*}
\end{lemma}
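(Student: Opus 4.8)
The plan is to bound, conditioned on $\E$, the number of pulls charged to each arm and then sum over arms. In pass $p$ the algorithm pulls every arm of $I_p$ exactly $T_p = \Theta(\log(nP/\delta)/\epsilon_p^2)$ times, and the schedule $\epsilon_p = n^{1-p/P}\Delta_{[2]}/4$ decreases geometrically with ratio $n^{1/P}$, so $T_{p+1}/T_p = n^{2/P} \ge 1$; consequently any partial sum $\sum_{p \le q} T_p$ is a geometric sum dominated up to a constant by its last term, that is, $\sum_{p \le q} T_p = O(T_q)$. This geometric collapse of the per-pass counts is the only structural fact about the schedule the argument needs.

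I would first handle the non-best arms over passes $p \ge 1$. Fix $i \ne \star$. If $i \in I_p$ for some $p \ge 1$ then $i$ survived pass $p-1$, so \Cref{lem:eliminate-large-gap} applied at pass $p-1$ forces $\Delta_i \le \frac{3}{2}\epsilon_{p-1}$. Since $\epsilon_p$ is decreasing in $p$, the set of passes $p \ge 1$ that can contain $i$ is a prefix $\{1,\dots,q_i\}$, where $q_i$ is the largest $p \le P$ with $\frac{3}{2}\epsilon_{p-1} \ge \Delta_i$ (and if no such $p \ge 1$ exists, $i$ is never pulled after pass $0$). Hence the pulls charged to $i$ in passes $\ge 1$ total at most $\sum_{p=1}^{q_i} T_p = O(T_{q_i})$, and from $\epsilon_{q_i-1} \ge \frac{2}{3}\Delta_i$ together with $\epsilon_{q_i} = \epsilon_{q_i-1}/n^{1/P}$ we get $\epsilon_{q_i} \ge \frac{2\Delta_i}{3n^{1/P}}$, so $T_{q_i} = O(\log(nP/\delta)\, n^{2/P}/\Delta_i^2)$. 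Summing over $i \ne \star$ and using that $\{\Delta_i : i \ne \star\}$ is exactly the multiset $\{\Delta_{[2]},\dots,\Delta_{[n]}\}$ yields a contribution of $O(\log(nP/\delta)\sum_{i=2}^n n^{2/P}/\Delta_{[i]}^2)$.

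It remains to account for the best arm and for pass $0$. By \Cref{lem:best}, $\star \in I_p$ for every $p$, so the pulls charged to $\star$ in passes $1,\dots,P$ total $\sum_{p=1}^P T_p = O(T_P) = O(\log(nP/\delta)/\Delta_{[2]}^2)$, which is absorbed into the $i=2$ term of the target bound. In pass $0$, each of the $n$ arms is pulled $T_0 = \Theta(\log(nP/\delta)/(n^2\Delta_{[2]}^2))$ times; here the per-arm estimate above is too weak for arms with large gaps, so I would bound pass $0$ in aggregate: $nT_0 = O(\log(nP/\delta)/(n\Delta_{[2]}^2)) \le O(\log(nP/\delta)\, n^{2/P}/\Delta_{[2]}^2)$ since $1/n \le n^{2/P}$, again absorbed into the $i=2$ term. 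Adding the pass-$0$, best-arm, and non-best-arm contributions proves the claimed bound.

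No single step is deep; the care is in the bookkeeping: bounding pass $0$ in aggregate rather than per arm (the per-arm estimate genuinely fails for large-gap arms there), treating $\star$ separately because it is never eliminated, and exploiting the geometric decay of $T_p$ so that, for every arm, only the last pass it survives to actually matters.
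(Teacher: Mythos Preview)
Your decomposition is the same as the paper's: bound arm $i$'s total pulls by $T_{q_i}$ with $q_i$ the last pass it survives, relate $\epsilon_{q_i}$ to $\Delta_i$ via $\epsilon_{q_i}\ge \tfrac{2\Delta_i}{3n^{1/P}}$, and handle the best arm and the large-gap arms (your pass-$0$ aggregate) separately. The paper phrases the split explicitly as $S=\{i:\Delta_i\le \tfrac{3}{2}n\Delta_{[2]}\}$ versus $B=\{i:\Delta_i>\tfrac{3}{2}n\Delta_{[2]}\}$, but the content is identical to your ``passes $\ge 1$ per-arm'' versus ``pass $0$ in aggregate'' accounting.

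There is one genuine gap. You read the algorithm as making $T_p$ \emph{fresh} pulls of each surviving arm in pass $p$, and then collapse $\sum_{p\le q_i}T_p$ to $O(T_{q_i})$ on the grounds that the ratio $T_{p+1}/T_p=n^{2/P}\ge 1$ makes it a geometric sum dominated by its last term. That collapse is not valid for arbitrary $P$: with ratio $r=n^{2/P}$ one only gets $\sum_{p\le q}T_p\le \tfrac{r}{r-1}\,T_q$, and $\tfrac{r}{r-1}$ is unbounded as $P$ grows (for $P\gg\log n$ one has $r-1\approx (2\ln n)/P$, so the hidden constant is $\Theta(P/\log n)$). Since the lemma is stated for all $P\ge 1$, this step fails as written; indeed, the paper's statistics-efficient variant in the appendix does use fresh pulls each pass and correspondingly incurs an extra factor of $P$ in its sample bound. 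The fix is simply to read \Cref{alg:main} as intended: ``pull arm $i$ until the number of pulls reach $T_p$'' is cumulative (the paper makes this explicit by pre-drawing $T_P$ samples per arm and using the length-$T_p$ prefix for $\hat\mu_i^p$), so an arm surviving through pass $q_i$ receives $T_{q_i}$ pulls in total, not $\sum_{p\le q_i}T_p$. With that correction no geometric-sum step is needed and your argument coincides with the paper's.
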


\begin{proof}
    For any suboptimal arm $\arm_{i}$, define the value \(p(i) \triangleq \min\{p \ge 0 \mid \Delta_i > \frac{3}{2}\epsilon_p\}\). For an optimal arm, we define \(p(\star) \triangleq P\). We note that it is correctly defined because \(\frac{3}{2}\epsilon_P = \frac{3}{2} \frac{\Delta_{[2]}}{4} < \Delta_{[2]} \le \min_i \{\Delta_i\}\), which means \(\forall i : p(i) \le P\). We split the set of arms into two parts. The first set is the set of arms with small gaps
    \begin{align*}
        S \triangleq \{i \in I\mid \Delta_i \le 3 n \Delta / 2\},
    \end{align*}
    and the set of arms with big gaps
    \begin{align*}
        B \triangleq \{i \in I\mid \Delta_i > 3 n \Delta / 2\}\,.
    \end{align*}
    \noindent
    We define \(T_{S}\) and \(T_{B}\) as the number of arm pulls used by the sets \(S\) and \(B\), and \(T\triangleq T_{S}+T_{B}\) is the total number of arm pulls. We bound the two terms in what follows. 
    Furthermore, we first look at arms from \(S\) \emph{except} \(\armstar\). For \(\arm_{i}\) with gap parameter \(\Delta_{i}\), due to the definition of \(\epsilon_p\), we have that 
    \begin{align*}
        \frac{3}{2} n^{1/P} \epsilon_{p(i)}\ge \Delta_i > \frac{3}{2} \epsilon_{p(i)},
    \end{align*}
    and consequently,
    \begin{eqnarray}\label{eq:eps-lower}
        \epsilon_{p(i)} \ge \frac{2\Delta_i}{3 n^{1/P}}\,.
    \end{eqnarray}

    By \Cref{lem:eliminate-large-gap}, we have that the number of pulls for \(\arm_{i}\) is bounded by \(T_{p(i)}\). By the definition of \(T_{p}\) and~\Cref{eq:eps-lower}, we have

    \begin{align}\label{eq:basic-small}
        T_{p(i)} = \frac{8 \log(2 n (P + 1) / \delta)}{\epsilon^2_{p(i)} \log e} \le \frac{72 \ln (2 n (P + 1) / \delta) n^{2/P}}{4 \Delta^2_i \log e}\,.
    \end{align} 
    
    For the optimal arm, by \Cref{lem:best} and \Cref{lem:eliminate-large-gap}, the number of pulls assigned to the optimal arm is equal to
    \begin{align}\label{eq:basic-optimal}
        T_{P} = \frac{8 \ln (2 n (P + 1) / \delta)}{\epsilon^2_{P} \log e} = \frac{128 \ln (2 n (P + 1) / \delta)}{\Delta_{[2]}^2 \log e}.
    \end{align}

    All arms from the set \(B\) are eliminated after the first round, the number of pulls assigned in the first round is bounded by \(T_0 \le \frac{128 \ln (2 n (P + 1) / \delta)}{n^2\Delta_{[2]}^2}\). Thus, the total number of pulls assigned for arms from \(B\) is at most 
    \begin{align}\label{eq:basic-large}
        T_B \leq nT_0 \le \frac{128 \ln (2 n (P + 1) / \delta)}{n\Delta_{[2]}^2}.
    \end{align}

    Therefore, we have 
    \begin{align*}
    	T &= T_{S}+T_{B}\\
    	  &\leq T_{P} + \sum_{i\neq \star} T_{r(i)} + T_{B} \tag{by \Cref{lem:eliminate-large-gap}}\\
    	  &\leq \frac{128 \ln (2 n (P + 1) / \delta)}{\Delta_{[2]}^2 \log e} + \sum_{i\neq \star} \frac{72 \ln (2 n (P + 1) / \delta) n^{2/P}}{4 \Delta^2_i \log e} + T_{B} \tag{by \Cref{eq:basic-small} and \Cref{eq:basic-optimal}}\\
    	  &\leq \frac{128 \ln (2 n (P + 1) / \delta)}{\Delta_{[2]}^2 \log e} + \sum_{i\neq \star} \frac{72 \ln (2 n (P + 1) / \delta) n^{2/P}}{4 \Delta^2_i \log e} + \frac{128 \ln (2 n (P + 1) / \delta)}{n\Delta_{[2]}^2} \tag{by \Cref{eq:basic-large}}\\
    	  & = O\left(\log \left(\frac{nP}{\delta} \right) \sum_{i =2}^n \frac{n^{2/P}}{\Delta^2_{[i]}}\right),
    \end{align*}
    as desired.
\end{proof}

\paragraph{Finalizing the proof of \Cref{thm:basic}.} 
\Cref{lem:bound-E,lem:best,lem:eliminate-large-gap} guarantee that the \Cref{alg:main} outputs the best arm with probability at least \(1 - \delta\). The combination of \Cref{lem:bound-E} and \Cref{lem:bound-pull} gives the bound on the sample complexity. Finally, \Cref{lem:main-alg-memory} bounds the memory usage. 
\begin{observation}
\label{obs:delta-lower-bound}
We observe that \Cref{alg:main} can be extended to situations where the exact value of the optimality gap \(\Delta_{[2]}\) is unknown. Instead, we only have access to a lower bound \(\gamma\) for \(\Delta_{[2]}\) (we should use \(\epsilon_p = \gamma n^{1-\frac{p}{P}}\)). Under these circumstances, the analysis requires slight modifications. Although the correctness analysis remains unchanged, the bound of sample complexity needs modification. Specifically, when categorizing arms into sets \(B\) and \(S\), we should use the value \(\gamma n\) instead of \(\Delta_{[2]}n\). The contribution of arms from set \(S\) remains unchanged. However, the bound on the contributions of arms from set \(B\) to the total number of pulls is now bounded by \(O\left(\log\left(\frac{nP}{\delta}\right)\frac{1}{n\gamma^2}\right)\). Consequently, if we only have access to the lower bound \(\gamma\), the guarantees of \Cref{thm:basic} remain intact, except for a minor additive overhead of \(O\left(\log\left(\frac{nP}{\delta}\right)\frac{1}{n\gamma^2}\right)\) in the sample complexity.
\end{observation}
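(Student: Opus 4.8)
The plan is to instantiate \Cref{alg:main} exactly as written, but with every occurrence of the tuning parameter $\Delta_{[2]}$ replaced by the known lower bound $\gamma$; that is, set $\epsilon_p = \gamma\, n^{1-p/P}/4$ for $p=0,\dots,P$ and keep $T_p = \Theta\paren{\log(2n(P+1)/\delta)/\epsilon_p^2}$. The proof then splits into two parts: (i) the correctness analysis of \Cref{sec:basic} goes through verbatim, and (ii) the sample-complexity count of \Cref{lem:bound-pull} picks up only the claimed additive term. First I would record that \Cref{lem:bound-E}, the ``best arm survives'' statement \Cref{lem:best}, and the ``large gaps die early'' statement \Cref{lem:eliminate-large-gap} are all phrased purely in terms of the event $\E$ and the geometric schedule $\epsilon_{p-1}=n^{1/P}\epsilon_p$: their proofs use only $T_p \asymp \log(2n(P+1)/\delta)/\epsilon_p^2$ and never the numerical value of $\epsilon_p$, so all three hold unchanged for the new schedule.

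The only point at which the true gap was used in \Cref{sec:basic} is the closing line of the correctness argument, which needs $\tfrac32\epsilon_P < \Delta_{[2]} \le \min_{i\neq\star}\Delta_i$ so that \Cref{lem:eliminate-large-gap} with $p=P$ eliminates every suboptimal arm. Since $\gamma \le \Delta_{[2]}$, we still have $\tfrac32\epsilon_P = \tfrac{3\gamma}{8} < \gamma \le \Delta_{[2]} \le \min_{i\neq\star}\Delta_i$, so on $\E$ we again conclude $I_{P+1}=\{\armstar\}$, and combined with \Cref{lem:bound-E} the failure probability stays at most $\delta$. Intuitively, underestimating $\Delta_{[2]}$ only makes the final elimination scale $\epsilon_P$ smaller, which is harmless for correctness; this one-sidedness is the single structural thing that needs checking.

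For the sample complexity I would rerun \Cref{lem:bound-pull} with the split threshold $\tfrac32 n\gamma$ in place of $\tfrac32 n\Delta_{[2]}$: put $S=\{i:\Delta_i\le\tfrac32 n\gamma\}$, $B=\{i:\Delta_i>\tfrac32 n\gamma\}$, and keep $p(i)=\min\{p\ge 0:\Delta_i>\tfrac32\epsilon_p\}$. For $i\in S\setminus\{\armstar\}$ the same chain $\Delta_i>\tfrac32\epsilon_{p(i)}$ together with $\epsilon_{p(i)-1}=n^{1/P}\epsilon_{p(i)}$ (or, for $p(i)=0$, the direct bound $\Delta_i \le \tfrac32 n\gamma$) yields a per-arm pull count $O\paren{\log(nP/\delta)\,n^{2/P}/\Delta_i^2}$, and summing over all suboptimal arms recovers the $O\paren{\log(nP/\delta)\sum_{i=2}^n n^{2/P}/\Delta_{[i]}^2}$ term. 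The optimal arm, which survives every pass, is pulled $\sum_{p=0}^P T_p = O(T_P) = O\paren{\log(nP/\delta)/\gamma^2}$ times (subsumed by the main sum whenever $\gamma \ge \Delta_{[2]}/n^{1/P}$, exactly as the $1/\Delta_{[2]}^2$ term was in \Cref{lem:bound-pull}). Finally, for $i\in B$ we have $\Delta_i>\tfrac32 n\gamma>\tfrac32\epsilon_0$, so \Cref{lem:eliminate-large-gap} removes every such arm after the first pass, during which it was pulled only $T_0 = O\paren{\log(nP/\delta)/(n\gamma)^2}$ times; over the at most $n$ arms of $B$ this totals $O\paren{\log(nP/\delta)/(n\gamma^2)}$, which is the stated additive overhead and the only place where $\gamma$, rather than $\Delta_{[2]}$, appears quadratically in a denominator that cannot be folded back into the main sum.

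I do not expect a genuine obstacle: the argument is a robustness check of \Cref{sec:basic} rather than anything new. The two small things that need explicit verification are the one-sided use of the gap in the correctness proof (the analysis never needs $\epsilon_P=\Omega(\Delta_{[2]})$, only $\epsilon_P\le\Delta_{[2]}/4$) and the charging of the $p(i)=0$ boundary arms of $S$ against $n^{2/P}/\Delta_i^2$ rather than against the geometric schedule, which is where one must invoke $\Delta_i\le\tfrac32 n\gamma$ directly.
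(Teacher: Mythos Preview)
Your proposal is correct and follows exactly the approach sketched in the paper's observation: replace $\Delta_{[2]}$ by $\gamma$ in the schedule, verify that \Cref{lem:bound-E,lem:best,lem:eliminate-large-gap} only need $\epsilon_P \le \Delta_{[2]}/4$ (which still holds since $\gamma \le \Delta_{[2]}$), and redo the $B$/$S$ split of \Cref{lem:bound-pull} at $\Theta(n\gamma)$ to isolate the $O(\log(nP/\delta)/(n\gamma^2))$ overhead. You are in fact slightly more careful than the paper in tracking the optimal arm's $O(\log(nP/\delta)/\gamma^2)$ contribution separately, which the paper's observation tacitly folds into the ``contribution of $S$ remains unchanged'' claim.
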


\begin{remark}
\label{rmk:logn-overhead}
Note that the sample complexity bound for \Cref{alg:main} with $P=O(\log(n))$ does \emph{not} have the dependency on the $\log\log(\frac{1}{\Delta_{[i]}})$ factor as in \cite{KarninKS13,JamiesonMNB14}. Instead, we have an extra $O(\log(n))$ multiplicative factor. We remark that the bound is \emph{not} trivial to obtain. For the worst-case optimal bounds, there exists a simple single-pass streaming algorithm that finds the best arm with $O(n\log(n)/\Delta^2_{[2]})$ sample complexity and a memory of a single arm. The algorithm is simply to keep the best arm on the fly with $O(\log(n)/\Delta^2_{[2]})$ samples on each arm. However, for the instance-sensitive sample complexity, it is unclear how to simulate this simple algorithm with only the knowledge of $\Delta_{[2]}$. Furthermore, since the parameters $\Delta_{[i]}$ can be arbitrarily small for any $i$, our multiplicative factor of $O(\log(n))$ can be better for infinitely many instances.
\end{remark}

\begin{remark}
    We observe that our algorithm is very similar to the elimination algorithms of \cite{JinH0X21,KarninKS13}, albeit our ``search'' of the gap parameter starts from $O(\sqrt{n}\Delta_{[2]})$. We rearmark that if we change the algorithm of \cite{JinH0X21,KarninKS13} by choosing the starting value for the gap as \(O(\sqrt{n}\Delta_{[2]})\), it appears that we can get a similar result with the number of passes \(P=\log(n)\) and sample complexity $\tilde{O}(\sum_{i=2}^{n}\frac{1}{\Delta^2_{[2]}})$. However, if we do not make the extra changes as in our algorithm, it is unclear how to obtain the trade-off between the sample complexity and the number of passes.
\end{remark}

\FloatBarrier
\section{Experiments}
\label{sec:experiment}
We present the empirical results in this section. For multi-pass streaming MABs algorithms, there are two objectives we want to optimize: the sample complexity and the pass efficiency. Our main experimental result is that compared to existing algorithms for streaming MABs, our algorithm exhibits significant advantages on both fronts.

\subsection{Experiment settings} 
We compare our algorithm with two benchmark algorithms: $i).$ the AW algorithm: the single-pass algorithm by \cite{AssadiW20}, which only uses a single pass over the stream, requires the knowledge of $\Delta_{[2]}$, and uses the worst-case optimal sample complexity of $\Theta(\frac{n}{\Delta^2_{[2]}})$; and $ii).$ the JHTX algorithm: the $O(\log(1/\Delta_{[2]}))$-pass algorithm by \cite{JinH0X21}, which does not require the knowledge of $\Delta_{[2]}$ and achieves the instance-sensitive near-instance optimal $O(\sum_{i=2}^{n}1/\Delta^2_{[i]}\cdot \log\log(1/\Delta_{[i]}))$ sample complexity, but has to use more passes than ours. We implemented the algorithm and track the number of arm pulls and passes. We pick the constants large enough so that the algorithms do \emph{not} fail in any of the runs.

We consider instances of $2000$ arms in $3$ settings, namely the \emph{uniform} setting, in which the mean rewards of the arms are from a uniform distribution supported on $[0,1]$; the \emph{arithmetic progression} setting, in which the mean rewards of the arms follow an arithmetic progression; and the \emph{cluster} setting, where there is one arm with higher mean reward ($0.9$), and all other arms are divided into two clusters with mean rewards $0.899$ and $0.898$, respectively. In each setting, we take $30$ independent runs, and we report the error bars to avoid statistical influx. 

\subsection{Experimental results}
Our main finding in the experiments is that our algorithm consistently outperforms both the algorithms of \cite{AssadiW20} and \cite{JinH0X21} in terms of sample complexity. Furthermore, compared to \cite{JinH0X21}, our algorithm uses significantly less passes. This confirms the theoretical analysis and demonstrates the strong practical value of our algorithm.
\paragraph{Results in the uniform setting.} The comparison between the sample complexity and the number of passes can be found in \Cref{fig:experiments-uniform}. Note that since the algorithm of \cite{AssadiW20} always uses a single pass, we do not report it in \Cref{fig:uniform-exp-pass}. From the figure, it can be found that the our algorithm has the best sample complexity among the algorithms. The sample complexity of the AW algorithm is considerably higher than the two others, which is understandable since in the instance with arithmetic progression means, we have $\frac{n}{\Delta^2_{[2]}} \gg \sum_{i=2}^{n}1/\Delta^2_{[i]}\cdot \log\log(1/\Delta_{[i]})$. comparing to the JHTX algorithm, our algorithm achieves lower mean sample complexity, and it is more stable. Finally, as we can see from \Cref{fig:uniform-exp-pass}, our algorithm uses a much smaller passes than JHTX.

\begin{figure}
	\centering
	\begin{subfigure}{0.5\textwidth}
		\centering
		\includegraphics[scale=0.3]{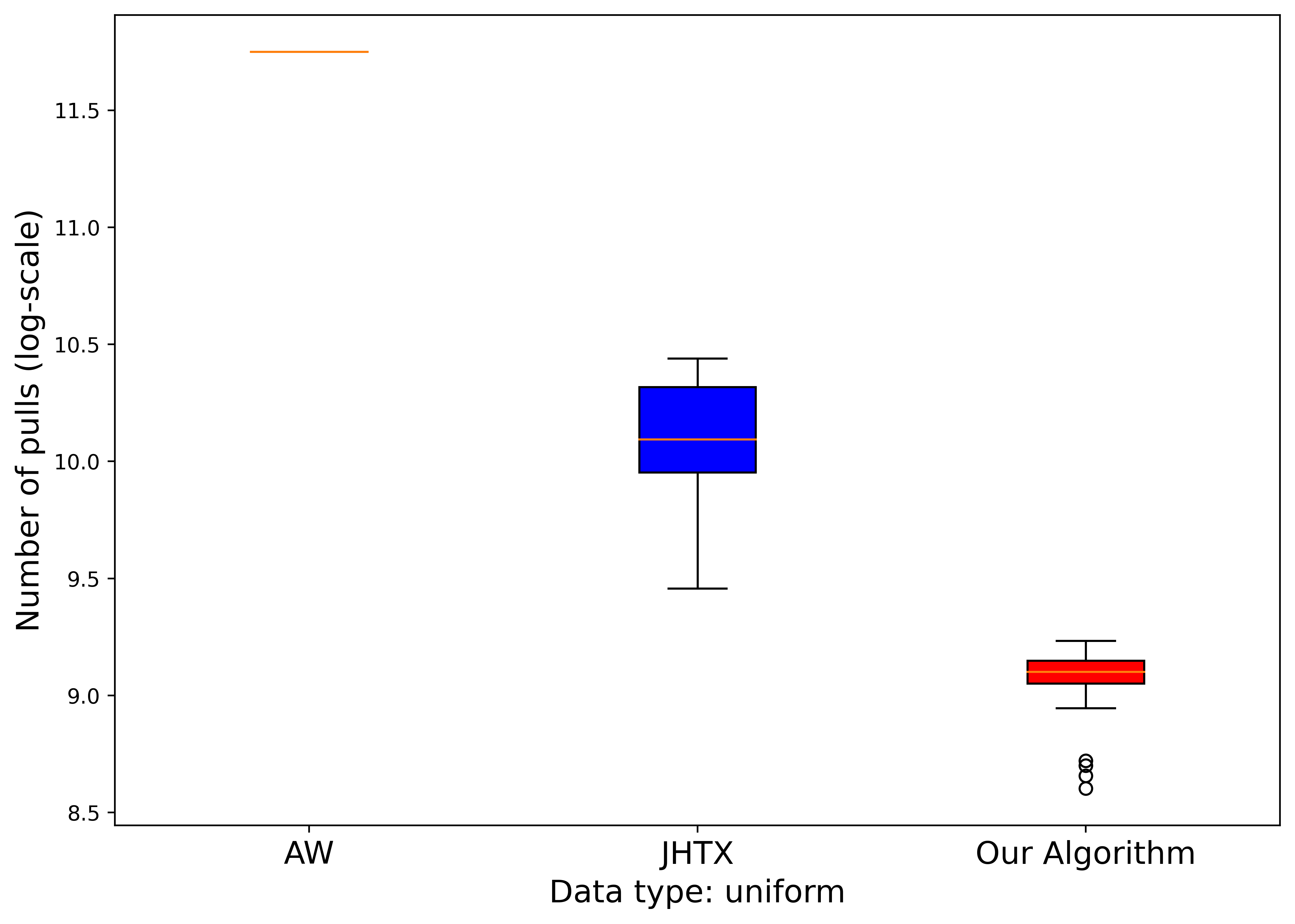}
		\caption{Comparison of samples between algorithms}
		\label{fig:uniform-exp-sample}
	\end{subfigure}%
	\begin{subfigure}{0.5\textwidth}
		\centering
		\includegraphics[scale=0.3]{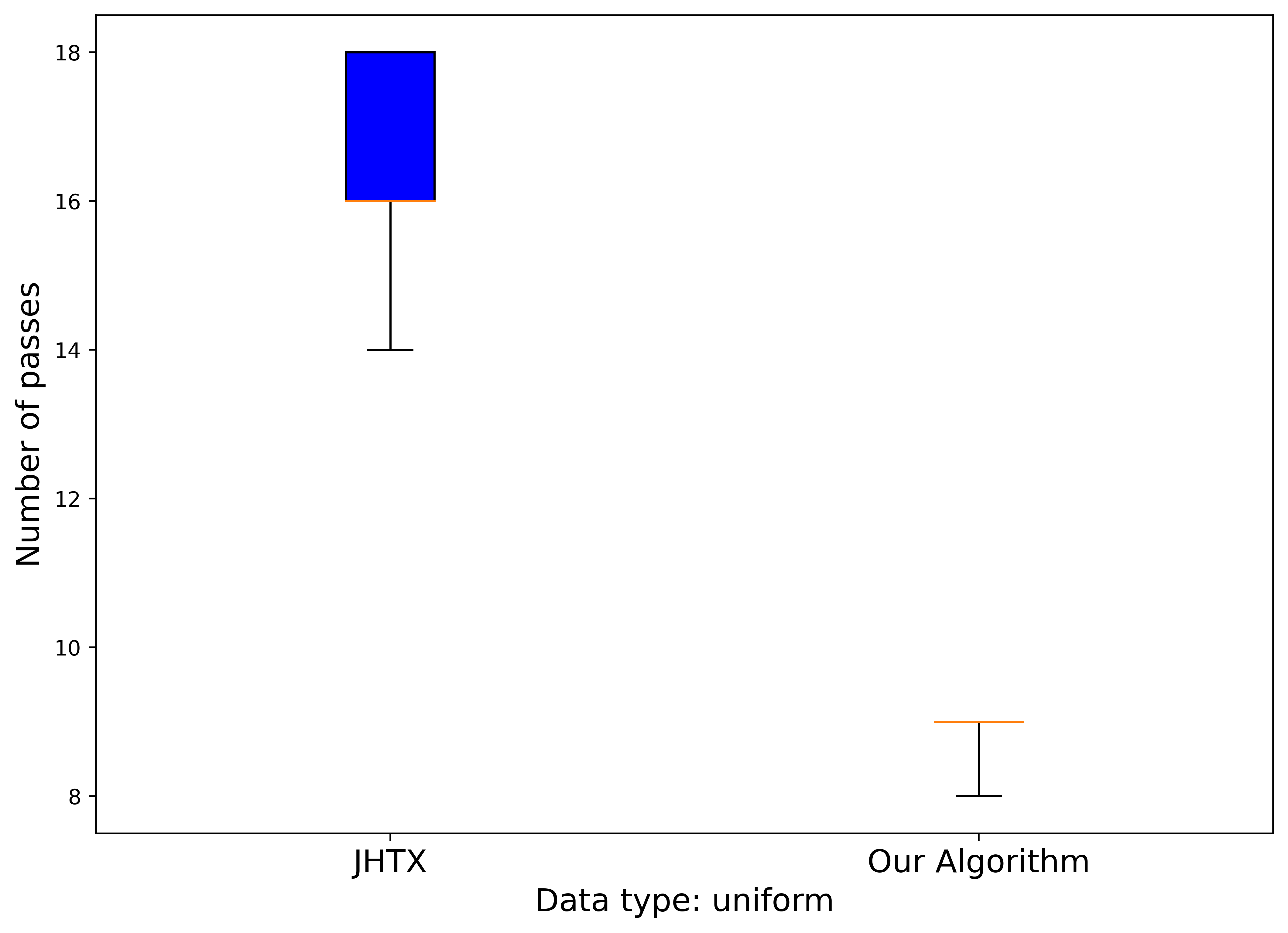}
		\caption{Comparison of passes between algorithms}
		\label{fig:uniform-exp-pass}
	\end{subfigure}
	\caption{The comparison between algorithms on the sample complexity and the number of passes in the \emph{uniform setting}. Samples numbers are taken $\log_{10}(\cdot)$ for better illustration. The graphs are reported by $30$ independent runs. AW stands for the single-pass algorithm of \cite{AssadiW20}, and JHTX stands for the single-pass algorithm of \cite{JinH0X21}.}
	\label{fig:experiments-uniform}
\end{figure}

To better illustrate the performance comparisons, we also summarize results in \Cref{tab:exp-uniform}. The numbers are rounded to two decimal places (and for the sample complexity, we do this after converting to the scientific notation). The table gives a clearer illustration of the better sample complexity of our algorithm: our algorithm in fact achieves a $10\times$ better sample efficiency than JHTX.
\begin{table}[!h]
	\centering
	\captionsetup{justification=centering}
	\caption{\label{tab:exp-uniform} The comparison between algorithms on the sample complexity and the number of passes in the \emph{uniform setting}.}
	\begin{tabular}{|l|l|l|}
		\hline
		& Mean samples & Mean passes \\ \hline
		AW & $5.62\times 10^{11}$ & -- \\ \hline
		JHTX & $1.41\times 10^{10}$ & 16.4 \\ \hline
		Our Algorithm & $1.18\times 10^{9}$ & 8.83  \\ \hline
	\end{tabular}
	
\end{table}

\paragraph{Results in the arithmetic progression setting.} The comparison between the sample complexity and the number of passes can be found in \Cref{fig:experiments-progression} and \Cref{tab:exp-progression}. This setting is very similar to the uniform setting, barring the fact that the rewards between arms are more stable and regularized. From the figure and the table, it can be observed that our algorithm again demonstrates the best sample complexity, and outperforms the second-best JHTX algorithm by almost $10\times$ in sample complexity. Again, since $\frac{n}{\Delta^2_{[2]}} \gg \sum_{i=2}^{n}1/\Delta^2_{[i]}\cdot \log\log(1/\Delta_{[i]})$, the AW algorithm uses a significantly larger number of arm pulls.  

\begin{figure}
	\centering
	\begin{subfigure}{0.5\textwidth}
		\centering
		\includegraphics[scale=0.3]{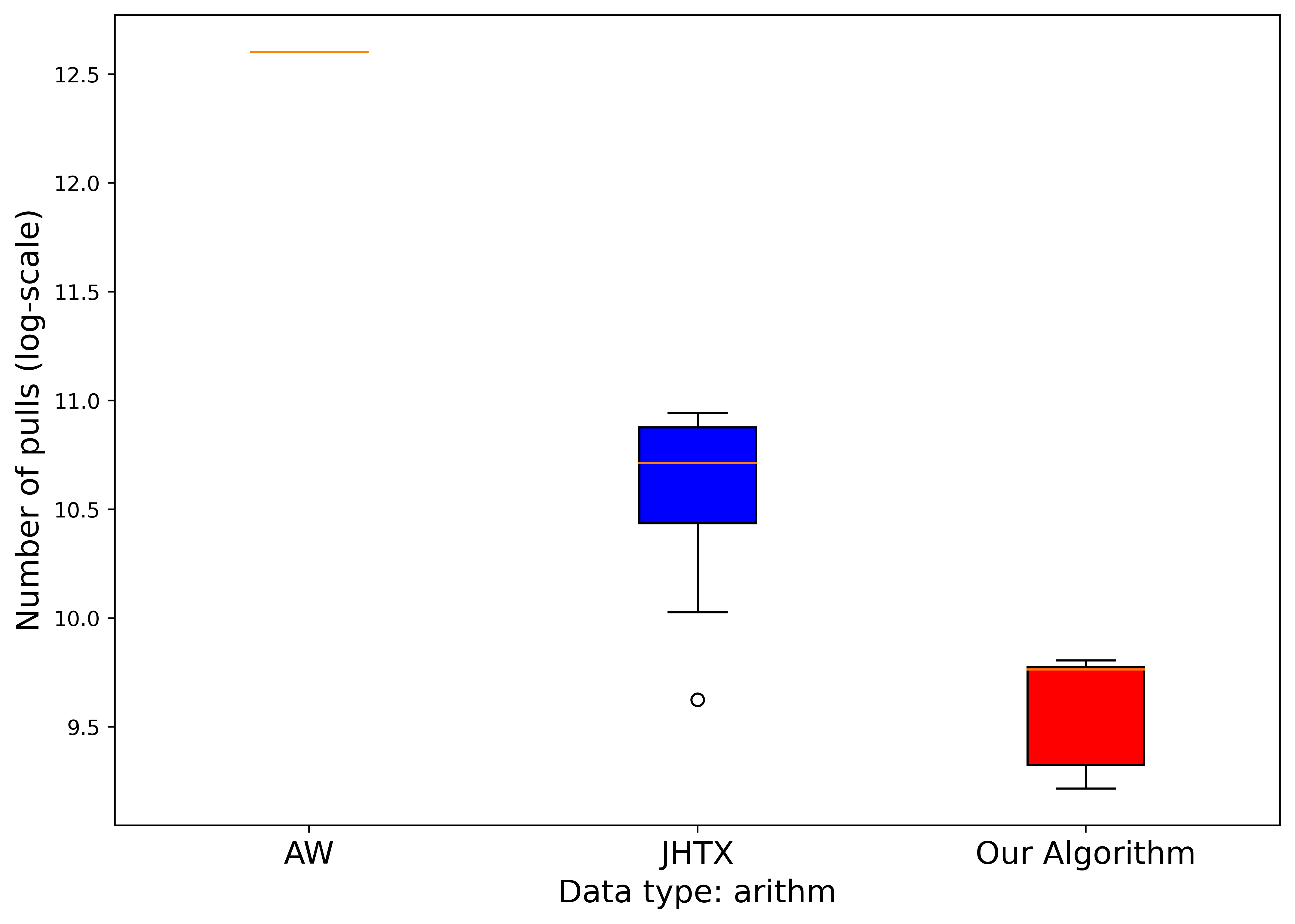}
		\caption{Comparison of samples between algorithms}
		\label{fig:progression-exp-sample}
	\end{subfigure}%
	\begin{subfigure}{0.5\textwidth}
		\centering
		\includegraphics[scale=0.3]{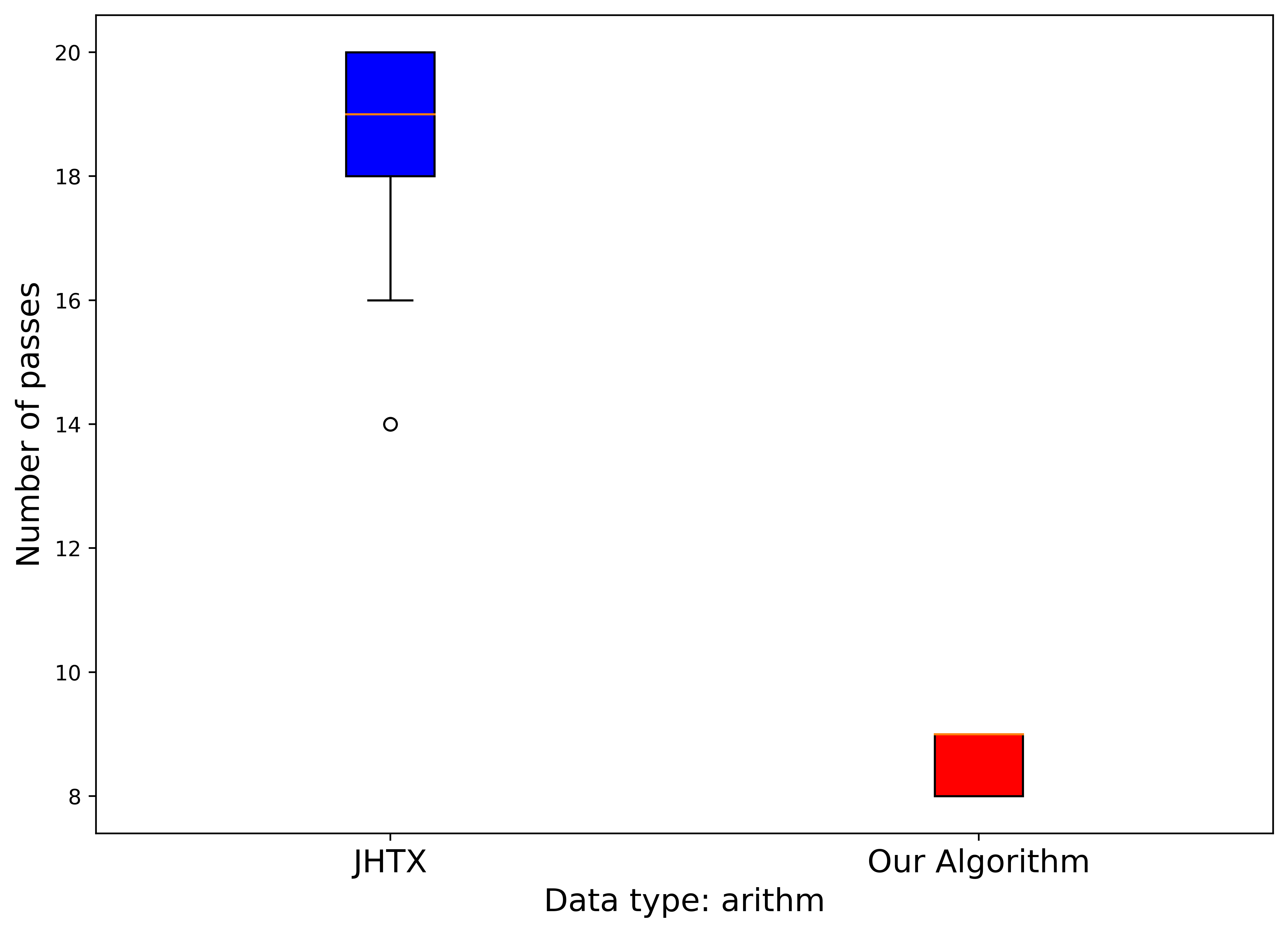}
		\caption{Comparison of passes between algorithms}
		\label{fig:progression-exp-pass}
	\end{subfigure}
	\caption{The comparison between algorithms on the sample complexity and the number of passes in the \emph{arithmetic progression setting}. Samples numbers are taken $\log_{10}(\cdot)$ for better illustration. The graphs are reported by $30$ independent runs. AW stands for the single-pass algorithm of \cite{AssadiW20}, and JHTX stands for the single-pass algorithm of \cite{JinH0X21}.}
	\label{fig:experiments-progression}
\end{figure}

\begin{table}[!h]
	\centering
	\captionsetup{justification=centering}
	\caption{\label{tab:exp-progression} The comparison between algorithms on the sample complexity and the number of passes in the \emph{arithmetic progression setting}.}
	\begin{tabular}{|l|l|l|}
		\hline
		& Mean samples & Mean passes \\ \hline
		AW & $4.01\times 10^{12}$ & -- \\ \hline
		JHTX & $5.05\times 10^{10}$ & 18.53 \\ \hline
		Our Algorithm & $4.61\times 10^{9}$ & 8.67  \\ \hline
	\end{tabular}
\end{table}

\paragraph{Results in the clustered instance setting.} We now come to the setting for clustered instances, and the results can be shown in \Cref{fig:experiments-cluster} and \Cref{tab:exp-cluster}. Note that in this case, we have that $\frac{n}{\Delta^2_{[2]}} \approx \sum_{i=2}^{n}1/\Delta^2_{[i]}\cdot \log\log(1/\Delta_{[i]})$, which is the reason the AW algorithm offers a very competitive sample complexity in \Cref{fig:cluster-exp-sample}. Interestingly, the JHTX algorithm is using many arm pulls in this setting. We suspect the reason is that regardless of the actually gaps, the gap-elimination procedure in the JHTX algorithm has to search from large gaps, and waste many sample and passes without eliminating any arm. We also note that our algorithm again demonstrate the best sample complexity in this setting. 

\begin{figure}
	\centering
	\begin{subfigure}{0.5\textwidth}
		\centering
		\includegraphics[scale=0.3]{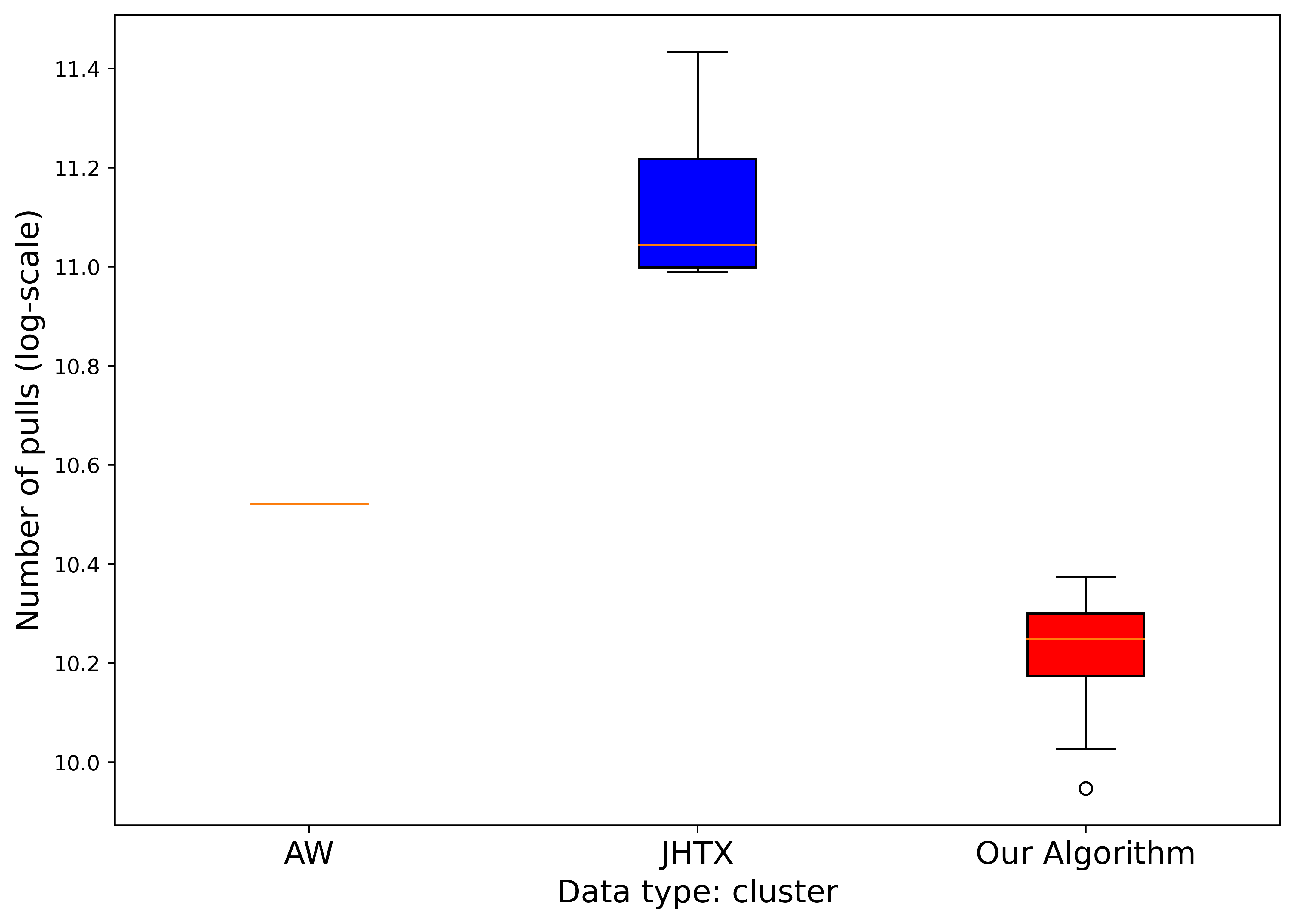}
		\caption{Comparison of samples between algorithms}
		\label{fig:cluster-exp-sample}
	\end{subfigure}%
	\begin{subfigure}{0.5\textwidth}
		\centering
		\includegraphics[scale=0.3]{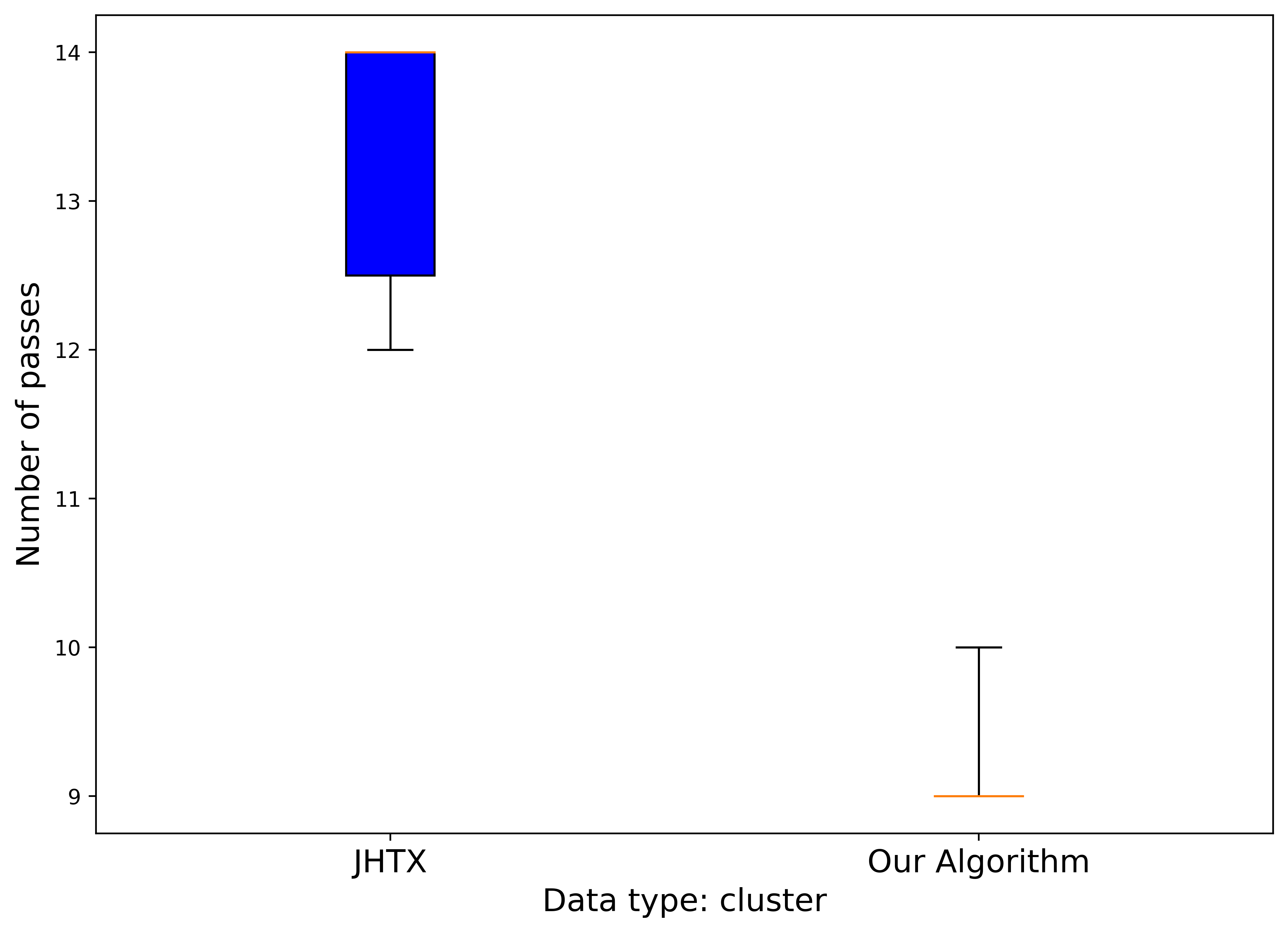}
		\caption{Comparison of passes between algorithms}
		\label{fig:cluster-exp-pass}
	\end{subfigure}
	\caption{The comparison between algorithms on the sample complexity and the number of passes in the \emph{arithmetic progression setting}. Samples numbers are taken $\log_{10}(\cdot)$ for better illustration. The graphs are reported by $30$ independent runs. AW stands for the single-pass algorithm of \cite{AssadiW20}, and JHTX stands for the single-pass algorithm of \cite{JinH0X21}.}
	\label{fig:experiments-cluster}
\end{figure}

\begin{table}[!h]
	\centering
	\captionsetup{justification=centering}
	\caption{\label{tab:exp-cluster} The comparison between algorithms on the sample complexity and the number of passes in the \emph{uniform setting}.}
	\begin{tabular}{|l|l|l|}
		\hline
		& Mean samples & Mean passes \\ \hline
		AW & $3.32\times 10^{10}$ & -- \\ \hline
		JHTX & $1.38\times 10^{11}$ & 13.47 \\ \hline
		Our Algorithm & $1.73\times 10^{10}$ & 9.03  \\ \hline
	\end{tabular}
\end{table}

\FloatBarrier

\bibliographystyle{plain}
\bibliography{paper}

\appendix

\section{Standard technical tools}
\label{sec:standatd-tech-tools}

\subsection{Concentration Inequalities}

We use the following variant of the well-known Chernoff-Hoeffding bound.

\begin{lemma}[Chernoff-Hoeffding Inequality]\label{lem:chernoff}
	Let \(X_1, \dotsc, X_n \in [0, 1]\) be independent random variables. Let \(X = \sum_{i = 1}^n X_i\). For any \(t \ge 0\), it holds that 
	\begin{align*}
		\Pr[X \ge \bE[X] + t] \le \exp\left(\frac{-2t^2}{n}\right)
	\end{align*}
	and
	\begin{align*}
		\Pr[X \le \bE[X] - t] \le \exp\left(\frac{-2t^2}{n}\right)\,.
	\end{align*}
\end{lemma}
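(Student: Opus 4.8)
The plan is to use the classical Chernoff (exponential-moment) method combined with Hoeffding's lemma. First I would fix a parameter $s > 0$ and apply Markov's inequality to the non-negative random variable $e^{s(X - \bE[X])}$:
\[
\Pr\left[X \ge \bE[X] + t\right] = \Pr\left[e^{s(X - \bE[X])} \ge e^{st}\right] \le e^{-st}\,\bE\!\left[e^{s(X - \bE[X])}\right].
\]
Since the $X_i$ are independent, so are the $X_i - \bE[X_i]$, and the moment generating function factorizes: $\bE[e^{s(X - \bE[X])}] = \prod_{i=1}^{n} \bE[e^{s(X_i - \bE[X_i])}]$.

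The key step is Hoeffding's lemma: if $Y$ is a random variable with $\bE[Y] = 0$ and $Y \in [a,b]$ almost surely, then $\bE[e^{sY}] \le e^{s^2 (b-a)^2 / 8}$. I would establish this by convexity of $y \mapsto e^{sy}$, which gives $e^{sy} \le \frac{b-y}{b-a} e^{sa} + \frac{y-a}{b-a} e^{sb}$ pointwise on $[a,b]$; taking expectations and using $\bE[Y] = 0$ eliminates the linear term and leaves $\bE[e^{sY}] \le e^{\phi(s)}$ for an explicit function $\phi$ with $\phi(0) = \phi'(0) = 0$, and then Taylor's theorem together with the bound $\phi''(s) \le (b-a)^2/4$ yields $\phi(s) \le s^2(b-a)^2/8$. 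Applying this to $Y_i = X_i - \bE[X_i]$, which lies in an interval of length at most $1$ because $X_i \in [0,1]$, gives $\bE[e^{sY_i}] \le e^{s^2/8}$, and hence $\bE[e^{s(X - \bE[X])}] \le e^{n s^2 / 8}$.

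Combining the two displays yields $\Pr[X \ge \bE[X] + t] \le \exp(-st + ns^2/8)$ for every $s > 0$. I would then minimize the exponent over $s$: the function $s \mapsto -st + ns^2/8$ is minimized at $s = 4t/n$, where its value equals $-2t^2/n$, which gives the claimed upper-tail bound. The lower-tail bound follows immediately by applying the upper-tail bound to the variables $1 - X_i \in [0,1]$: their sum is $n - X$, so $\Pr[X \le \bE[X] - t] = \Pr[(n-X) \ge \bE[n-X] + t] \le e^{-2t^2/n}$.

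The main obstacle is Hoeffding's lemma, specifically the inequality $\phi''(s) \le (b-a)^2/4$: one must recognize $\phi''(s)$ as the variance of a two-point random variable supported on $\{a, b\}$ under an exponentially tilted distribution, which is bounded by $(b-a)^2/4$ since any random variable confined to an interval of length $b-a$ has variance at most $(b-a)^2/4$. The remaining ingredients — the Chernoff step, the product decomposition over independent coordinates, and the single-variable optimization — are routine.
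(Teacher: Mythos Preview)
Your proposal is correct and is the classical textbook derivation via the exponential-moment method and Hoeffding's lemma. The paper itself does not prove this statement at all: it simply lists the Chernoff--Hoeffding inequality as a standard concentration tool in the appendix without argument, so there is nothing to compare against beyond noting that you have supplied a complete (and standard) proof where the paper gives none.
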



\subsection{Statistical Distances and Properties}
\label{sub-app:stat-dist}
We frequently use the well-known total variation distance (TVD) and Kullback–Leibler divergence (KL divergence) in our proof. In this section, we provide their formal definition and properties.

\paragraph{Total variation distance.} 
We start with the definition of the total variation distance (TVD) between two distributions. 
\begin{definition}
	\label{def:tvd}
	Let $X$ and $Y$ be two random variables supported over the same $\Omega$, and let $\mu_X$ and $\mu_Y$ be their probability measures. The total variation distance (TVD) is between $X$ and $Y$ is defined as
	\begin{align*}
		\tvd{X}{Y} = \sup_{\Omega'\subseteq \Omega}\card{\mu_X(\Omega')-\mu_Y(\Omega')}.
	\end{align*}
	In particular, when the random variables are discrete, we have
	\begin{align*}
		\tvd{X}{Y} = \frac{1}{2}\sum_{\omega\in \Omega}\card{\mu_X(\omega)-\mu_Y(\omega)}.
	\end{align*}
\end{definition}

The total variation distance satisfied the symmetric property, i.e., $\tvd{X}{Y}=\tvd{Y}{X}$, and the triangle inequality, i.e., for three random vairables $X,Y,Z$, there is $\tvd{X}{Y}+\tvd{Y}{Z}\geq \tvd{X}{Z}$.

\paragraph{KL divergence.} We now introduce the Kullback–Leibler divergence (KL divergence) as an alternative statistical distance of TVD.

\begin{definition}[KL divergence]
	\label{def:kl-div}
	Let $X$ and $Y$ be two discrete random variables supported over the same $\Omega$, and let their distributions be $\mu_{X}$ and $\mu_{Y}$.The KL divergence between $X$ and $Y$, denoted as $\kl{X}{Y}$, is defined as 
	\begin{align*}
		\kl{X}{Y} = \sum_{\omega\in \Omega} \mu_{X}(\omega)\log\paren{\frac{\mu_{X}(\omega)}{\mu_{Y}(\omega)}}.
	\end{align*}
\end{definition}

Unlike the TVD, KL divergence is \emph{not} symmetric in general and does \emph{not} follow the triangle inequality.

\paragraph{The Properties of the TVD and KL divergence}

We shall use the following standard properties of KL-divergence and TVD defined. For the proof of this results, see the excellent textbook by Cover and Thomas~\cite{CoverT06}. 

We first connect the KL-divergence to TVD with the following celebrated Pinsker's inequality. 

\begin{fact}[Pinsker's inequality]
	\label{fact:pinsker}
	For any random variables $X$ and $Y$ supported over the same $\Omega$, 
	\begin{align*}
		\tvd{X}{Y} \leq \sqrt{\frac{1}{2}\cdot \kl{X}{Y}}.
	\end{align*}
\end{fact}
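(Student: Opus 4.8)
The plan is to prove Pinsker's inequality by the classical two-step route: first reduce to the case of Bernoulli random variables via a coarsening argument, then settle the Bernoulli case with a one-variable calculus estimate.

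\textbf{Step 1: reduction to two points.} Given $X,Y$ supported on $\Omega$, I would fix a set $A^\star \subseteq \Omega$ that attains the supremum in \Cref{def:tvd}; in the discrete setting one may simply take $A^\star = \set{\omega \in \Omega : \mu_X(\omega) \geq \mu_Y(\omega)}$, and then $\tvd{X}{Y} = \card{\mu_X(A^\star) - \mu_Y(A^\star)}$. Let $\tilde{X}, \tilde{Y} \in \set{0,1}$ be the indicators of the event ``$\omega \in A^\star$'' under $X$ and $Y$, so that $\tilde{X} = \bern{\mu_X(A^\star)}$, $\tilde{Y} = \bern{\mu_Y(A^\star)}$, and $\tvd{\tilde{X}}{\tilde{Y}} = \tvd{X}{Y}$. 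The crucial input is that KL divergence can only shrink under this two-cell coarsening, i.e. $\kl{\tilde{X}}{\tilde{Y}} \leq \kl{X}{Y}$; this is an instance of the log-sum inequality (equivalently, of convexity of $t \mapsto t\log t$) applied to the partition $\set{A^\star, \Omega \setminus A^\star}$. Granting this, it suffices to prove the inequality for the Bernoulli pair $\tilde{X}, \tilde{Y}$.

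\textbf{Step 2: the Bernoulli case.} Writing $p = \mu_X(A^\star)$ and $q = \mu_Y(A^\star)$, the remaining goal is the scalar inequality
\[ 2(p-q)^2 \;\leq\; p\log\frac{p}{q} + (1-p)\log\frac{1-p}{1-q} \qquad \text{for all } p,q \in [0,1], \]
which is trivial when the right-hand side is $+\infty$ (that is, when $q \in \set{0,1}$ but $p \neq q$), so assume $q \in (0,1)$. Fixing $p$ and letting $g(q)$ denote the right-hand side minus the left-hand side, one has $g(p)=0$, and differentiating in $q$ gives $g'(q) = (q-p)\paren{\frac{1}{q(1-q)} - 4}$; since $q(1-q) \leq \tfrac14$ on $[0,1]$, the second factor is nonnegative, so $g'$ changes sign from negative to positive exactly at $q=p$. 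Hence $g$ is minimized at $q=p$ with value $g(p)=0$, so $g \geq 0$ throughout. Taking square roots and chaining with Step 1 yields $\tvd{X}{Y} = \tvd{\tilde{X}}{\tilde{Y}} \leq \sqrt{\tfrac12\,\kl{\tilde{X}}{\tilde{Y}}} \leq \sqrt{\tfrac12\,\kl{X}{Y}}$.

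\textbf{Expected obstacle.} There is no deep difficulty here; the two places that require care are the monotonicity of KL divergence under coarsening in Step 1 (a clean invocation of the log-sum inequality, together with the observation that the supremum defining the TVD is achieved, which is immediate in the discrete setting the paper works with) and the bookkeeping of the degenerate endpoints $p,q \in \set{0,1}$ in Step 2, where one side of the scalar inequality may be infinite or the derivative undefined and the claim must be checked directly.
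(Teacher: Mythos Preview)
Your argument is correct: the reduction to the Bernoulli case via the log-sum inequality and the one-variable calculus verification are both standard and cleanly executed, and the edge cases are handled appropriately.

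The paper itself does not prove this statement at all. It is stated as a \emph{Fact} and referred to the textbook of Cover and Thomas~\cite{CoverT06} together with the other information-theoretic preliminaries; no argument is given in the paper. So your proposal supplies strictly more than what the paper contains. If anything, note that the constant $\tfrac12$ in the paper's formulation of Pinsker presumes natural logarithms in the definition of KL divergence, which your derivative computation in Step~2 also implicitly uses; this is consistent with the paper's later applications (e.g.\ the bound in \Cref{clm:bernoulli-KL}).
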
 

We then present the key properties we used in our proof for the KL divergence, which includes the Chain rule and the conditional KL-divergence.
\begin{fact}[Chain rule of KL divergence]
	\label{fact:kl-chain-rule}
	For any random variables $X=(X_{1}, X_{2})$ and $Y=(Y_{1},Y_{2})$ be two random variables, 
	\begin{align*}
		\kl{X}{Y} = \kl{X_{1}}{Y_{1}} + \Exp_{x \sim X_1} \kl{X_{2}\mid X_{1}=x}{Y_{2}\mid Y_{1}=x}.
	\end{align*}
\end{fact}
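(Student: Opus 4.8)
The plan is to derive the identity directly from the definition of KL divergence in the discrete setting (\Cref{def:kl-div}), by factoring each joint law into its first marginal times the conditional of its second coordinate and then splitting the logarithm of the resulting product.

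Concretely, write $\mu_{X}$ and $\mu_{Y}$ for the joint laws of $X=(X_{1},X_{2})$ and $Y=(Y_{1},Y_{2})$ over a common product space $\Omega_{1}\times\Omega_{2}$, and factor $\mu_{X}(x_{1},x_{2})=\mu_{X_{1}}(x_{1})\,\mu_{X_{2}\mid X_{1}}(x_{2}\mid x_{1})$ and likewise $\mu_{Y}(x_{1},x_{2})=\mu_{Y_{1}}(x_{1})\,\mu_{Y_{2}\mid Y_{1}}(x_{2}\mid x_{1})$. Substituting into $\kl{X}{Y}=\sum_{x_{1},x_{2}}\mu_{X}(x_{1},x_{2})\log\paren{\mu_{X}(x_{1},x_{2})/\mu_{Y}(x_{1},x_{2})}$ and expanding $\log(ab)=\log a+\log b$ splits the sum into two pieces. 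In the first piece the summand equals $\mu_{X_{1}}(x_{1})\,\mu_{X_{2}\mid X_{1}}(x_{2}\mid x_{1})\log\paren{\mu_{X_{1}}(x_{1})/\mu_{Y_{1}}(x_{1})}$, in which the only dependence on $x_{2}$ is through the factor $\mu_{X_{2}\mid X_{1}}(x_{2}\mid x_{1})$; summing that factor over $x_{2}$ gives $1$, leaving exactly $\sum_{x_{1}}\mu_{X_{1}}(x_{1})\log\paren{\mu_{X_{1}}(x_{1})/\mu_{Y_{1}}(x_{1})}=\kl{X_{1}}{Y_{1}}$. In the second piece, for each fixed $x_{1}$ the inner sum over $x_{2}$ is by definition $\kl{X_{2}\mid X_{1}=x_{1}}{Y_{2}\mid Y_{1}=x_{1}}$, so weighting by $\mu_{X_{1}}(x_{1})$ and summing over $x_{1}$ produces $\Exp_{x\sim X_{1}}\kl{X_{2}\mid X_{1}=x}{Y_{2}\mid Y_{1}=x}$. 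Adding the two pieces gives the stated identity.

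The points needing a little care are conventions rather than real obstacles: one adopts $0\log 0=0$ for zero-probability atoms, and if $\mu_{Y}$ does not dominate $\mu_{X}$ then both sides are $+\infty$ and the identity holds vacuously, while on the common support the factorization is exact and the rearrangement of the (finite, or absolutely convergent) double sum is immediate. In short this is the classical textbook computation (see \cite{CoverT06}), so I expect the ``hard part'' to be merely presenting the index manipulation cleanly rather than any conceptual difficulty.
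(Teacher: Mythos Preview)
Your proposal is correct and is exactly the standard textbook derivation: factor each joint law into marginal times conditional, split the logarithm, collapse the $x_{2}$-sum in the first piece to get $\kl{X_{1}}{Y_{1}}$, and recognize the second piece as the $\mu_{X_{1}}$-average of the conditional divergences. The paper does not give its own proof of this fact at all; it is listed among the standard information-theoretic tools in \Cref{sub-app:stat-dist} with the remark that proofs can be found in Cover and Thomas~\cite{CoverT06}, which is precisely the computation you have written out.
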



\begin{fact}[Conditioning cannot decrease KL-divergence]\label{fact:kl-conditioning}
	For any random variables $X,Y,Z$, 
	\[
	\kl{X}{Y} \leq \Exp_{z \sim Z} \kl{X \mid Z=z}{Y \mid Z=z}. 
	\]
\end{fact}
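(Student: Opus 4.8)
The plan is to derive this inequality from two ingredients that are already at hand: the chain rule for KL divergence (\Cref{fact:kl-chain-rule}) and the non-negativity of KL divergence. I would regard $X$ and $Z$ as jointly distributed, and likewise $Y$ and $Z$, with the convention (the one under which the shorthand $\kl{X\mid Z}{Y\mid Z}$ is used throughout \Cref{sec:tech-lemma}) that the law of $Z$ is the same in the ``$X$-world'' and the ``$Y$-world'', so that $\kl{Z}{Z}=0$. The single quantity I would analyze is $\kl{(Z,X)}{(Z,Y)}$, the divergence between the two joint laws, expanded in two different orders.

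First I would apply \Cref{fact:kl-chain-rule} peeling off $Z$ before $X$ (resp.\ $Y$):
\[
\kl{(Z,X)}{(Z,Y)} \;=\; \kl{Z}{Z} + \Exp_{z\sim Z}\kl{X\mid Z=z}{Y\mid Z=z} \;=\; \Exp_{z\sim Z}\kl{X\mid Z=z}{Y\mid Z=z},
\]
using that the $Z$-marginals coincide. Then I would apply \Cref{fact:kl-chain-rule} to the same joint divergence but peeling off $X$ (resp.\ $Y$) first:
\[
\kl{(Z,X)}{(Z,Y)} \;=\; \kl{X}{Y} + \Exp_{x\sim X}\kl{Z\mid X=x}{Z\mid Y=x} \;\ge\; \kl{X}{Y},
\]
where the inequality simply drops the second summand, which is non-negative because every KL divergence is non-negative (Gibbs' inequality, an immediate consequence of Jensen's inequality applied to the concave function $\log$; see also \cite{CoverT06}). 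Equating the two displays yields exactly $\kl{X}{Y}\le \Exp_{z\sim Z}\kl{X\mid Z=z}{Y\mid Z=z}$, which is the claim.

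I do not expect a genuine obstacle here: the argument is two invocations of the chain rule together with non-negativity. The only point requiring care is bookkeeping — ensuring it is literally the same joint distribution on $(Z,X)$ (respectively $(Z,Y)$) that is expanded in both orders, and that the marginal of the conditioning variable $Z$ is common to the two worlds so that the $\kl{Z}{Z}$ term genuinely vanishes. In every application of this fact in \Cref{sec:tech-lemma} (for instance, when $Z$ is the algorithm's arm-choice variable $J$, possibly bundled with a suffix of past outcomes) this shared-marginal condition is precisely the situation at hand, so no additional hypothesis is needed there.
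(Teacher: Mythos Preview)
Your argument is correct and is the standard two-way chain-rule plus non-negativity proof. The paper does not actually prove \Cref{fact:kl-conditioning}; it is stated as a fact with a reference to Cover and Thomas~\cite{CoverT06}, so there is nothing in the paper to compare against beyond noting that your derivation is exactly the textbook one.
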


In our proofs, we slighlty abuse the notation to let $\kl{X|Z}{Y|Z}$ denoting $\Exp_{z \sim Z} \kl{X \mid Z=z}{Y \mid Z=z}$.

The following fact characterizes the error of MLE for the source of a sample based on the TVD of the originating distributions. 

\begin{fact}
	\label{fact:distinguish-tvd}
	Suppose $\mu$ and $\nu$ are two distributions over the same support $\Omega$; then, given one sample $s$ from the following distribution
	\begin{itemize}
		\item With probability $\rho$, sample $s$ from $\mu$;
		\item With probability $1-\rho$, sample $s$ from $\nu$;
	\end{itemize}
	The best probability we can decide whether $s$ came from $\mu$ or $\nu$ 
	is 
	\[
	\max(\rho, 1-\rho) + \min(\rho, 1-\rho)\cdot\tvd{\mu}{\nu}.
	\]
\end{fact}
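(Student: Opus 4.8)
The plan is to treat the decision problem as Bayesian binary hypothesis testing, identify the optimal decision rule explicitly, and then convert its success probability into the stated expression using the positive-part characterization of total variation distance implicit in \Cref{def:tvd}. A decision rule is (without loss of generality) a deterministic map $g\colon \Omega \to \set{\mu,\nu}$ that, upon observing the sample $s=\omega$, outputs a guess for its source. Since $s$ is drawn from $\mu$ with prior $\rho$ and from $\nu$ with prior $1-\rho$, the probability that $g$ guesses correctly is
\[
	\sum_{\omega\in\Omega}\Paren{\rho\,\mu(\omega)\cdot\mathbf{1}[g(\omega)=\mu] + (1-\rho)\,\nu(\omega)\cdot\mathbf{1}[g(\omega)=\nu]}.
\]
This objective decouples across $\omega$, so it is maximized pointwise by the maximum-a-posteriori rule that outputs $\mu$ exactly when $\rho\,\mu(\omega)\ge(1-\rho)\,\nu(\omega)$. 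Hence the best achievable success probability equals $\sum_{\omega}\max\set{\rho\,\mu(\omega),\,(1-\rho)\,\nu(\omega)}$, and no rule (even a randomized one, which is a convex combination of deterministic rules) can exceed it.

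The remaining steps are algebraic. First I would apply the identity $\max(a,b)=a+b-\min(a,b)$ together with $\sum_\omega \rho\mu(\omega)+\sum_\omega(1-\rho)\nu(\omega)=1$ to obtain
\[
	\sum_{\omega}\max\set{\rho\,\mu(\omega),(1-\rho)\,\nu(\omega)} = 1 - \sum_{\omega}\min\set{\rho\,\mu(\omega),(1-\rho)\,\nu(\omega)}.
\]
Assuming without loss of generality that $\rho\le 1-\rho$ (the symmetric case is identical after swapping $\mu$ and $\nu$), I would lower bound the error term by $\min\set{\rho\,\mu(\omega),(1-\rho)\,\nu(\omega)}\ge \rho\cdot\min\set{\mu(\omega),\nu(\omega)}$. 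Finally, a one-line consequence of \Cref{def:tvd} is the overlap identity $\sum_\omega \min\set{\mu(\omega),\nu(\omega)} = 1-\tvd{\mu}{\nu}$, which follows from $\min(a,b)=\tfrac12(a+b)-\tfrac12\card{a-b}$ summed over $\Omega$. Substituting gives success probability at most $1-\rho\,(1-\tvd{\mu}{\nu}) = (1-\rho)+\rho\cdot\tvd{\mu}{\nu} = \max(\rho,1-\rho)+\min(\rho,1-\rho)\cdot\tvd{\mu}{\nu}$, as claimed.

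The step I expect to require the most care is the passage from the exact Bayes optimum $\sum_\omega\max\set{\rho\mu(\omega),(1-\rho)\nu(\omega)}=\tfrac12+\tfrac12\norm{\rho\mu-(1-\rho)\nu}_1$ to the stated closed form. The inequality $\min\set{\rho\mu(\omega),(1-\rho)\nu(\omega)}\ge\rho\min\set{\mu(\omega),\nu(\omega)}$ is tight only when $\rho=\tfrac12$ or when $\nu(\omega)=0$ wherever the $\nu$-term attains the minimum, so the stated quantity is in general an \emph{upper bound} on the optimal success probability rather than an exact equality. This is exactly the direction that the downstream applications need: for instance, the concluding inequality in the proof of \Cref{lem:arm-identify} invokes \Cref{fact:distinguish-tvd} to turn a success guarantee of $1-\rho+\eps$ into the sample-forcing bound $\tvd{\mu}{\nu}\ge \eps/\rho$, using only that the achievable success probability is \emph{at most} the stated expression. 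I would therefore present the result as this upper bound, noting that equality holds in the balanced case $\rho=\tfrac12$ and that the exact optimum is $\tfrac12+\tfrac12\norm{\rho\mu-(1-\rho)\nu}_1$.
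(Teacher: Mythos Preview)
The paper does not prove \Cref{fact:distinguish-tvd}; it is listed among the ``standard properties'' in \Cref{sub-app:stat-dist} with a blanket reference to Cover and Thomas, so there is no in-paper proof to compare against.

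Your argument is correct and, in fact, sharper than the paper's statement. The exact Bayes optimum you derive, $\sum_\omega \max\{\rho\mu(\omega),(1-\rho)\nu(\omega)\}=\tfrac12+\tfrac12\norm{\rho\mu-(1-\rho)\nu}_1$, is the right quantity, and you are correct that the expression $\max(\rho,1-\rho)+\min(\rho,1-\rho)\cdot\tvd{\mu}{\nu}$ is in general only an upper bound on it when $\rho\neq\tfrac12$. (A quick sanity check: with $\rho=\tfrac14$, $\mu$ a point mass on $a$, and $\nu$ uniform on $\{a,b\}$, the Bayes optimum is $\tfrac34$ while the stated formula gives $\tfrac78$.) Your observation that only the upper-bound direction is invoked downstream---in the proof of \Cref{lem:arm-identify} to force $\tvd{\Xyes}{\Xno}\ge\eps/\rho$, and in \Cref{lem:arm-learn}---is exactly right, so framing the fact as you propose is both accurate and sufficient for the paper's purposes.
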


We frequently use the calculation of KL divergence between Bernoulli random variables. The following fact is a standard upper bound for KL divergence on Bernoulli random variables:
\begin{fact}
	\label{fct:bernoulli-KL}
	Let two random variables be distributed with $\bern{p}$ and $\bern{q}$, there is
	\begin{align*}
		\kl{\bern{p}}{\bern{q}}\leq \frac{(p-q)^2}{q\cdot (1-q)}.
	\end{align*}
\end{fact}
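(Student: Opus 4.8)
The plan is to prove Fact~\ref{fct:bernoulli-KL} directly from the definition of KL divergence on two-point distributions, using only the elementary inequality $\ln(1+t)\le t$, valid for every $t>-1$. If $q\in\{0,1\}$ and $p\ne q$, both sides are $+\infty$ and there is nothing to prove, so I assume $q\in(0,1)$; the boundary cases $p\in\{0,1\}$ are absorbed by the convention $0\ln 0=0$.

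First I would expand
\[
\kl{\bern{p}}{\bern{q}} = p\ln\frac{p}{q} + (1-p)\ln\frac{1-p}{1-q},
\]
and rewrite the arguments of the two logarithms as $\frac{p}{q}=1+\frac{p-q}{q}$ and $\frac{1-p}{1-q}=1+\frac{q-p}{1-q}$. Next I would apply $\ln(1+t)\le t$ to each logarithm separately. Since the logarithms are weighted by the nonnegative coefficients $p$ and $1-p$, the term-by-term upper bounds combine to give
\[
\kl{\bern{p}}{\bern{q}} \le p\cdot\frac{p-q}{q} + (1-p)\cdot\frac{q-p}{1-q} = (p-q)\left(\frac{p}{q}-\frac{1-p}{1-q}\right).
\]
Placing the bracket over the common denominator $q(1-q)$, its numerator equals $p(1-q)-q(1-p)=p-q$, so the right-hand side is exactly $\frac{(p-q)^2}{q(1-q)}$, which is the claimed bound.

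There is no real obstacle here: the statement is a routine inequality. The only point that deserves a moment's attention is that $\ln(1+t)\le t$ is being used as an \emph{upper} bound on each logarithm, and that this is legitimate term-by-term precisely because the mixing weights $p$ and $1-p$ are nonnegative — applying the same bound with negative weights would reverse the direction. Everything else is algebraic simplification, so the write-up can be kept to a few lines.
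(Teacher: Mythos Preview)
Your proof is correct. The paper does not actually prove this statement: it is recorded as a standard fact in the appendix and then immediately used to derive \Cref{clm:bernoulli-KL}. Your argument via $\ln(1+t)\le t$ is the natural elementary route and fills in exactly what the paper omits; there is nothing to compare against.
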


\Cref{fct:bernoulli-KL} implies the following the upper bound for KL-divergence, which we frequently use in our proof.

\begin{claim}
\label{clm:bernoulli-KL}
Let two random variables be distributed with $\bern{1/2+\alpha}$ and $\bern{1/2+\beta}$ such that $\max\{\alpha,\beta\}\leq \frac{1}{6}$, there are
\begin{align*}
	& \kl{\bern{1/2+\alpha}}{\bern{1/2+\beta}} \leq 8\cdot (\beta-\alpha)^2\\
	& \kl{\bern{1/2+\beta}}{\bern{1/2+\alpha}} \leq 8\cdot (\beta-\alpha)^2.
\end{align*}
\end{claim}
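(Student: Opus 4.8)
The plan is to reduce directly to Fact~\ref{fct:bernoulli-KL}, which already gives $\kl{\bern{p}}{\bern{q}} \le (p-q)^2 / \big(q(1-q)\big)$. For the first inequality I would take $p = \tfrac12+\alpha$ and $q = \tfrac12+\beta$. The numerator is immediately $(p-q)^2 = (\beta-\alpha)^2$. For the denominator, $q(1-q) = \big(\tfrac12+\beta\big)\big(\tfrac12-\beta\big) = \tfrac14 - \beta^2$, and since the hypothesis $\max\{\alpha,\beta\}\le\tfrac16$ forces $|\beta|\le\tfrac16$ in the regime we use (the shifts are nonnegative in every application of this claim), we get $\tfrac14-\beta^2 \ge \tfrac14 - \tfrac1{36} = \tfrac29$. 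Combining,
\begin{align*}
	\kl{\bern{\tfrac12+\alpha}}{\bern{\tfrac12+\beta}} \;\le\; \frac{(\beta-\alpha)^2}{\tfrac14-\beta^2} \;\le\; \frac92\,(\beta-\alpha)^2 \;\le\; 8\,(\beta-\alpha)^2 .
\end{align*}

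For the second inequality I would run the identical computation with the roles of $\alpha$ and $\beta$ swapped, that is, $p=\tfrac12+\beta$ and $q=\tfrac12+\alpha$: the numerator is still $(\beta-\alpha)^2$ and the denominator $\tfrac14-\alpha^2\ge\tfrac29$, so $\kl{\bern{\tfrac12+\beta}}{\bern{\tfrac12+\alpha}}\le\tfrac92(\beta-\alpha)^2\le 8(\beta-\alpha)^2$. There is no real obstacle here — the statement is just a convenient packaging of Fact~\ref{fct:bernoulli-KL}; the single point that needs attention is the lower bound on $q(1-q)$, which is exactly where the bound $\tfrac16$ on the shifts enters, and the gap between $\tfrac92$ and the stated constant $8$ leaves comfortable slack.
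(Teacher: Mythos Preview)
Your proof is correct and essentially identical to the paper's own argument: both apply Fact~\ref{fct:bernoulli-KL}, bound the denominator $\tfrac14-\beta^2$ (resp.\ $\tfrac14-\alpha^2$) below by $\tfrac29$ using the hypothesis $\max\{\alpha,\beta\}\le\tfrac16$, and relax the resulting constant $\tfrac92$ to $8$. The paper writes the intermediate constant as $\tfrac{36}{8}$ rather than $\tfrac92$, but the computation is the same.
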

\begin{proof}
We prove the first inequality, since the second inequality follows the same calculation. The calculation is as follows.
\begin{align*}
	\kl{\bern{1/2+\alpha}}{\bern{1/2+\beta}} & \leq (\beta-\alpha)^2 \cdot \frac{1}{1/4-\beta^2}\\
	& \leq \frac{36}{8}\cdot (\beta-\alpha)^2 \tag{using $\beta\leq \frac{1}{6}$}\\
	&\leq 8\cdot (\beta-\alpha)^2.
\end{align*}
\end{proof}

\subsection{Information Theory Tools}
\label{subsec:info-theory}
We present the definition and basic properties of the information-theoretic tools in our proofs. For a random variable $X$, we let $\HH(X)$ be the \emph{Shannon entropy} of $X$, defined as follows
\begin{definition}[Shannon entropy]
	\label{def:entropy}
	Let $X$ be a discrete random variable with distributions $\mu_{X}$, the \emph{Shannon entropy} of $X$ is defined as
	\begin{align*}
		\HH(X) \triangleq \expect{\log(1/\mu(X))} =\sum_{x \in \text{supp}(X)} \mu(x)\cdot \log(\frac{1}{\mu(x)}),
	\end{align*}
	where $\text{supp}(X)$ is the support of $X$. If $X$ is a Bernoulli random variable, we use $H_2(p)$ to denote its Shannon entropy, where $P$ is the probability for $X=1$.
\end{definition}

We now give the definition of conditional entropy and mutual information.
\begin{definition}
	\label{def:mutual-info}
	Let $X$, $Y$ be two random variables, we define the \emph{conditional entropy} as \[\HH(X|Y)=\mathbb{E}_{y \sim Y}[\HH(X\mid Y=y)].\] 
	With conditional entropy, we can define the \emph{mutual information} between $X$ and $Y$ as \[\II\paren{X;Y}\triangleq \HH(X)-\HH(X\mid Y) = \HH(Y)-\HH(Y|X).\]
\end{definition}

In our proof, we use the following information-theoretic fact (see e.g.~\cite{CoverT06}) that connects mutual information with the KL-divergence.
\begin{fact}
	\label{fct:mutual-info-and-kl}
	Let $X$, $Y$ be  discrete random variables, there is
	\[\II(X;Y) = \Exp_{y\sim Y}\bracket{\kl{X\mid Y=y}{X}}.\]
\end{fact}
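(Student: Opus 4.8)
The plan is to establish the identity by a direct expansion of the right-hand side, reducing it to the defining expression $\II(X;Y)=\HH(X)-\HH(X\mid Y)$ from \Cref{def:mutual-info}. Since $X$ and $Y$ are discrete, every quantity involved is a (countable) sum of elementary terms, so all manipulations amount to regrouping summands and folding conditional probabilities into the joint; no analytic machinery is needed. Write $\mu_X,\mu_Y$ for the marginals, $\mu_{X,Y}$ for the joint law, and $\mu_{X\mid Y=y}$ for the conditional law of $X$ given $Y=y$.

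First I would unfold the outer expectation over $Y$ together with the inner KL divergence using \Cref{def:kl-div}, obtaining
\begin{align*}
	\Exp_{y\sim Y}\bracket{\kl{X\mid Y=y}{X}}
	&= \sum_{y} \mu_Y(y) \sum_{x} \mu_{X\mid Y=y}(x)\,\log\!\paren{\frac{\mu_{X\mid Y=y}(x)}{\mu_X(x)}}.
\end{align*}
The key step is to fold $\mu_Y(y)\cdot\mu_{X\mid Y=y}(x)=\mu_{X,Y}(x,y)$, so that the double sum runs against the joint distribution, and then to split the logarithm as $\log\mu_{X\mid Y=y}(x)-\log\mu_X(x)$.

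The two resulting pieces each collapse to an entropy. For the $-\log\mu_X(x)$ piece, I would marginalize out $y$ first, using $\sum_y \mu_{X,Y}(x,y)=\mu_X(x)$, which yields $-\sum_x \mu_X(x)\log\mu_X(x)=\HH(X)$ by \Cref{def:entropy}. For the $\log\mu_{X\mid Y=y}(x)$ piece, I would regroup by fixing $y$ and summing over $x$, recognizing $\sum_x \mu_{X\mid Y=y}(x)\log\mu_{X\mid Y=y}(x)=-\HH(X\mid Y=y)$, so that averaging over $y$ gives $-\HH(X\mid Y)$ by \Cref{def:mutual-info}. Adding the two pieces produces $\HH(X)-\HH(X\mid Y)=\II(X;Y)$, as desired. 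Equivalently, after folding into the joint one can rewrite the ratio as $\mu_{X,Y}(x,y)/(\mu_X(x)\mu_Y(y))$ and read off the symmetric form $\sum_{x,y}\mu_{X,Y}(x,y)\log\frac{\mu_{X,Y}(x,y)}{\mu_X(x)\mu_Y(y)}$, the standard joint-versus-product KL characterization of mutual information.

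There is no substantive obstacle here; the only care required is bookkeeping at zero-probability outcomes. I would adopt the usual conventions $0\log 0=0$ and $0\log(0/0)=0$, and note that $\mu_{X\mid Y=y}$ is absolutely continuous with respect to $\mu_X$ for every $y$ with $\mu_Y(y)>0$ (any $x$ with $\mu_X(x)=0$ also has $\mu_{X,Y}(x,y)=0$), so no term in the sum is ill-defined. Discreteness guarantees the sums are absolutely summable whenever the mutual information is finite, which justifies the interchange of summation order used both when folding into the joint and when marginalizing out $y$.
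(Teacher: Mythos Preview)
Your proof is correct. The paper does not actually prove this statement; it is stated as a standard information-theoretic fact with a reference to Cover and Thomas~\cite{CoverT06}, so there is no ``paper's own proof'' to compare against. Your direct expansion---folding $\mu_Y(y)\mu_{X\mid Y=y}(x)$ into the joint, splitting the logarithm, and recognizing the two resulting sums as $\HH(X)$ and $-\HH(X\mid Y)$---is exactly the standard textbook derivation and is entirely appropriate here.
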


\section{A Streaming Algorithm with Memory Efficiency in Both Arms and Statistics}\label{sec:ub-stat-efficient}
In this section we describe algorithm that achieves memory efficiency for both number of arms and statistics. In particular, we show a $P$-pass algorithm with a single-arm memory and maintaining \(O(P)\) bits of statistics. The sample complexity becomes \(O\left(P \log \left(\frac{nP}{\delta}\right) \cdot \sum_{i = 2}^{n} \frac{n^{2/P}}{\Delta^2_{[i]}}\right)\) in this algorithm, which is still in the range of $\tilde{O}\paren{\sum_{i = 2}^{n} \frac{n^{2/P}}{\Delta^2_{[i]}}}$ by picking $P=O(\log(n))$. The formal statement is as follows.

\begin{theorem}
	\label{thm:improved}
	For any \(P \geq 1\), \Cref{alg:improved} is an algorithm that given a streaming MABs instance and the value of $\Delta_{[2]}$, finds the best arm with probability at least \(1-\delta\) using most
	\[O\left(P \log \left(\frac{nP}{\delta}\right) \cdot \sum_{i = 2}^{n} \frac{n^{2/P}}{\Delta^2_{[i]}}\right)\] 
	arm pulls, a memory of a single arm, and at most $O(P)$ bits of statistics.
\end{theorem}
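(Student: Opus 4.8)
The plan is to modify \Cref{alg:main} so that it no longer needs to remember the active set $I_p$ (which is what costs $\Omega(n)$ bits of statistics). The key observation, already implicit in \Cref{sec:basic}, is that under the good event $\hat\mu^p_{\max}$ is always within $\epsilon_p/4$ of $\mu_\star$, so ``being in $I_{p+1}$'' is essentially equivalent to passing a \emph{single scalar threshold} located at $\approx \mu_\star-\Theta(\epsilon_p)$. Thus \Cref{alg:improved} keeps only (a)~the current champion arm's identity and (b)~$O(P)$ bits of auxiliary state: the pass counter $p$ and a rounded copy $m$ of the champion's empirical mean, used as a proxy for $\mu_\star$. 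With the same schedule $\epsilon_p = n^{1-p/P}\Delta_{[2]}/4$ and per-level budgets $T_s=\Theta(\log(nP/\delta)/\epsilon_s^2)$, pass $p$ sets $\theta_p := m-\epsilon_{p-1}$ (with $\theta_0:=-\infty$) and processes the arriving arms one at a time: each arm is pulled in the geometric stages $T_0,T_1,\dots$ and is \emph{discarded} (``inactive'') as soon as its empirical mean after $T_s$ pulls falls below $\theta_p-\epsilon_s/4$; an arm surviving to $T_p$ pulls is ``active'', and the champion is replaced by it iff its empirical mean exceeds the stored $m$. Only the arriving arm is ever pulled and we never re-pull the champion, so a single-arm memory suffices. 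At the end of the pass $m$ is updated to the champion's empirical mean, and after pass $P$ the champion is returned.

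For correctness I would follow \Cref{sec:basic}: pre-draw, for each arm $i$ and pass $p$, an independent block of $T_p$ samples, and define the good event $\E$ that for all $i$, $p$ and stages $s\le p$ the prefix-mean $\hat\mu_i^{(p,s)}$ lies within $\epsilon_s/4$ of $\mu_i$; a union bound over the $O(nP^2)$ triples together with \Cref{lem:chernoff} gives $\Pr[\neg\E]\le\delta$ (this is why $\log(nP/\delta)$ appears in $T_s$). Conditioned on $\E$, I would prove by induction on $p$ the invariant $m\in[\mu_\star-\epsilon_p/4,\ \mu_\star+\epsilon_p/4]$ after pass $p$: the invariant gives $\theta_p\le\mu_\star-\tfrac34\epsilon_{p-1}<\mu_\star$, so $\armstar$ is never discarded (its prefix-means satisfy $\hat\mu_\star^{(p,s)}\ge\mu_\star-\epsilon_s/4\ge\theta_p-\epsilon_s/4$), hence it becomes active, forcing the champion's empirical mean to be at least $\hat\mu_\star^{(p,p)}\ge\mu_\star-\epsilon_p/4$ and at most $\mu_c+\epsilon_p/4\le\mu_\star+\epsilon_p/4$. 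Taking $p=P$ and $\epsilon_P=\Delta_{[2]}/4$, the returned champion $c$ has $\mu_c\ge m-\epsilon_P/4\ge\mu_\star-\epsilon_P/2>\mu_\star-\Delta_{[2]}$, and since $\armstar$ is the unique arm with mean above $\mu_\star-\Delta_{[2]}$ we conclude $c=\armstar$; thus the success probability is at least $1-\delta$.

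For the sample complexity I would argue per arm, under $\E$. Discarding at some stage in pass $p$ forces $\mu_i<\theta_p$, hence $\Delta_i>\tfrac34\epsilon_{p-1}$; conversely arm $i$ is \emph{guaranteed} discarded by stage $s$ once $\Delta_i>\tfrac54\epsilon_{p-1}+\tfrac12\epsilon_s$. Consequently the passes in which arm $i$ is active form a prefix $\{0,\dots,q(i)\}$ with $\epsilon_{q(i)}$ within a factor $O(n^{1/P})$ of $\Delta_i$, and because the $T_p$ grow geometrically the total cost over the active passes telescopes to $O(T_{q(i)})=O(\log(nP/\delta)\,n^{2/P}/\Delta_i^2)$ --- exactly the \Cref{alg:main}/\Cref{lem:bound-pull} bound. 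In each of the at most $P$ later (inactive) passes the threshold has risen to within $O(\epsilon_{q(i)})$ of $\mu_\star$, so the margin between $\mu_i$ and $\theta_p$ is $\Theta(\Delta_i)$ and arm $i$ is discarded at a stage $s$ with $\epsilon_s=\Theta(\Delta_i)$, costing only $O(\log(nP/\delta)/\Delta_i^2)$ pulls; for the single ``transition'' pass where $i$ first turns inactive I would use the crude bound $O(\log(nP/\delta)\,n^{2/P}/\Delta_i^2)$. Summing, arm $i$ contributes $O(P\log(nP/\delta)\,n^{2/P}/\Delta_i^2)$; adding $O(\sum_p T_p)=O(\log(nP/\delta)/\Delta_{[2]}^2)$ for re-estimating the champion (absorbed into the $i=2$ term) and the negligible contribution of arms with $\Delta_i\gtrsim n\Delta_{[2]}$ (discarded at stage $0$, exactly as in \Cref{lem:bound-pull}), the total is $O\!\left(P\log(nP/\delta)\sum_{i=2}^n n^{2/P}/\Delta_{[i]}^2\right)$. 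For the space claim, the memory holds only the arriving arm (single arm), while the auxiliary state is the pass counter plus $m$; using $|m_p-m_{p-1}|\le\epsilon_{p-1}/2$ one stores $m$ as an $O(1+\tfrac1P\log n)$-bit refinement of the previous estimate, i.e.\ $O(P)$ bits overall in the regime $P=\Theta(\log n)$.

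The main obstacle is the geometric-stage bookkeeping in the sample-complexity step: one must pin down, for every arm and every pass, the precise stage at which the arm is discarded, and verify this never exceeds a $P$-factor over the ``natural'' per-arm budget $O(\log(nP/\delta)\,n^{2/P}/\Delta_i^2)$. The genuinely delicate case is the first pass in which an arm switches from active to inactive, where the threshold is only barely above $\mu_i$ and only a cruder (extra $n^{2/P}$) bound seems available --- harmless when $P=\Theta(\log n)$ but the spot where constants and exponents must be tracked carefully. A secondary subtlety is making the $O(P)$-bit statistics claim rigorous, which rests on storing $m$ as a bounded per-pass refinement rather than as an absolute real number; and, as in \Cref{sec:basic}, one should fix all randomness by pre-drawing the (per-arm, per-pass) sample blocks so that every algorithmic decision is a deterministic function of samples on which $\E$ holds.
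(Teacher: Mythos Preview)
Your approach differs materially from the paper's. The paper's \Cref{alg:improved} stores \emph{all} $P{+}1$ previous maxima $\hat\mu^0_{\max},\ldots,\hat\mu^P_{\max}$ (this is where the $O(P)$ statistics come from), and in pass $p$ it pulls each arriving arm through the geometric stages $j=0,1,\ldots,p$, eliminating it at stage $j$ exactly when $\hat\mu^{pj}_i<\hat\mu^j_{\max}-\epsilon_j$. Each pass therefore literally re-simulates every earlier pass of \Cref{alg:main} with fresh samples, so the analysis reduces immediately to \Cref{lem:best}--\Cref{lem:bound-pull}: under the good event, arm $i$ is eliminated no later than stage $p(i)$ in \emph{every} pass, hence total cost $\le (P{+}1)\,T_{p(i)}$ per arm, and the extra $P$ factor is paid uniformly. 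You instead try to make do with a \emph{single} stored scalar $m$ and a single per-pass threshold $\theta_p=m-\epsilon_{p-1}$; this is more parsimonious in spirit, but the analysis becomes genuinely more delicate (your ``transition pass'' discussion is a symptom), whereas the paper's route sidesteps all of that at the price of $P$ stored numbers instead of one.

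There is also a real gap in your correctness argument. The invariant $m\in[\mu_\star-\epsilon_p/4,\mu_\star+\epsilon_p/4]$ after pass $p$ does not follow from the algorithm as you describe it: because you update $m$ only when an active arm's fresh mean \emph{exceeds} the stored value, $m$ is monotone nondecreasing and can retain an early-pass over-estimate of size $\epsilon_{p'}/4$ with $p'\ll p$. Concretely, if in pass $0$ the second-best arm gets lucky and posts $m\approx\mu_\star+\epsilon_0/4$, no later estimate will dislodge it, since under $\mathcal{E}$ every pass-$p$ mean is at most $\mu_\star+\epsilon_p/4<m$. Then $\theta_p=m-\epsilon_{p-1}$ can exceed $\mu_\star$ for large $p$, so your claim that $\armstar$ is never discarded breaks; and even if it is not discarded, the final step ``$\mu_c\ge m-\epsilon_P/4$'' is unjustified when $m$ was recorded at precision $\epsilon_{p'}$ rather than $\epsilon_P$. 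The fix is easy---reset the running max each pass (keep a separate ``old $m$'' for $\theta_p$ and a fresh ``new $m$'' for the current-pass champion)---but it is a fix you need to make, and once you are storing two scalars per pass anyway, the paper's choice of simply storing all $P$ thresholds and re-running the \Cref{alg:main} elimination verbatim is both simpler to specify and essentially trivial to analyze.
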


We refer the readers to \Cref{subsec:model} for the formal definition of the memory complexity on the number of arms and the statistics. Compared to the algorithm in \Cref{sec:basic}, the idea to reduce the memory usage it to not store sets \(I_p\) for each \(p\). Instead, we simply store the maximum estimated mean for each pass, and simulate the results of the previous passes by resampling to get the new empirical mean. As a result, we pay a $P$ factor overhead for the arm pulls on each arm, but do not need to explicitly maintain the indices of the arms or query the transcript $\pi$. We present the description of the algorithm in \Cref{alg:improved}.

\FloatBarrier
\begin{algorithm}
	\caption{Stream-Elimination-Re}\label{alg:improved}
	\KwIn{Stream \(I\), parameter $P$, gap parameter \(\Delta_{[2]}\), and confidence parameter \(\delta\)}
	\KwOut{Best arm}
	Set \(n \gets \abs{I}\)\; 
	Maintain the index of the returned arm $\itilde \gets \perp$\;
	Let \(\epsilon_p \triangleq n^{1-i/P}\Delta_{[2]} / 4\) for \(p = 0, \dotsc, P\) \;
	Let \(T_p \triangleq \frac{8}{\epsilon^2_p \log e} \log\left(\frac{2 n (P + 1)^2}{\delta}\right) \) for \(p = 0, \dotsc, P\)\; 
	\For{\(p = 0, \dotsc, P\)}{
		Initialize \(\hat\mu^p_{\max} \gets -\infty\)\;
		Initialize the number of eliminated arms \(c \gets 0\)\;
		\ForEach{\(i \in I\) in the arrival order}{
			\For{\(j = 0, \dotsc, p\)}{
				Pull arm \(i\) until the number of pulls reach \(T_j\) times and compute the estimated mean \(\hat\mu^{pj}_i\)\;
				\label{alg:improved-eliminate}\If{\(\hat\mu^{pj}_i < \hat{\mu}^j_{\max} - \epsilon_j\)}{
					Update \(c \gets c + 1\) \;
					Eliminate arm \(i\) and exit loop \; 
				}
				\If{arm \(i\) is not  eliminated}{
					\If{\(\hat\mu^p_{\max} < \hat\mu^{pj}_i\)}{
						Update \(\hat\mu^p_{\max} \gets \hat\mu^{pj}_i\) and \(\itilde \gets i\)\;
					}
				}
			}
		}
		\If{\(c = n - 1\)}{
			\Return{The arm of index \(\itilde\)}\;
		}
	}
\end{algorithm}

\FloatBarrier

We start with observing the memory efficiency for \Cref{alg:improved} on both stored arms and bits of statistics. Formally, we show that
\begin{lemma}
\label{lem:improved-memory}
The maximum size of the memory of \Cref{alg:improved} is at most a single arm, and the maximum size of statistics \Cref{alg:improved} matains is $O(P)$.
\end{lemma}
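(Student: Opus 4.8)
The plan is to treat the two claims separately, and the first one is essentially a restatement of \Cref{lem:main-alg-memory}. I would observe that \Cref{alg:improved} interacts with arms only inside the loop that iterates over $i\in I$ in arrival order: it loads the currently arriving arm $i$, pulls it at most $T_p$ times in total on pass $p$ (the inner loop over $j$ merely extends the prefix of pulls of the \emph{same} arm $i$, since $T_0\le T_1\le\dots\le T_p$), performs the elimination test of \Cref{alg:improved-eliminate}, possibly updates $\hat\mu^p_{\max}$ and $\itilde$, and then moves to the next arriving arm without ever retaining $i$. No previously seen arm is stored, so $M$ holds exactly one arm while an arm is being processed and none otherwise; hence $|M|\le 1$ at every point of the stream.

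For the statistics bound I would first note, invoking the model of \Cref{subsec:model}, that \Cref{alg:improved} never records an individual arm-pull tuple — it keeps only running aggregates — so the stored transcript $\pitilde$ is empty throughout. It then suffices to enumerate the auxiliary quantities maintained at any instant during pass $p$: (i) the at most $p+1\le P+1$ running maxima $\hat\mu^0_{\max},\dots,\hat\mu^p_{\max}$, where $\hat\mu^0_{\max},\dots,\hat\mu^{p-1}_{\max}$ are frozen from the passes that defined them and are needed only for the elimination test of \Cref{alg:improved-eliminate}; (ii) the pass counter $p$ and the inner index $j$; (iii) the elimination counter $c$ (together with $n$); (iv) the register $\itilde$ holding the current best index; and (v) a running count of successful pulls of the single arm currently in memory, from which $\hat\mu^{pj}_i$ is computed on the fly. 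This is $O(P)$ registers in all. Crucially, the tables $\{\epsilon_p\}$ and $\{T_p\}$ are \emph{not} stored: each $\epsilon_p$, $T_p$ is recomputed in $O(1)$ time from $n,P,\Delta_{[2]},\delta$ whenever needed, so they contribute nothing. (If one wants a literal bit count, every stored empirical mean is an integer multiple of $1/T_p$ and so fits in $O(\log T_P)$ bits, i.e. $O(1)$ words under the standard convention, giving $O(P)$ overall.)

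The only point that needs genuine care — and the reason the statistics bound is $O(P)$ rather than the $\Omega(n)$ one would incur by explicitly keeping the active sets $I_p$ as in \Cref{alg:main} — is to justify that \Cref{alg:improved} indeed does \emph{not} store $I_0,\dots,I_P$. This is exactly the role of the re-sampling design: whether arm $i$ would have survived into pass $p$ is re-derived \emph{on} pass $p$ by re-pulling $i$ afresh up to $T_0,\dots,T_p$ times and re-running the comparison against the frozen maxima $\hat\mu^0_{\max},\dots,\hat\mu^{p-1}_{\max}$, instead of consulting a stored membership bit. Once this observation is in place, combining the two parts yields the lemma; I expect the write-up to be short.
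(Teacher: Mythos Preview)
Your proposal is correct and follows essentially the same argument as the paper: single-arm memory from sequential processing of the stream, and $O(P)$ statistics because only the per-pass maxima $\hat\mu^0_{\max},\dots,\hat\mu^{p}_{\max}$ (rather than the active sets $I_p$) are carried across passes, with membership re-derived by resampling. One small caveat: the paper's proof explicitly keeps $\arm_{\itilde}$ in $M$ so that the algorithm can actually return it at the end, whereas you say ``no previously seen arm is stored''; since in the model of \Cref{subsec:model} the arriving arm is not charged to $M$, the bound $|M|\le 1$ holds either way, but you should account for $\arm_{\itilde}$ rather than only the arm currently being processed.
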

\begin{proof}
	For the memory complexity of the number of arms, note that we can one-the-fly keep the arm with index $\itilde$, and we never need to store more than the $\arm_{\itilde}$. 
	
	For the size of the statistics, note that we do \emph{not} keep the empirical means for individual arms in the memory, and we only maintain the mean estimation of $\hat\mu^p_{\max}$ and at most $P$ different values of $\hat\mu^{p{j}}$. Therefore, the auxiliary number of bits to maintain is at most $O(P)$. Finally, for the size of $\pitilde$ (defined in \Cref{subsec:model}), we can always update the estimation of mean on-the-fly, and we never query a record of arm pull for more than once. Therefore, the size of $\pitilde$ is at most $1$. The desired memory bound is obtained by summarizing the above cases.
\end{proof}

For the correctness and the sample complexity, the proof strategy is similar to the proof for \Cref{alg:main}. 
We start with the definition of the event when all estimated means are close to the real values of means. Let's define an event \(\F\) as follows:

\begin{align}\label{eq:event-F}
	\F \triangleq \left\{\forall{i \in I}, p \in \{0, \dotsc, P\}, j \in \{0, \dotsc, p\} : \abs{\hat\mu^{pj}_i - \mu_i} \le \epsilon_r / 4\right\}
\end{align}
Where, \(\hat\mu^{pj}_i\) is the empirical mean of the samples drawn from distribution \(i\) in \(p\)-th pass after \(T_j\) pulls. 

We can now state the following lemma: 

\begin{lemma}\label{lem:bound-F}
	Let \(\F\) be the event defined in~\Cref{eq:event-F}. Then, \[\Pr\left[ \neg \F\right] \le \delta\,. \]
\end{lemma}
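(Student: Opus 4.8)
The plan is to follow the template of the proof of \Cref{lem:bound-E} almost verbatim, the only change being that \Cref{alg:improved} forms more empirical means than \Cref{alg:main}, so the union bound runs over more events. First I would observe that $\neg\F$ happens only if there is a triple $(i,p,j)$ with $i\in I$, $p\in\{0,\dots,P\}$ and $j\in\{0,\dots,p\}$ for which $\abs{\hat\mu^{pj}_i-\mu_i}>\epsilon_j/4$; hence by the union bound
\[
\Pr\left[\neg\F\right]\le\sum_{i\in I}\sum_{p=0}^{P}\sum_{j=0}^{p}\Pr\left[\abs{\hat\mu^{pj}_i-\mu_i}>\frac{\epsilon_j}{4}\right].
\]

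Next I would bound a single term. As in the sample-complexity analysis of \Cref{alg:main}, one may assume without loss of generality (by pre-drawing $T_P$ i.i.d.\ samples per arm per pass and reading a length-$T_j$ prefix for $\hat\mu^{pj}_i$) that $\hat\mu^{pj}_i$ is the average of $T_j$ independent $\bern{\mu_i}$ draws, which removes any adaptivity issue and lets me apply \Cref{lem:chernoff} directly:
\[
\Pr\left[\abs{\hat\mu^{pj}_i-\mu_i}>\frac{\epsilon_j}{4}\right]\le 2\exp\left(-\frac{\epsilon_j^2 T_j}{8}\right).
\]
Substituting $T_j=\frac{8}{\epsilon_j^2\log e}\log\left(\frac{2 n (P + 1)^2}{\delta}\right)$ collapses the exponent to $-\ln\left(\frac{2n(P+1)^2}{\delta}\right)$, so each term is at most $\frac{\delta}{n(P+1)^2}$.

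Finally I would count the terms: for each of the $n$ arms the number of pairs $(p,j)$ with $0\le j\le p\le P$ is $\sum_{p=0}^{P}(p+1)=\binom{P+2}{2}\le (P+1)^2$, so the total number of terms is at most $n(P+1)^2$, and multiplying by the per-term bound $\frac{\delta}{n(P+1)^2}$ yields $\Pr[\neg\F]\le\delta$, as claimed. There is no genuine difficulty here; the one point worth flagging is that \Cref{alg:improved} recomputes, in pass $p$, a fresh estimate $\hat\mu^{pj}_i$ for \emph{every} earlier threshold $j\le p$ (this is the ``re-sampling'' that lets it avoid storing the sets $I_p$), so the number of empirical means it computes is quadratic in $P$. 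This is precisely why the confidence parameter inside $T_j$ carries a $(P+1)^2$ factor rather than the $(P+1)$ factor used in \Cref{alg:main}, and why the union bound here must range over all triples $(i,p,j)$ rather than merely over the pairs $(i,p)$.
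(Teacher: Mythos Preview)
Your proposal is correct and follows essentially the same approach as the paper: union bound over all triples $(i,p,j)$, Chernoff--Hoeffding on each term using $T_j$, and then counting terms against the $(P+1)^2$ factor built into the confidence parameter. Your version is in fact a bit cleaner than the paper's, since you consistently use $\epsilon_j$ (the paper has stray $\epsilon_r$'s) and you make the count $\sum_{p=0}^{P}(p+1)=\binom{P+2}{2}\le(P+1)^2$ explicit, whereas the paper simply upper-bounds the triple sum by $n(P+1)^2$ terms without comment.
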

\begin{proof}
	By the union bound, we have:
	\begin{align}\label{eq:event-F-sum}
		\Pr\left[\neg \F\right] \le \sum_{i \in I} \sum_{p = 0}^{P} \sum_{j = 0}^{p} \Pr\left[\abs{\hat{\mu}^{pj}_i - \mu_i} > \epsilon_r/4\right]\,.
	\end{align}
	By using the Chernoff-Hoeffding inequality (\Cref{lem:chernoff}) we have: 
	\begin{align}\label{eq:event-F-term}
		\Pr\left[\abs{\hat\mu^{pj}_i-\mu_i} > \frac{\epsilon_r}{4}\right] \leq 2\exp\left(\frac{\epsilon^2_r T_r}{8}\right) \leq \frac{\delta}{n(P + 1)^2}\,.
	\end{align}
	
	Combining \Cref{eq:event-F-sum} and \Cref{eq:event-F-term}, we get: 
	\begin{align*}
		\Pr\left[\neg \F\right] \le n {(P + 1)}^2 \frac{\delta}{n(P + 1)^2} = \delta
	\end{align*}
\end{proof}

\begin{lemma}\label{lem:improved-best}
	 Conditioning on the event \(\F\) defined in \Cref{eq:event-F} holds, then for any \(p \in [P]\) we have that the arm \(\star\) is not eliminated in the \(p\)-th pass. 
\end{lemma}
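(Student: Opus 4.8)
The plan is to mirror the argument for \Cref{lem:best}, now accounting for the fact that \Cref{alg:improved} does not store the survivor sets $I_p$ but re-derives them on the fly by resampling each arm and comparing the fresh empirical means against the recorded maxima $\hat\mu^j_{\max}$. As in \Cref{sec:basic}, I would condition on the event $\F$ of \Cref{eq:event-F}, which holds with probability at least $1-\delta$ by \Cref{lem:bound-F}, and then argue deterministically on $\F$; on this event every $T_j$-pull estimate $\hat\mu^{pj}_i$ is within $\epsilon_j/4$ of $\mu_i$.

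First I would prove, by induction on the pass index $p = 0,1,\dots,P$, the joint invariant: (i) the best arm $\star$ is not eliminated during pass $p$; and (ii) the value $\hat\mu^p_{\max}$ recorded in pass $p$ (the largest full $T_p$-pull empirical mean among the arms surviving pass $p$) lies in $[\mu_\star-\epsilon_p/4,\ \mu_\star+\epsilon_p/4]$. The upper bound in (ii) is immediate on $\F$, since every candidate estimate is within $\epsilon_p/4$ of its arm's true mean $\mu_i \le \mu_\star$. The lower bound in (ii) follows from (i): since $\star$ survives pass $p$, its own estimate $\hat\mu^{pp}_\star \ge \mu_\star-\epsilon_p/4$ is among the candidates defining $\hat\mu^p_{\max}$. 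The statement of the lemma is exactly part (i).

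The heart of the argument is the inductive step for (i). Assume the invariant holds for all earlier passes. In pass $p$ the algorithm can eliminate $\star$ only at some sub-step $j\le p$, which requires $\hat\mu^{pj}_\star < \hat\mu^j_{\max}-\epsilon_j$. For $j<p$ the threshold $\hat\mu^j_{\max}$ is the value frozen at the end of pass $j$, so $\hat\mu^j_{\max}\le \mu_\star+\epsilon_j/4$ by the inductive hypothesis, while $\hat\mu^{pj}_\star\ge \mu_\star-\epsilon_j/4$ on $\F$; hence $\hat\mu^{pj}_\star-(\hat\mu^j_{\max}-\epsilon_j)\ge \epsilon_j/2>0$ and no elimination occurs at sub-step $j$. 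For $j=p$ the value $\hat\mu^p_{\max}$ is still being assembled, but at the instant $\star$ reaches sub-step $p$ it equals the maximum of $T_p$-pull estimates over the already-processed surviving arms, each again at most $\mu_\star+\epsilon_p/4$, so the same inequality applies. Thus $\star$ survives pass $p$, which together with the two bounds above re-establishes the invariant. The base case $p=0$ is the same computation, using that $\hat\mu^0_{\max}=-\infty$ until the first survivor is recorded, so the first elimination test involving $\star$ is vacuous.

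I expect the main obstacle to be conceptual rather than computational. Because survivor information is reconstructed from fresh samples, there is no fixed set $I_p$ to reason about, and the elimination tests in pass $p$ refer to recorded maxima from all earlier passes, so the induction cannot merely track ``$\star$ survives'' — it must carry the two-sided quantitative control (ii) of $\hat\mu^j_{\max}$. A second, finer point is that the threshold used at the last sub-step $p$ depends on the adversarial arrival order, so its bound has to be proved for the partially-built maximum and not just for its final value. Once these invariants are set up, each elimination test collapses to the same $\epsilon_j/4$-concentration estimate as in \Cref{lem:best}.
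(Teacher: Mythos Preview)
Your proposal is correct, but it is more elaborate than necessary, and the paper takes a shorter route. You set up an induction on $p$ carrying the joint invariant (i) ``$\star$ survives pass $p$'' and (ii) the two-sided bound $\hat\mu^p_{\max}\in[\mu_\star-\epsilon_p/4,\mu_\star+\epsilon_p/4]$, and you justify this by saying the elimination tests in pass $p$ refer to the frozen thresholds $\hat\mu^j_{\max}$ from earlier passes, so one ``must carry the two-sided quantitative control.'' But notice that in your own inductive step for (i) you only ever use the \emph{upper} bound $\hat\mu^j_{\max}\le \mu_\star+\epsilon_j/4$, and you yourself observe that this upper bound is immediate on $\F$: whatever arm $i$ realizes $\hat\mu^j_{\max}$, on $\F$ its $T_j$-pull estimate is within $\epsilon_j/4$ of $\mu_i\le\mu_\star$. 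That statement does not depend on whether $\star$ survived pass $j$, so no inductive hypothesis is actually consumed.

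The paper exploits exactly this. It argues by direct contradiction: if $\star$ were eliminated at sub-step $j$ of some pass $p$, then $\hat\mu^{pj}_\star<\hat\mu^j_{\max}-\epsilon_j$; writing $\hat\mu^j_{\max}=\hat\mu^{jj}_i$ for the maximizing arm $i$ and applying $\F$ to both sides gives $\hat\mu^{pj}_\star-\hat\mu^{jj}_i\ge \mu_\star-\mu_i-\epsilon_j/2\ge -\epsilon_j/2$, a contradiction. No induction, no lower bound on $\hat\mu^j_{\max}$, and the $j=p$ case (partially built maximum) needs no separate treatment since the same bound applies to any candidate realizing the running max. What your structure buys is an explicit lower bound $\hat\mu^p_{\max}\ge\mu_\star-\epsilon_p/4$, which is true and potentially useful elsewhere (e.g.\ in \Cref{lem:improved-suboptimal-bound}), but it is not needed for the present lemma.
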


\begin{proof}
	We need to prove that the if condition from \Cref{alg:improved-eliminate} in \Cref{alg:improved} does not hold for \(i = \star\). To prove this, we start by assuming the opposite. Assume that in \(p\)-th iteration for some \(j \in \{0, \dotsc, p\}\) we have 
	\begin{align*}
		\hat\mu^{pj}_{\star} < \hat\mu^j_{\max} - \epsilon_j
	\end{align*}
	and consequently
	\begin{align}\label{eq:improved-upper}
		\hat\mu^{pj}_{\star} < \hat\mu^{jj}_i - \epsilon_j
	\end{align}
	for some \(i \in I\). 
	
	However, from the definition of the event \(\F\) (\Cref{eq:event-F}) we have \(\hat\mu^{pj}_\star \geq \mu_\star - \epsilon_j / 4\) and \(\hat\mu^{jj}_i \leq \mu_i + \epsilon_r / 4\). Consequently, we get: 
	\begin{align}\label{eq:improved-lower}
		\hat\mu^{pj} - \hat\mu^{jj}_i \ge \mu_\star - \mu_i - \epsilon_j / 2 \ge -\epsilon_j / 2 \,.
	\end{align}
	
	\Cref{eq:improved-upper} and \Cref{eq:improved-lower} contradict to each other, leading to a contradiction. Therefore, it is not possible for the arm \(\star\) be eliminated at any moment of work of \Cref{alg:improved}. 
\end{proof}

We next prove that any suboptimal arm with a large gap will be eliminated in each pass after before specific number of pulls. 

\begin{lemma}\label{lem:improved-suboptimal-bound}    
	Conditioning on the event \(\F\) defined in \Cref{eq:event-F} holds, and assume for arm \(i\) and value \(j\), arm \(i\) satisfies \(\Delta_i > \frac{3}{2}\epsilon_j\), then for any pass \(p \in \{j, \dotsc, P\}\), arm \(i\) will be eliminated after at most \(T_j\) pulls.
\end{lemma}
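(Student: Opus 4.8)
The plan is to adapt the elimination argument of \Cref{lem:eliminate-large-gap} to the re-sampling structure of \Cref{alg:improved}. Fix an arm $i$ and an index $j$ with $\Delta_i > \tfrac32\epsilon_j$, and fix a pass $p \in \{j,\dots,P\}$ in which arm $i$ enters without having been eliminated in an earlier pass (otherwise there is nothing to prove). The key structural observation is that, in pass $p$, the inner loop for arm $i$ runs over $j' = 0,1,\dots,p$, so in particular it reaches $j' = j$ (as $j\le p$); and if arm $i$ has not already been removed at one of the steps $j' = 0,\dots,j-1$, then by the time the loop reaches $j' = j$ it has drawn exactly $T_j$ samples from arm $i$ in this pass and formed the estimate $\hat\mu^{pj}_i$. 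Since $T_{j'} \le T_j$ for $j' \le j$, eliminating arm $i$ at step $j' = j$ (or earlier) means it was removed after at most $T_j$ pulls in pass $p$, which is exactly the claimed bound; so it suffices to show the elimination test at step $j' = j$ fires.

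For this I would establish two inequalities and combine them. First, conditioning on the good event $\F$ from \eqref{eq:event-F}, $\hat\mu^{pj}_i \le \mu_i + \epsilon_j/4$. Second, I claim the threshold value $\hat\mu^j_{\max}$ used at step $j' = j$ satisfies $\hat\mu^j_{\max}\ge \mu_\star - \epsilon_j/4$: by \Cref{lem:improved-best}, $\star$ is never eliminated, so in pass $j$ the processing of $\star$ reaches its own step $j' = j$, produces $\hat\mu^{jj}_\star\ge \mu_\star - \epsilon_j/4$ (again by $\F$), and — since $\star$ is not eliminated — updates $\hat\mu^j_{\max}$ to at least $\hat\mu^{jj}_\star$; for $p > j$ the quantity $\hat\mu^j_{\max}$ consulted in the test is this finalized pass-$j$ value, so the bound holds. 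Combining,
\[
\hat\mu^{pj}_i \;\le\; \mu_i + \tfrac{\epsilon_j}{4} \;=\; \mu_\star - \Delta_i + \tfrac{\epsilon_j}{4} \;<\; \mu_\star - \tfrac54\epsilon_j \;\le\; \hat\mu^j_{\max} - \epsilon_j,
\]
where the last step uses $\mu_\star \le \hat\mu^j_{\max} + \epsilon_j/4$. Hence $\hat\mu^{pj}_i < \hat\mu^j_{\max} - \epsilon_j$, the test in \Cref{alg:improved} (line \ref{alg:improved-eliminate}) triggers, and arm $i$ is eliminated after at most $T_j$ pulls.

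The one delicate point — the step I expect to be the main obstacle — is the boundary case $p = j$, where the test at step $j' = j$ consults the \emph{running} maximum $\hat\mu^p_{\max}$ of the current pass rather than a finalized quantity, so the bound $\hat\mu^j_{\max}\ge \mu_\star - \epsilon_j/4$ is only available once $\star$ has already been processed in pass $j$. I would resolve this by noting that, regardless of the arrival order, in pass $j$ the inner loop for arm $i$ terminates at $j' = j$, so arm $i$ receives at most $T_j$ pulls in pass $j$ in any case; and in every later pass $p > j$ the stored value $\hat\mu^j_{\max}$ is already finalized, so the argument above applies verbatim and arm $i$ is removed at step $j' = j$ after at most $T_j$ pulls. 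This is precisely the form in which the lemma is invoked in the subsequent sample-complexity bound, where one sums such $T_j$-type contributions over the $O(P)$ passes an arm can remain active.
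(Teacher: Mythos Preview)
Your proof is correct and follows essentially the same approach as the paper: bound $\hat\mu^{j}_{\max}$ from below using \Cref{lem:improved-best} applied to $\star$, bound $\hat\mu^{pj}_i$ from above via $\F$, and combine with $\Delta_i > \tfrac32\epsilon_j$ to fire the elimination test at step $j'=j$. In fact you are more careful than the paper on the boundary case $p=j$, where the running maximum $\hat\mu^{j}_{\max}$ need not yet reflect $\star$; your observation that the inner loop in pass $j$ stops at $j'=j$ anyway, so arm $i$ receives at most $T_j$ pulls regardless, cleanly disposes of this and is exactly what the downstream sample-complexity bound requires.
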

\begin{proof}
	Consider any suboptimal arm \(i\) and a value \(j\) such that \(\Delta_i > \frac{3}{2} \epsilon_j\). We aim to prove that the arm \(i\) should be eliminated after we make at most \(T_j\) pulls for any pass \(p \geq j\). Consider \(p\)-th pass, if the arm \(i\) is eliminated before we make \(T_j\) pulls, then we are done. 
	Consider the opposite case when we make at least \(T_j\) pulls. By using \Cref{lem:improved-best} and the definition of the event \(\F\), we have the following inequality: 
	\begin{align}\label{eq:improved-best-lower}
		\hat\mu^{p}_{\max} \ge \hat\mu^{pj}_{\star} \ge \mu_{\star} - \epsilon_j / 4\,.
	\end{align}
	This inequality indicates that the maximum estimated mean \(\hat\mu^{j}_{\max}\) in \(p\)-th pass is at least as large as the estimated mean \(\hat\mu^{pj}_{\star}\) of the best arm \(\star\), which is at least \(\mu_\star - \epsilon_j / 4\) by the event \(\F\). 
	
	Furthermore, by the event \(\F\), we have:
	\begin{align}\label{eq:improved-sub-upper}
		\hat\mu^{pj}_{i} \le \mu_i + \epsilon_j / 4\,.
	\end{align}
	
	Combining \Cref{eq:improved-best-lower} and \Cref{eq:improved-sub-upper}, we obtain: 
	\begin{align*}
		\hat\mu^{pj}_i - \hat\mu^j_{\max} + \epsilon_j \leq \mu_j - \mu_\star + \epsilon_j / 2 + \epsilon_j = \frac{3}{2}\epsilon_j - \Delta_i < 0\,.
	\end{align*}
	
	The above inequality shows that \(\hat\mu^{pj}_i < \hat\mu^{j}_{\max} - \epsilon_j\) holds due the large value of the gap \(\Delta_i > \frac{3}{2} \epsilon_j\). 
\end{proof}

As \(\epsilon_P \le \frac{\Delta_{[2]}}{4}\) and for any suboptimal arm \(i\) we have \(\Delta_i \ge \Delta_{[2]} \ge 4\epsilon_P > \frac{3}{2}\epsilon_P\), it follows that any suboptimal arm \(i\) will be eliminated in \(P\)-th pass (\Cref{lem:improved-suboptimal-bound}) and optimal arm \(\star\) will not be eliminated by \Cref{lem:improved-best}. Thus, \Cref{alg:improved} outputs the correct arm if the event \(\F\) holds.

\begin{lemma}\label{lem:improved-sample-complexity}
	Conditioning on the event \(\F\) defined in \Cref{eq:event-F} holds, the number of arm pulls used by \Cref{alg:improved} is at most 
	\begin{align*}
		O\left(P \log\left( \frac{1}{\delta}\right) \cdot \sum_{i = 2}^n \frac{n^{2/P}}{\Delta^2_{[i]}}\right).
	\end{align*}
\end{lemma}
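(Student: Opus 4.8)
The plan is to mirror the sample-complexity analysis of \Cref{alg:main} (\Cref{lem:bound-pull}), paying the extra $O(P)$ overhead that comes from the re-sampling of all earlier passes. Fix an arm $\arm_i$. In any single pass $p$, \Cref{alg:improved} pulls $\arm_i$ according to the nested loop over $j = 0, \dots, p$: for each $j$ it draws $T_j$ samples, recomputes $\hat\mu^{pj}_i$, and eliminates $\arm_i$ as soon as $\hat\mu^{pj}_i < \hat\mu^{j}_{\max} - \epsilon_j$. By \Cref{lem:improved-suboptimal-bound}, if $\Delta_i > \tfrac32 \epsilon_j$ then in every pass $p \ge j$ arm $i$ is eliminated after at most $T_j$ pulls in that pass; in particular, once we reach the smallest index $j$ with $\Delta_i > \tfrac32\epsilon_j$ — which is exactly $p(i) \triangleq \min\{p \ge 0 : \Delta_i > \tfrac32\epsilon_p\}$, defined as in \Cref{lem:bound-pull} — arm $i$ never survives a pass with $p \ge p(i)$. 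Hence $\arm_i$ participates in at most $p(i)+1 \le P+1$ passes, and in each such pass $p$ it is pulled at most $\sum_{j=0}^{\min\{p,\,p(i)\}} T_j$ times before being eliminated.

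Next I would bound this inner sum. Since $\epsilon_j = n^{1-j/P}\Delta_{[2]}/4$ is geometrically decreasing in $j$, the values $T_j = \tfrac{8}{\epsilon_j^2 \log e}\log\big(\tfrac{2n(P+1)^2}{\delta}\big)$ are geometrically \emph{increasing} in $j$ with ratio $n^{2/P} \ge 1$, so $\sum_{j=0}^{q} T_j = O(T_q)$ for any $q$. Therefore in each pass in which $\arm_i$ takes part, the number of pulls on $\arm_i$ is $O(T_{\min\{p, p(i)\}}) = O(T_{p(i)})$ (using $T_j$ monotone increasing), and summing over the at most $p(i)+1 \le P+1$ passes gives a total of $O(P \cdot T_{p(i)})$ pulls on $\arm_i$. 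For a suboptimal arm, the inequality $\tfrac32 n^{1/P}\epsilon_{p(i)} \ge \Delta_i > \tfrac32 \epsilon_{p(i)}$ (exactly as in \eqref{eq:eps-lower}) yields $\epsilon_{p(i)} \ge \tfrac{2\Delta_i}{3 n^{1/P}}$, hence $T_{p(i)} = O\big(\log(\tfrac{nP}{\delta}) \cdot \tfrac{n^{2/P}}{\Delta_i^2}\big)$, so $\arm_i$ contributes $O\big(P\log(\tfrac{nP}{\delta}) \cdot \tfrac{n^{2/P}}{\Delta_i^2}\big)$ pulls.

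Finally I would handle the optimal arm and the large-gap arms, reusing the split $S = \{i : \Delta_i \le \tfrac32 n\Delta_{[2]}\}$ and $B = \{i : \Delta_i > \tfrac32 n\Delta_{[2]}\}$ from \Cref{lem:bound-pull}. For $\armstar$: by \Cref{lem:improved-best} it is never eliminated, so it is pulled $\sum_{j=0}^{p} T_j = O(T_p)$ times in pass $p$ and $O(T_P) = O\big(\log(\tfrac{nP}{\delta})/\Delta_{[2]}^2\big)$ times over all $P+1$ passes — absorbed into the sum over $i$. For $i \in B$: such arms have $\Delta_i > \tfrac32 n\Delta_{[2]} \ge \tfrac32 \epsilon_0$, so $p(i)=0$ and by \Cref{lem:improved-suboptimal-bound} each is eliminated after $T_0$ pulls in the (only) pass $p=0$ it reaches; thus $B$ contributes at most $n T_0 = O\big(\log(\tfrac{nP}{\delta})/(n\Delta_{[2]}^2)\big)$ total, again dominated by the $i=2$ term of the final bound. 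Summing over $i \in S$ and adding these two controlled terms gives $O\big(P\log(\tfrac{nP}{\delta}) \sum_{i=2}^n \tfrac{n^{2/P}}{\Delta_{[i]}^2}\big)$, as claimed. The one step requiring care is the bookkeeping that an arm participates in only $O(p(i))$ passes and that the per-pass pull count is $O(T_{p(i)})$ rather than growing with the pass index $p$; this is where \Cref{lem:improved-suboptimal-bound} (elimination within $T_{p(i)}$ pulls in \emph{every} pass $p \ge p(i)$) is essential, and it is the main obstacle to a clean argument.
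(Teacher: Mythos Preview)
Your overall strategy---split into $S$ and $B$, define $p(i)$, bound the per-arm cost by $O(P\cdot T_{p(i)})$, then sum---is exactly the paper's. The final bound you reach is correct. However, your reasoning contains a misreading of \Cref{alg:improved}: you claim that ``$\arm_i$ participates in at most $p(i)+1$ passes'' and, for $i\in B$, that each such arm ``reaches only pass $p=0$.'' This is not how the algorithm works. \Cref{alg:improved} deliberately does \emph{not} store the eliminated set across passes; every arm in the stream is processed anew in \emph{every} pass $p=0,\dots,P$. The line ``eliminate arm $i$ and exit loop'' only breaks out of the inner $j$-loop for the current pass. Thus the correct accounting is: arm $i$ appears in all $P+1$ passes, and in each pass $p$ the (cumulative) pulls on it are at most $T_{\min\{p,\,p(i)\}}\le T_{p(i)}$ by \Cref{lem:improved-suboptimal-bound}; summing over passes gives $(P+1)\,T_{p(i)}$, which is precisely the paper's bound. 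Your per-pass cost $\sum_{j\le \min\{p,p(i)\}}T_j$ is also an overcount---the algorithm pulls cumulatively to $T_j$, not $T_j$ fresh samples per inner iteration---but that only helps you.

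Because you upper-bound $p(i)+1\le P+1$ anyway and carry the factor $P$ through, the slip is harmless for the conclusion. But the sentence about $B$ contributing only $nT_0$ is off by a factor of $P+1$; the true contribution is at most $n(P+1)T_0=O\!\left(\tfrac{P\log(nP/\delta)}{n\Delta_{[2]}^2}\right)$, which is still dominated by the $1/\Delta_{[2]}^2$ term as you claim. Fix the pass-count narrative and the argument matches the paper's proof essentially line for line.
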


\begin{proof}
	We split the set of arms \(I\) into two parts, \(B\) be the set arms with big gaps 
	\begin{align*}
		B \triangleq \left\{i \in I \mid \Delta_i > \frac{3n\Delta_{[2]}}{2}\right\}
	\end{align*}
	and small gaps
	\begin{align*}
		S \triangleq \left\{ i \in I \mid \Delta_i \le \frac{3n\Delta_{[2]}}{2}\right\}\,.
	\end{align*}
	
	Similar to the proof of \Cref{lem:bound-pull}, we define $T_{B}$ and $T_{S}$ to be the number of arm pulls used by the arms in $B$ and $S$, and $T\triangleq T_B + T_S$ is the total number of arm pulls. For any sub arm \(i\), we define the value \(p(i) \triangleq \min\{p \ge 0 \mid \Delta_i > \frac{3}{2}\epsilon_p\}\). For the optimal arm \(\star\), we have \(p(\star) = P\) by \Cref{lem:improved-suboptimal-bound}. We note that it is correctly defined because 
	\[\frac{3}{2}\epsilon_P = \frac{3}{2}\frac{\Delta_{[2]}}{4} < \Delta_{[2]}\,,\]
	and so on \(\forall{i \in I} : p(i) \le P\). 
	
	For arms from \(S\) due the definitions of \(\epsilon_r\) and \(p(i)\), we have that 
	\begin{align*}
		\frac{3}{2} n^{1/P} \epsilon_{p(i)} \ge \Delta_i > \frac{3}{2} \epsilon_{p(i)},
	\end{align*}
	and consequently, 
	\begin{align}\label{eq:improved-eps-lower}
		\epsilon_{p(i)} \ge \frac{2\Delta_i}{3 n^{1/P}}\,.
	\end{align}
	
	By \Cref{lem:improved-suboptimal-bound}, we have that the number of pulls for any suboptimal arm \(i \in S\) is bounded by the number of passes \(P + 1\) times \(T_{p(i)}\).  Consequently, by \Cref{eq:improved-eps-lower}, the number of pulls for arm \(i\) from \(S \setminus \{\star\}\)  the number of pulls is bounded
	\begin{align}\label{eq:improved-bound-small-gap}
		P T_{p(i)} \le P \frac{8}{\epsilon^2_{p(i)} \log{e}} \log\left(\frac{2 n (P + 1)^2}{\delta}\right) / \log(e) \le \frac{18 P }{\Delta^2_i \log{e} } \log\left(\frac{2 n (P + 1)^2}{\delta}\right)\,.
	\end{align}
	
	For the optimal arm the number of pulls is trivially bounded by \begin{align}\label{eq:improved-bound-optimal}
		P T_{P} \le P \frac{128}{\Delta_{[2]}^2 \log e} \log\left(\frac{2n{(P + 1)}^2}{\delta}\right)\,.
	\end{align}
	
	For arms from the set \(B\) by \Cref{lem:improved-suboptimal-bound} we have that the number of pulls assigned to an arm \(i \in B\) is bounded by 
	\begin{align*}
		P T_0 \le \frac{128}{n^2\Delta_{[2]}^2 \log e} \log \left(\frac{2n{(P + 1)}^2}{\delta}\right)\,.
	\end{align*}
	We note that the size of \(B\) is bounded by \(n\). Therefore, the total sample complexity for arms from \(B\) is bounded by 
	\begin{align}\label{eq:improved-bound-large-gap}
		n \cdot \frac{128}{n^2\Delta_{[2]}^2 \log e} \log \left(\frac{2n{(P + 1)}^2}{\delta}\right) \le \frac{128}{\Delta_{[2]}^2 \log e} \log \left(\frac{2n{(P + 1)}^2}{\delta}\right)\,.
	\end{align}
	
	As such, combining~\Cref{eq:improved-bound-small-gap}, \Cref{eq:improved-bound-optimal}, and \Cref{eq:improved-bound-large-gap}, we have
	\begin{align*}
		T &= T_{S} + T_{B}\\
		&\leq P \frac{128}{\Delta_{[2]}^2 \log e} \log\left(\frac{2n{(P + 1)}^2}{\delta}\right) + \sum_{i: i\neq \star} P \frac{18}{\Delta_{i}^2 \log e} \log\left(\frac{2n{(P + 1)}^2}{\delta}\right) + T_{B}\tag{by \Cref{eq:improved-bound-small-gap}, \Cref{eq:improved-bound-optimal}}\\
		&\leq O\left(P \log\left( \frac{nP}{\delta}\right) \cdot \sum_{i = 2}^n \frac{n^{2/P}}{\Delta^2_{[i]}}\right) + \frac{128}{\Delta_{[2]}^2 \log e} \log \left(\frac{2n{(P + 1)}^2}{\delta}\right)  \tag{by \Cref{eq:improved-bound-large-gap}}\\
		&= O\left(P \log\left( \frac{nP}{\delta}\right) \cdot \sum_{i = 2}^n \frac{n^{2/P}}{\Delta^2_{[i]}}\right),
	\end{align*} 
	as desired.
\end{proof}

\paragraph{Finalizing the proof of \Cref{thm:improved}.} The algorithm makes $P+1$ passes over the stream by the algorithm design. The memory efficiency is guaranteed by \Cref{lem:improved-memory}, and the correctness and the sample complexity are gueranteed by \Cref{lem:bound-F,lem:improved-best,lem:improved-suboptimal-bound,lem:improved-sample-complexity}. Combining the above completes the proof of \Cref{thm:improved}.


\begin{remark}
	\label{rmk:sample-lb-higher-polylog}
	By setting $P=O(\log(n))$, we get a sample complexity bound of $O(\log^2 (n)\cdot \sum_{i=2}^{n}\frac{1}{\Delta^{2}_{[i]}})$. We remark that our lower bound in \Cref{thm:lb-main} can also be extended to the sample complexity of $O(\log^2 (n)\cdot \sum_{i=2}^{n}\frac{1}{\Delta^{2}_{[i]}})$. In fact, by using $B=\Theta(\log(n)/\log\log(n))$, we can already extend the lower bound to $O((\frac{\log(n)}{\log\log(n)})^2\cdot \sum_{i=2}^{n}\frac{1}{\Delta^{2}_{[i]}})$ sample complexity. We can actually strengthen the bound on \Cref{prop:multi-pass-lb} to allow larger exponent on $B$ by using higher gaps between $\chi_{b}$ and $\chi_{b+1}$ (e.g., $\chi_{b+1}= ({1}/{6 C \log(n)})^{100} \cdot \chi_{b}$). To make the lower bound work, we will need to use smaller constant for $P\leq B$, e.g., at most ${1}/{10000}\cdot \log(n)/\log\log(n)$ passes, which is still in the range of $\Omega(\log(n)/\log\log(n))$.
\end{remark}

\end{document}